\newcommand{\cmark}{\ding{51}}%
\newcommand{\xmark}{\ding{55}}%
\definecolor{brandeisblue}{rgb}{0.0, 0.44, 1.0}
\newcommand{\hlr}[1]{\textcolor{black}{#1}}           
\newcommand{\soc}{SOC\xspace}
\newcommand{\lot}{LOT\xspace}
\newcommand{\name}{BRO\xspace}
\newcommand{\archname}{BRONet\xspace}
\newcommand{\fft}{\text{FFT}\xspace}
\newcommand{\realpara}{I - 2 V (V^TV)^{-1}V^T}
\newcommand{\complexpara}{I - 2 V (V^*V)^{-1}V^*}
\newcommand{\Comment}[1]{{\hfill $\triangleright$ \textrm{#1}}}
\DeclareMathOperator*{\argmax}{arg\,max} 
\newtheorem{theorem}{Theorem}
\newtheorem{lemma}{Lemma}
\newtheorem{proposition}{Proposition}
\def\eqref#1{equation~\ref{#1}}
\def\floor#1{\lfloor #1 \rfloor}
\def\1{\bm{1}}
\def\vp{{\bm{p}}}
\def\vy{{\bm{y}}}
\def\vz{{\bm{z}}}
\DeclareMathAlphabet{\mathsfit}{\encodingdefault}{\sfdefault}{m}{sl}
\SetMathAlphabet{\mathsfit}{bold}{\encodingdefault}{\sfdefault}{bx}{n}
\def\gM{{\mathcal{M}}}
\newcommand{\R}{\mathbb{R}}
\icmltitlerunning{Enhancing Certified Robustness via Block Reflector Orthogonal Layers and Logit Annealing Loss}
\begin{document}
\doparttoc 
\faketableofcontents 

\twocolumn[
	\icmltitle{Enhancing Certified Robustness via \\ Block Reflector Orthogonal Layers and Logit Annealing Loss}



	\icmlsetsymbol{equal}{*}
	\begin{icmlauthorlist}
		\icmlauthor{Bo-Han~Lai}{equal}
		\icmlauthor{Pin-Han~Huang}{equal}
		\icmlauthor{Bo-Han~Kung}{equal}
		\icmlauthor{Shang-Tse~Chen}{yyy}
	\end{icmlauthorlist}
	\icmlaffiliation{yyy}{Department of Computer Science and Information Engineering, National Taiwan University, Taiwan}
	\icmlcorrespondingauthor{Shang-Tse~Chen}{stchen@csie.ntu.edu.tw}

	\icmlkeywords{Certified Robustness, Lipschitz Neural Networks}

	\vskip 0.3in
]



\printAffiliationsAndNotice{\icmlEqualContribution} 

\begin{abstract}
    Lipschitz neural networks are well-known for providing certified robustness in deep learning.
    In this paper, we present a novel, efficient Block Reflector Orthogonal (BRO) layer that 
    enhances the capability of orthogonal layers on constructing more expressive Lipschitz neural architectures.
    In addition, by theoretically analyzing the nature of Lipschitz neural networks, 
    we introduce a new loss function that employs an annealing mechanism to increase margin for most data points.
    This enables Lipschitz models to provide better certified robustness.
    By employing our BRO layer and loss function, we design BRONet --- 
    a simple yet effective Lipschitz neural network
    that achieves state-of-the-art certified robustness.
    Extensive experiments and empirical analysis on CIFAR-10/100, Tiny-ImageNet, and ImageNet validate that our method {outperforms existing baselines}.
        The implementation is available at \href{https://github.com/ntuaislab/BRONet}{GitHub Link}.
\end{abstract}

\vspace{-0.50cm}

\section{Introduction}

Although deep learning has been widely adopted in various fields~\citep{wang2022yolov7,brown2020language}, it is shown to be vulnerable to adversarial attacks~\citep{szegedy2013intriguing}.
This kind of attack crafts an imperceptible perturbation on images~\citep{goodfellow2014explaining} or voices~\citep{carlini2018audio} to make AI systems make incorrect predictions.
In light of this, many adversarial defense methods have been proposed to improve the robustness, which can be categorized into empirical defenses and certified defenses.
Common empirical defenses include adversarial training~\citep{madry2017towards,shafahi2019adversarial,wang2023better} and preprocessing-based methods~\citep{defensegan,das2018shield,lee2023robust}.
Although often effective in practice, these approaches cannot provide robustness guarantees and may fail against more sophisticated attacks.
Certified defenses, unlike empirical ones, provide provable robustness by ensuring no adversarial examples exist within an $\ell_p$-norm ball of radius $\varepsilon$ centered on the prediction point.

Certified defenses against adversarial attacks are broadly categorized into \emph{probabilistic} and \emph{deterministic}~\citep{li2023sok} methods.
Randomized smoothing~\citep{cohen2019certified,lecuyer2019certified,yang2020randomized} is a prominent probabilistic approach, known for its scalability in providing certified robustness.
However, its reliance on extensive sampling substantially increases computational overhead during inference, limiting its practical deployment.
Furthermore, the certification provided is probabilistic in nature.

\begin{figure}[t]
	\centering
	\includegraphics[width=0.48\textwidth]{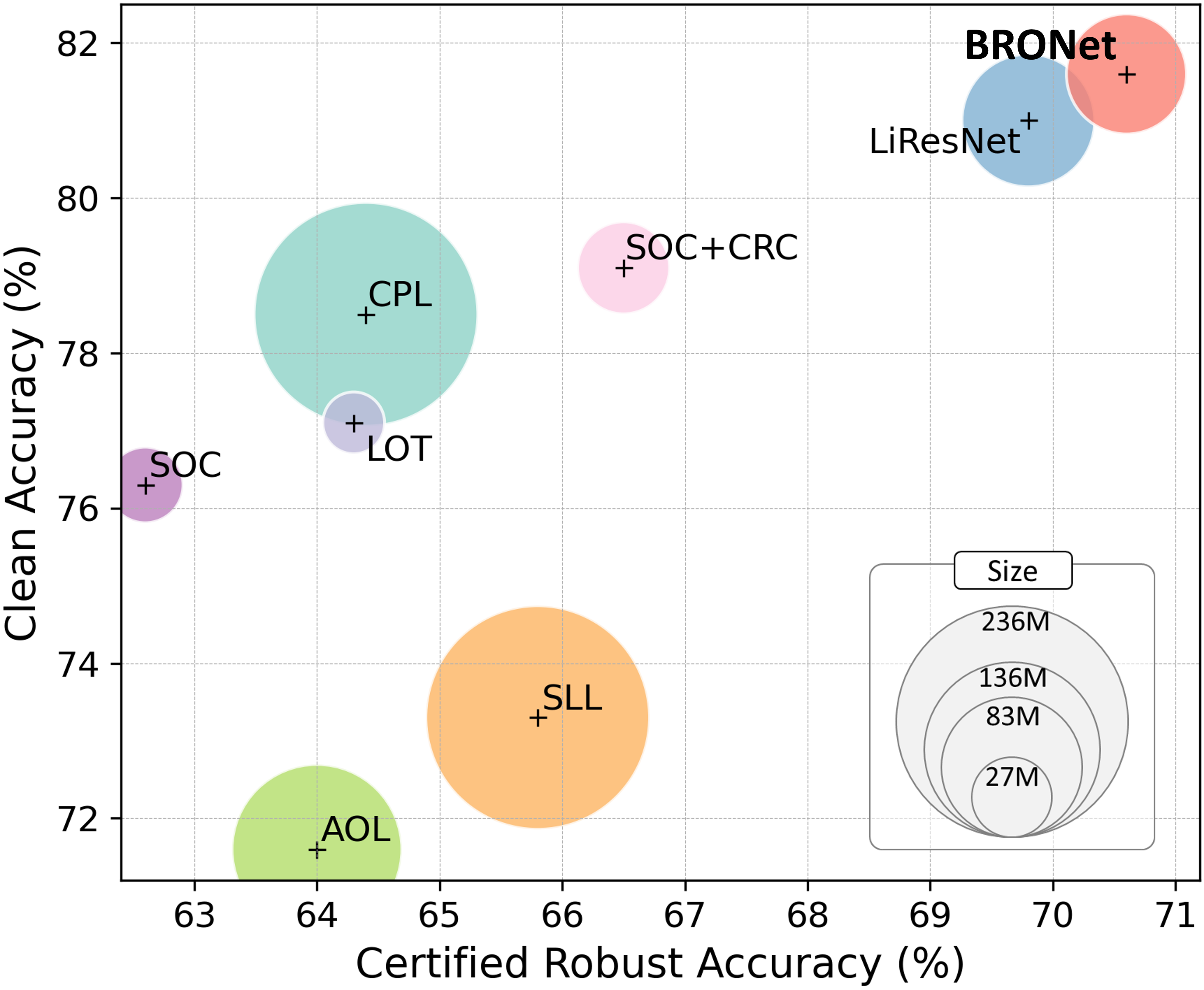}
	\caption{Visualization of model performance on CIFAR-10. The circle size denotes model size.}
	\label{fig:teaser}
\end{figure}

Conversely, deterministic methods, exemplified by interval bound propagation~\citep{ehlers2017formal,gowal2018effectiveness,mueller2022certified,shi2022efficiently} and CROWN~\citep{wang2021beta,zhang2022general},
efficiently provide deterministic certification.
These methods aim to approximate the lower bound of worst-case robust accuracy to ensure deterministic robustness guarantees.
Among various deterministic methods, neural networks with Lipschitz constraints are able to compute the lower bound of worst-case robust accuracy with a single forward pass,
making them the most time-efficient at inference time. 
They are known as Lipschitz neural networks.

Lipschitz neural networks are designed to ensure that the entire network remains Lipschitz-bounded.
This constraint limits the sensitivity of the outputs to input perturbations, 
thus providing certifiable robustness by controlling changes in the logits.
A promising approach to constructing Lipschitz networks focuses on designing orthogonal layers, 
which inherently satisfy the $1$-Lipschitz constraint.
Furthermore, these layers help mitigate the issue of vanishing gradients due to their norm-preserving properties.
Nonetheless, existing methods for constructing orthogonal layers remain computationally expensive, 
thus hindering their integration into more sophisticated neural architectures.

In this work, we introduce the \textbf{Block Reflector Orthogonal (\name)} layer, which outperforms state-of-the-art orthogonal layers in terms of computational efficiency as well as robust and clean accuracy.
We utilize it to develop various Lipschitz neural networks, underscoring its utility across architectures.
Building upon the \name layer, we introduce \textbf{BRONet}, a new Lipschitz neural network exhibiting promising results.

Moreover, we delve into Lipschitz neural networks, analyzing their inherent limited capability.
Building on this analysis, we introduce a novel loss function, the \textit{Logit Annealing} loss, which is empirically shown to be highly effective for training Lipschitz neural networks.
The certification results of the proposed method outperform state-of-the-art methods with reasonable number of parameters, as Figure~\ref{fig:teaser} shows.

Our contributions are summarized as follows:
\vspace{-10pt}
\begin{itemize}[leftmargin=20pt]
	\item We propose a novel \name method to construct orthogonal layers using low-rank parameterization.
	      It is both time and memory efficient, while also being stable during training by eliminating the need for iterative approximation algorithms.
	      \vspace{-4pt}
    \item The proposed BRO Layer improves certified robustness while reducing the resource demands of orthogonal layers, thereby expanding their applicability to more advanced architectures.
	      \vspace{-4pt}
	\item We construct various Lipschitz networks using the \name method, including newly designed BRONet, which achieves state-of-the-art certified robustness. 
	      \vspace{-4pt}
	\item Based on our theoretical analysis, we develop a novel loss function, the Logit Annealing loss, which is effective for training Lipschitz neural networks via an annealing mechanism.
	      \vspace{-4pt}
      \item Through extensive experiments, we demonstrate the effectiveness of our proposed method on the CIFAR-10/100, Tiny-ImageNet, and ImageNet datasets.
\end{itemize}

\section{Preliminaries}

\subsection{Certified Robustness with Lipschitz Networks}

Consider a function $f: \mathbb{R}^m \rightarrow \mathbb{R}^n$. The function is said to exhibit $L$-Lipschitz continuity under the $\ell_2$-norm if there exists a non-negative constant $L$ such that:
\begin{align}
	L = \text{Lip}(f) = \sup_{x_1, x_2 \in \mathbb{R}^m} \frac{\lVert f(x_1) - f(x_2) \rVert}{\lVert x_1 - x_2 \rVert},
\end{align}
where $\lVert \cdot \rVert$ represents the $\ell_2$ norm.
This relationship indicates that any variation in the network's output,
measured by the norm,
is limited to at most $L$ times the corresponding variation in its input.
This property effectively characterizes the network's stability and sensitivity to input changes.
Specifically, under the $\ell_2$-norm, the Lipschitz constant is equivalent to the spectral norm of the function's Jacobian~matrix.

Assuming $f(x)$ are the output logits of a neural network, and $t$ denotes the target label.
We say $f(x)$ is certifiably robust with a certified radius $\varepsilon$ if $\argmax_i f(x + \delta)_i = t$ for all perturbations $\{\delta : \lVert \delta \rVert \leq \varepsilon\}$.
Determining the certified radii is crucial for certifiable robustness and presents a significant challenge.
However, in $L$-Lipschitz neural networks, $\varepsilon$ can be easily calculated using $\varepsilon = \max(0,~ {\gM_f(x)}/{\sqrt{2}L})$,
where $\gM_f(x)$ denotes the logit difference between the ground-truth class and the runner-up class in the network output.
That is, $\gM_f(x) = f(x)_t - \max_{k \neq t} f(x)_k$~\citep{tsuzuku2018lipschitz, li2019preventing}.

\subsection{Lipschitz Constant Control \& Orthogonality}

Obtaining the exact Lipschitz constant for general neural networks is known to be an NP-hard problem \citep{LipschitzNP}.
However, there are efficient methods available for computing it on a layer-by-layer basis.
Once the Lipschitz constant for each layer is determined, the Lipschitz composition property allows for the calculation of the overall Lipschitz constant for the entire neural network.
The Lipschitz composition property states that given two functions $f$ and $g$  with Lipschitz constants $L_f$ and $L_g$,
their composition $h = g \circ f$ is also Lipschitz with a constant $L_h \leq L_g \cdot L_f$. We can use this property to upper-bound the Lipschitz constant of a complex neural network $f$:
\begin{equation} \label{eq:lipproductbound}
	f = \phi_l \circ \phi_{l-1} \circ \ldots \circ \phi_1, \quad \text{Lip}(f) \leq \prod_{i=1}^{l} \text{Lip}(\phi_i).
\end{equation}
Thus, if the Lipschitz constant of each layer is properly regulated, robust certification can be provided.
A key relevant property is orthogonality, characterized by the isometry property $\lVert Wx \rVert = \lVert x \rVert$ for a given operator $W$.
Encouraging orthogonality is crucial for controlling the Lipschitz constant while preserving model expressiveness~\citep{anil2019sorting},
as it avoids gradient vanishing. An illustrative example is the replacement of common element-wise activations, such as ReLU, with MaxMin~\citep{anil2019sorting, chernodub2017normpreserving} in the Lipschitz network literature. While both are 1-Lipschitz, MaxMin demonstrates superior empirical performance for being gradient-norm preserving.

\section{Related Work}

\textbf{Orthogonal Layers}
Orthogonality in neural networks is crucial for various applications, including certified robustness via Lipschitz-based methods, GAN stability \citep{muller2019orthogonal},
and training very deep networks with inherent gradient preservation~\cite{xiao2018dynamical}.
While some approaches implicitly encourage orthogonality through regularization or initialization \citep{qi2020deep, xiao2018dynamical},
explicit methods for constructing orthogonal layers have garnered significant attention, as evidenced by several focused studies in this area.
\citet{li2019preventing} proposed \emph{Block Convolution Orthogonal Parameterization (BCOP)}, which utilizes an iterative algorithm for orthogonalizing the linear transformation within a convolution.
\citet{trockman2020orthogonalizing} introduced a method employing the \emph{Cayley transformation} $W = (I - V)(I + V)^{-1}$, where $V$ is a skew-symmetric matrix.
Similarly, \citet{singla2021skew} developed the \emph{Skew-Orthogonal Convolution (SOC)}, employing an exponential convolution mechanism for feature extraction.
Additionally, \citet{xu2022lot} proposed the \emph{Layer-wise Orthogonal training (LOT)}, an analytical solution to the orthogonal Procrustes problem \citep{schonemann1966generalized},
formulated as $W = (VV^T)^{-1/2}V$. This approach requires the Newton method to approximate the internal matrix square root.
\citet{yu2021constructing} proposed the \emph{Explicitly Constructed Orthogonal Convolution (ECO)} to enforce all singular values of the convolution layer's Jacobian to be one.

Notably, SOC and LOT achieve state-of-the-art certified robustness for orthogonal layers.
Most matrix re-parameterization-based methods can be easily applied for dense layers, such as Cayley, SOC, and LOT.
A recently proposed orthogonalization method for dense layers is \emph{Cholesky} \citep{hu2024recipe}, which explicitly performs QR decomposition on the weight matrix via Cholesky decomposition.

\textbf{Other $1$-Lipschitz Layers}
A relaxation of isometry constraints, namely, $\lVert Wx \rVert \leq \lVert x \rVert$,  facilitates the development of extensions to orthogonal layers, which are $1$-Lipschitz layers.
\citet{prach2022almost} introduced the \emph{Almost Orthogonal Layer (AOL)}, which is a rescaling-based parameterization method.
Meanwhile, \citet{meunier2022dynamical} proposed the \emph{Convex Potential Layer (CPL)}, leveraging convex potential flows to construct $1$-Lipschitz layers.
Building on CPL, \citet{araujo2023unified} presented \emph{SDP-based Lipschitz Layers (SLL)}, incorporating AOL constraints for norm control.
Most recently, \citet{wang2023direct} introduced the Sandwich layer, a direct parameterization that analytically satisfies the semidefinite programming conditions outlined by \citet{fazlyab2019efficient}.

\textbf{Lipschitz Regularization} While the aforementioned methods control Lipschitz constant by formulating constrained layers with guaranteed Lipschitz bound,
Lipschitz regularization methods estimate the layer-wise Lipschitz constant via power iteration \citep{farnia2018generalizable} and apply regularization to control it.
\citet{leino2021globally} employed a Lipschitz regularization term to maximize the margin between the ground truth and runner-up class in the loss function.
\citet{hu2023unlocking, hu2024recipe} further proposed a new Lipschitz regularization method \emph{Efficiently Margin Maximization (EMMA)},
which dynamically adjust all the non-ground-truth logits before calculating the cross-entropy loss.

\section{BRO: Block Reflector Orthogonal Layer}

In this section, we introduce the \name layer, designed to provide certified robustness via low-rank orthogonal parameterization.
First, we detail the fundamental properties of our method. 
Next, we leverage the 2D-convolution theorem to develop the \name convolutional layer.
Finally, we conduct a comparative analysis of our BRO with existing state-of-the-art orthogonal layers.

\subsection{Low-rank Orthogonal Parameterization Scheme} \label{sec:lowrankscheme}

The core premise of \name revolves around a low-rank orthogonal parameterization, as introduced by the following proposition. A detailed proof is provided in Appendix~\ref{sup:pf-low-rank-orthogonal}.

\begin{proposition}
	Let $V \in \mathbb{R}^{m \times n}$ be a matrix of rank $n$, and, without loss of generality, assume $m \geq n$. Then the parameterization $W = I - 2V(V^T V)^{-1}V^T$ satisfies the following properties:
	\begin{enumerate}
		\item $W$ is orthogonal and symmetric, i.e., $W^T = W$ and $W^T W = I$.
		\item $W$ is an $n$-rank perturbation of the identity matrix, i.e., it has $n$ eigenvalues equal to $-1$ and $m - n$ eigenvalues equal to $1$.
		\item $W$ degenerates to the negative identity matrix when $V$ is a full-rank square matrix.
	\end{enumerate}
	\label{prop:low-rank-orthogonal}
\end{proposition}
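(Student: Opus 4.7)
The plan is to recognize that $P := V(V^T V)^{-1} V^T$ is exactly the orthogonal projector onto the column space of $V$, so that $W = I - 2P$ is the block (generalized) Householder reflector across that subspace. Once this is established, all three claims reduce to standard facts about orthogonal projections.

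For property 1, I would first verify symmetry. Since $V^T V$ is symmetric and invertible (the latter because $V$ has rank $n$), its inverse is also symmetric, hence
\[
P^T = \bigl(V (V^T V)^{-1} V^T\bigr)^T = V \bigl((V^T V)^{-1}\bigr)^T V^T = V (V^T V)^{-1} V^T = P,
\]
so $W^T = I - 2 P^T = I - 2 P = W$. For orthogonality, I would show $P$ is idempotent by direct computation, $P^2 = V(V^TV)^{-1}(V^TV)(V^TV)^{-1}V^T = V(V^TV)^{-1}V^T = P$, and then
\[
W^T W = W^2 = (I - 2P)^2 = I - 4P + 4 P^2 = I.
\]

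For property 2, I would argue that since $P$ is a symmetric idempotent, it is the orthogonal projection onto $\mathrm{range}(P) = \mathrm{range}(V)$, which has dimension $n$. Its eigenvalues are therefore $1$ with multiplicity $n$ (eigenvectors spanning $\mathrm{range}(V)$) and $0$ with multiplicity $m - n$ (eigenvectors in $\mathrm{range}(V)^{\perp}$). Applying the spectral mapping $\lambda \mapsto 1 - 2\lambda$ to $W = I - 2P$ gives eigenvalues $-1$ (multiplicity $n$) and $+1$ (multiplicity $m-n$), which shows that $W$ is indeed a rank-$n$ perturbation of the identity.

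For property 3, when $V$ is a full-rank $m \times m$ matrix, both $V$ and $V^T$ are invertible, so
\[
V (V^T V)^{-1} V^T = V \, V^{-1} (V^T)^{-1} V^T = I,
\]
yielding $W = I - 2 I = -I$. Equivalently, $\mathrm{range}(V) = \mathbb{R}^m$ in this case, so the projector $P$ is the identity and the reflection degenerates to $-I$. None of the three parts presents a genuine obstacle; the only subtlety worth calling out explicitly is the need to invoke the rank condition to justify invertibility of $V^T V$, which I would state once at the start and then use freely throughout.
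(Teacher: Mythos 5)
Your proposal is correct and follows essentially the same route as the paper's proof: the same algebraic verification that $W^2 = I$, the same identification of the column space of $V$ as the $-1$-eigenspace and its orthogonal complement as the $+1$-eigenspace (you phrase it via the spectrum of the projector $P$ and the map $\lambda \mapsto 1-2\lambda$, the paper by computing $Wv_i$ directly, but these are the same argument), and the identical computation for the degenerate square case. Your explicit remark that invertibility of $V^TV$ follows from the rank condition is a point the paper leaves implicit, and is worth stating.
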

This parameterization draws inspiration from the block reflector \citep{dietrich1976new, schreiber1988block}, which is widely used in parallel QR decomposition and is also important in other contemporary matrix factorization techniques. 
This approach enables the parameterization of an orthogonal matrix derived from a low-rank unconstrained matrix, thereby improving computational efficiency.

Building on the definitive property of the proposition above,
we initialize the parameter matrix $V$ as non-square to prevent it from degenerating into a negative identity matrix.

While the above discussion focuses on weight matrices for dense layers, the same parameterization can also be applied to construct orthogonal convolution operations, as both are linear transformations. However, a significant difference arises because the \name formulation requires an inverse convolution computation, which is difficult to solve in the spatial domain. This leads us to address the issue in the Fourier domain. Furthermore, it is essential to note that directly orthogonalizing each convolution kernel does not result in an orthogonal convolution~\cite{Achour2022ExistenceOrthConv}.

To introduce the BRO convolution, we begin by defining the process given an unconstrained kernel $V \in \R^{c \times n \times k \times k}$,
where each slicing $V_{:, :,i, j}$ is defined as in Proposition~\ref{prop:low-rank-orthogonal}.
Define $\fft: \mathbb{R}^{s \times s} \rightarrow \mathbb{C}^{s \times s}$ as the 2D Fourier transform operator and $\fft^{-1}: \mathbb{C}^{s \times s} \rightarrow \mathbb{C}^{s \times s}$ as its inverse,
where $s \times s$ denotes the spatial dimensions, and the input will be zero-padded to $s \times s$ if the original shape is smaller.
The 2D convolution theorem~\citep{jain1989fundamentals} asserts that the circular convolution of two matrices in the spatial domain corresponds to their element-wise multiplication in the Fourier domain. Extending this idea,~\citet{trockman2020orthogonalizing} demonstrates that multi-channel 2D circular convolution in the Fourier domain corresponds to performing a batch of matrix-vector products. By orthogonalizing each of these matrices, the convolution operation becomes orthogonal. Leveraging this insight, we can perform orthogonal convolution as follows.

Let $\tilde{X} = \fft(X)$ and $\tilde{V} = \fft(V)$, 
the convolution output $Y$ is then computed as
$Y=\fft^{-1}(\tilde{Y})$ and $\tilde{Y}_{:, i, j} = \tilde{W}_{:, :, i, j} \tilde{X}_{:, i, j}$, where $\tilde{W}_{:, :, i, j} = I - 2 \tilde{V}_{:, :, i, j} (\tilde{V}_{:, :, i, j}^* \tilde{V}_{:, :, i, j})^{-1} \tilde{V}_{:, :, i, j}^*$ and $i,j$ are the pixel indices.
Note that the \fft is performed on the spatial (pixel) dimensions, while the orthogonal multiplication is applied on the channel dimension.
\begin{proposition}\label{prop:real-ifft}
	Let $\tilde{X} = \fft(X) \in {\mathbb{C}^{c \times s \times s}}$ and $\tilde{V} = \fft(V) \in {\mathbb{C}^{c \times n \times s \times s}}$,
	the proposed BRO convolution $Y=\fft^{-1}(\tilde{Y})$,
	where $\tilde{Y}_{:, i, j} = \tilde{W}_{:, :, i, j} \tilde{X}_{:, i, j}$ and $\tilde{W}_{{:, :, i, j}} = I - 2 \tilde{V}_{{:, :, i, j}} (\tilde{V}_{:, :, i, j}^* \tilde{V}_{:, :, i, j})^{-1} \tilde{V}_{:, :, i, j}^*$,
	is a real, \emph{orthogonal} multi-channel 2D circular convolution.
\end{proposition}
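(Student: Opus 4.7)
The plan is to verify three separate claims: (i) the map $X \mapsto Y$ defined in the proposition is indeed a multi-channel 2D circular convolution, (ii) it is orthogonal in the sense of being $\ell_2$-norm preserving, and (iii) the output $Y$ is real-valued whenever $X$ and $V$ are real.

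For (i), I would invoke the multi-channel version of the 2D convolution theorem used by \citet{trockman2020orthogonalizing}: a spatial multi-channel circular convolution, written in the Fourier domain, decouples across frequencies into independent matrix-vector products $\tilde{Y}_{:,i,j} = \tilde{W}_{:,:,i,j}\tilde{X}_{:,i,j}$ with channel-mixing matrices $\tilde{W}_{:,:,i,j}$. Hence, by construction, the map defined in the proposition is circular convolution with the spatial kernel $W = \fft^{-1}(\tilde{W})$. For (ii), I extend Proposition~\ref{prop:low-rank-orthogonal} to the complex setting by replacing $V^T$ with $V^*$ throughout: the same algebra shows that each $\tilde{W}_{:,:,i,j} = I - 2\tilde{V}_{:,:,i,j}(\tilde{V}_{:,:,i,j}^* \tilde{V}_{:,:,i,j})^{-1}\tilde{V}_{:,:,i,j}^*$ is Hermitian and unitary, i.e., a complex block reflector. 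Since the normalized FFT and its inverse are unitary, and pointwise multiplication by a unitary matrix at each frequency is unitary on the channel-stacked space, the composition $X \mapsto \fft^{-1}(\tilde{W}\,\fft(X))$ preserves the $\ell_2$-norm, and is therefore orthogonal.

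The main obstacle is (iii), the realness of the output, which I would handle via the conjugate symmetry of the FFT of real signals. Since $V$ is real, $\tilde{V}_{:,:,i,j} = \overline{\tilde{V}_{:,:,-i,-j}}$ with spatial indices taken modulo $s$, and likewise $\tilde{X}_{:,i,j} = \overline{\tilde{X}_{:,-i,-j}}$. Using the identity $(\overline{A})^* = A^T$, I would check term by term that $(\tilde{V}_{:,:,-i,-j}^* \tilde{V}_{:,:,-i,-j})^{-1} = \overline{(\tilde{V}_{:,:,i,j}^* \tilde{V}_{:,:,i,j})^{-1}}$, so that $\tilde{W}_{:,:,-i,-j} = \overline{\tilde{W}_{:,:,i,j}}$. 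Combined with the conjugate symmetry of $\tilde{X}$, this yields the same symmetry for $\tilde{Y}$, and applying $\fft^{-1}$ to a conjugate-symmetric array returns a real array $Y \in \mathbb{R}^{c \times s \times s}$. The only technical care lies in verifying that inversion and adjoint operations respect the conjugation-plus-index-reflection symmetry; the rest follows directly from the complex analogue of Proposition~\ref{prop:low-rank-orthogonal} together with the convolution theorem.
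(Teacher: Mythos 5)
Your proposal is correct, and parts (i) and (ii) follow essentially the same path as the paper: the Trockman--Kolter block-diagonalization reduces the multi-channel circular convolution to per-frequency channel-mixing matrices, each complex block reflector $\tilde{W}_{:,:,i,j}$ is Hermitian and unitary by the same algebra as Proposition~\ref{prop:low-rank-orthogonal} with $V^{T}$ replaced by $V^{*}$, and conjugating a block-diagonal unitary by the (unitary) FFT preserves the $\ell_2$-norm. Where you genuinely diverge is part (iii), realness. The paper proves it via an equivariance lemma, $\mathsf{BRO}(J\tilde{V}J^{*}) = J\,\mathsf{BRO}(\tilde{V})\,J^{*}$ for unitary $J$: writing the real convolution matrix as $\mathcal{C}=\mathcal{F}_c^{*}\mathcal{D}\mathcal{F}_c$, the entire Fourier-domain pipeline is identified with $\mathsf{BRO}(\mathcal{C})$, which is manifestly real because $\mathcal{C}$ is real. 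You instead argue via the conjugate symmetry of the DFT of real arrays: $\tilde{V}_{:,:,-i,-j}=\overline{\tilde{V}_{:,:,i,j}}$ (indices modulo the spatial size), the parameterization commutes with entrywise conjugation since $(\overline{A})^{*}=A^{T}$ gives $\tilde{W}_{:,:,-i,-j}=\overline{\tilde{W}_{:,:,i,j}}$, hence $\tilde{Y}$ inherits conjugate symmetry and its inverse FFT is real. Both routes are valid. The paper's buys a cleaner structural statement --- the algorithm literally computes the BRO parameterization of the real spatial-domain operator $\mathcal{C}$, so realness and orthogonality are inherited in one stroke --- while yours is more elementary and verifies realness directly at the level of the per-frequency computation the algorithm actually performs, at the cost of some index-reflection bookkeeping. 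The one implicit assumption common to both proofs is that each $\tilde{V}_{:,:,i,j}$ has full column rank so that $\tilde{V}_{:,:,i,j}^{*}\tilde{V}_{:,:,i,j}$ is invertible; neither you nor the paper addresses this, so it is not a gap relative to the paper's own argument.
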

Importantly, the BRO convolution is a 2D circular convolution that is orthogonal, as demonstrated by Proposition~\ref{prop:real-ifft}.
Furthermore, although the BRO convolution primarily involves complex number computations in the Fourier domain, the output $Y$ remains real.
The detailed proof of Proposition~\ref{prop:real-ifft} is provided in Appendix~\ref{sup:pf-real-parameterization}.

Algorithm~\ref{alg:bro-conv} details the proposed method, illustrating the case where the input and output channels are equal to $c$.
Following \citet{xu2022lot}, we zero pad the input and parameters to size $s+2k'$, where $2k'$ is the extra padding to alleviate the circular convolution effect across edges.
A discussion on the implementation of zero-padding, along with an observation of a minor norm drop resulting from the removal of output padding, is provided in Appendix~\ref{sup:zero-padding}.
For layers where the input dimension differs from the output dimension, we enforce the $1$-Lipschitz constraint via semi-orthogonal matrices.
For details about the semi-orthogonal layers, please refer to Appendix~\ref{sup:semi-ortho}.

\begin{algorithm*}[t]
	\caption{\name Convolution Layer}
	\label{alg:bro-conv}
	\begin{algorithmic}[1]
		\STATE \textbf{Input:} Tensor $X \in \mathbb{R}^{c \times s \times s}$, Kernel $V \in \mathbb{R}^{c \times n \times k \times k}$ with $n \leq c$, $k'= \floor{k/2}$.
		\Comment{$c$ is channel size.}

		\STATE \textbf{Output:} Tensor $Y \in \mathbb{R}^{c_{out} \times w \times w}$, the orthogonal convolution applied to $X$ parameterized by $V$.

		\STATE $X^{\text{pad}} := \text{zero\_pad}(X, (k',k',k',k')) \in \mathbb{R}^{c \times (s+2k') \times (s+2k')}$ 
		\STATE $V^{\text{pad}} := \text{zero\_pad}(V, (0,0,s+2k'-k,s+2k'-k)) \in \mathbb{R}^{c \times n \times (s+2k') \times (s+2k')}$
		\STATE $\tilde{X}:=\mathrm{FFT}(X^{\text{pad}}) \in \mathbb{C}^{c \times (s+2k') \times (s+2k')}$
		\STATE $\tilde{V}:=\mathrm{FFT}(V^\text{pad}) \in \mathbb{C}^{c \times n \times (s+2k') \times (s+2k')}$

		\FORALL{$i,j \in \{1,\ldots, s+2k'\}$}
		\STATE $\tilde{Y}_{:,i,j} := (I - 2 \tilde{V}_{:,:,i,j} (\tilde{V}^*_{:,:,i,
				j}\tilde{V}_{:,:,i,j})^{-1} \tilde{V}^{*}_{:,:,i,j}) \tilde{X}_{:,i,j}$ \Comment{Apply our parameterization.}
		\ENDFOR
		\STATE $Y:=\mathrm{FFT}^{-1}(\tilde{Y})$
		\STATE \textbf{Return} $(Y_{:,k:-k,k:-k}).\text{real}$ \Comment{Extract the real part.}

	\end{algorithmic}
\end{algorithm*}

\subsection{Properties of \name Layer}

This section compares \name to \soc and \lot, the state-of-the-art orthogonal layers.

\begin{figure}[t]
	\centering
	\includegraphics[width=0.5\textwidth]{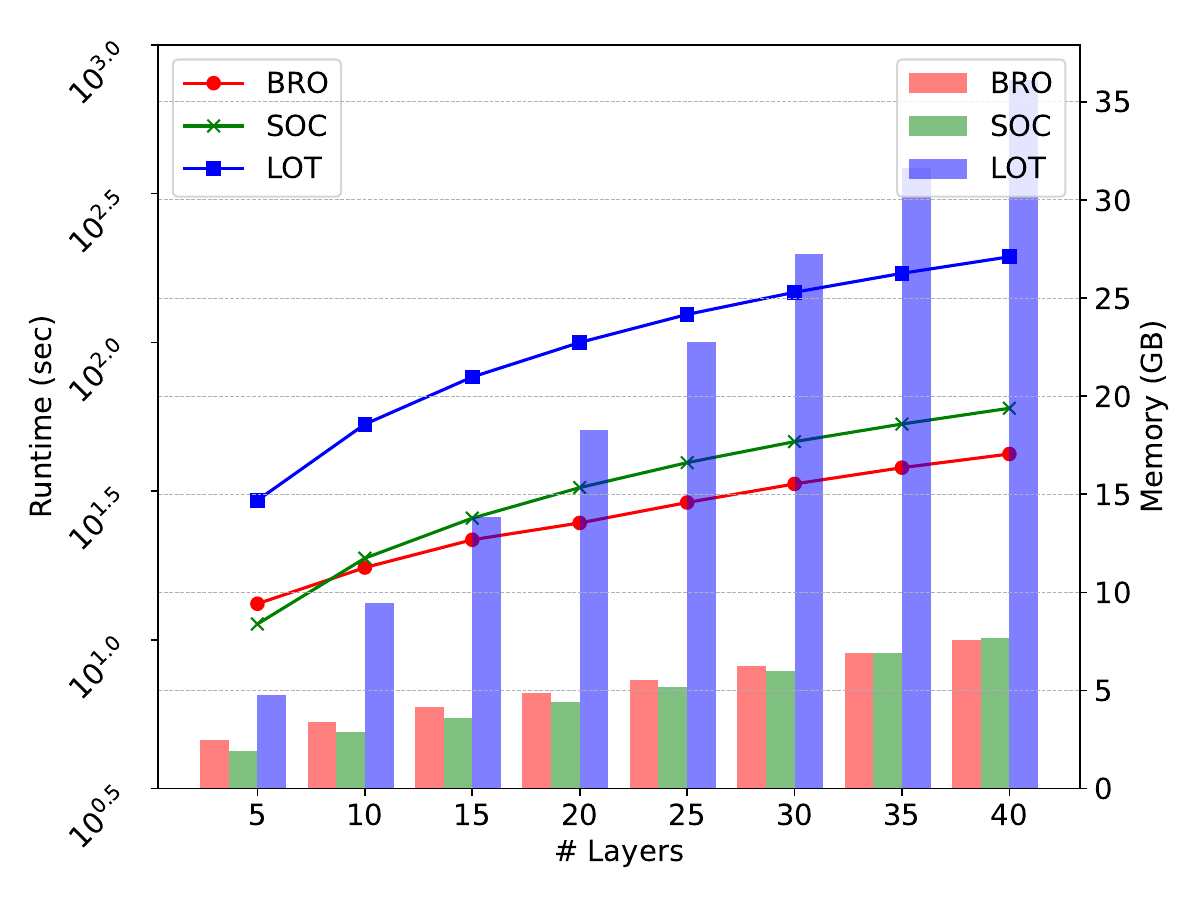}
	\caption{Comparison of runtime and memory usage among SOC, LOT, and the proposed BRO.}
	\label{fig:complexity_practical}
\end{figure}

\textbf{Iterative Approximation-Free}
Both \lot and \soc utilize iterative algorithms for constructing orthogonal convolution layers.
Although these methods' error bounds are theoretically proven to converge to zero, empirical observations suggest potential violations of the $1$-Lipschitz constraint.
Prior work \citep{bethune2022pay} has noted that \soc's construction may result in non-1-Lipschitz layers due to approximation errors inherent in the iterative process involving a finite number of terms in the Taylor expansion.
Regarding \lot, we observe numerical instability during training due to the Newton method for orthogonal matrix computation.
Specifically, the Newton method breaks the orthogonality when encountering ill-conditioned parameters, even with the $64$-bit precision computation recommended by the authors.
An illustrative example is that using Kaiming initialization \citep{He_2015_ICCV} instead of identity initialization results in a non-orthogonal layer. Detailed experiments are provided in Appendix~\ref{sup:lot_non_ortho}.
In contrast, the proposed \name constructs orthogonal layers without iterative approximation, ensuring both orthogonality and robustness certification validity.

\textbf{Time and Memory Efficiency}
\lot's internal Newton method requires numerous steps to approximate the square root of the kernel, significantly prolonging training time and increasing memory usage.
Conversely, the matrix operations in \name are less complex, leading to substantially less training time and memory usage.
Moreover, the low-rank parametrization characteristic of \name further alleviates the demand for computational resources.
When comparing \name to \soc, \name has an advantage in terms of inference time, as \soc requires multiple convolution operations to compute the exponential convolution. Figure~\ref{fig:complexity_practical} shows the runtime per epoch and the memory usage during training.
A detailed analysis and comparison of orthogonal layers, including Cayley, are provided in Appendix~\ref{sup:orthocomplexity}.

\textbf{Non-universal Orthogonal Parameterization}
While a single \name layer is not a universal approximator for orthogonal layers,
as established in the second property of Proposition~\ref{prop:low-rank-orthogonal},
we empirically demonstrate in Section~\ref{sec:explipconvnet} that the expressive power of deep neural networks constructed using \name is competitive with that of \lot and \soc.

\section{Logit Annealing Loss Function}

\citet{singla2022improvediclr} posited that cross-entropy (CE) loss is inadequate for training Lipschitz models, as it fails to increase the margin.
Thus, they integrated Certificate Regularization (CR) with the CE loss, formulated as: $\mathcal{L}_{\mathrm{CE}} - \gamma \max(\mathcal{M}_f(x), 0)$, 
where $\gM_f(x) = f(x)_t - \max_{k \neq t} f(x)_k$ is the logit margin between the ground-truth class $t$ and the runner-up class.
$\gamma\max(\mathcal{M}_f(x), 0)$ is the CR term and $\gamma$ is a hyper-parameter.
However, our investigation identifies several critical issues associated with the CR term, such as discontinuous loss gradient and gradient domination.
Refer to Appendix~\ref{supp:cr_issue} for details.

Our insight reveals that Lipschitz neural networks inherently possess limited model complexity, which impedes empirical risk minimization.
Here, we utilize Rademacher complexity to justify that the empirical margin loss risk \citep{bartlett2017spectrally} is challenging to minimize with Lipschitz neural networks.
Let $\mathcal{H}$ represent the hypothesis set. The empirical Rademacher complexity of $\mathcal{H}$ over a set $S = \{x_1, x_2, \ldots, x_n\}$ is given by:
\begin{equation}
	\mathfrak{R}_S(\mathcal{H}) = \mathbb{E}_\sigma \left[ \sup_{h \in \mathcal{H}} \frac{1}{n} \sum_{i=1}^{n} \sigma_i h(x_i) \right],
\end{equation}
where $\sigma_i$ are independent Rademacher variables uniformly sampled from $\{-1, 1\}$.
Next, we use the Rademacher complexity to demonstrate that a model with low capacity results in a greater lower bound for margin loss risk.

\begin{theorem}\label{thm:risk_lower_bound}
	Given a neural network $f$ and a set $S$ of size $n$, let $\ell_\tau$ denote the ramp loss (a special margin loss, see Appendix~\ref{supp:loss_supp})~\citep{bartlett2017spectrally}. 
    Let $\mathcal{F}$ represent the hypothesis set of $f$. Define that:
	\begin{align}
		 & \mathcal{F}_\tau := \{(x,y) \mapsto \ell_\tau(\mathcal{M}(f(x), y))) : f \in \mathcal{F}\}; \\
		 & \hat{\mathcal{R}}_\tau(f) := \frac{\sum_{i} \ell_{\tau}(\mathcal{M}(f(x_i), y_i))}{n}.
	\end{align}
	Assume that $\mathcal{P}_e$ is the prediction error probability.
	Then, with probability $1 - \delta$, the empirical margin loss risk $\hat{\mathcal{R}}_\tau(f)$ is lower bounded by:
	\begin{equation}
		\hat{\mathcal{R}}_\tau(f) \geq
		\mathcal{P}_e - 2 \mathfrak{R}_{S}(\mathcal{F}_\tau) - 3 \sqrt{\frac{\ln (1/\delta)}{2n}}.
	\end{equation}
\end{theorem}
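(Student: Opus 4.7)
The plan is to apply a standard one-sided Rademacher-complexity generalization bound to the function class $\mathcal{F}_\tau$ and then rearrange it to obtain a \emph{lower} bound on the empirical risk. First, I would use the defining property of the ramp loss: by construction $\ell_\tau(m) \geq \mathbf{1}\{m \leq 0\}$ pointwise, so that $\ell_\tau(\mathcal{M}(f(x),y))$ upper-bounds the indicator of a misclassification. Taking expectations over the data distribution yields
\begin{equation}
\mathcal{P}_e \;\leq\; \mathbb{E}\!\left[\ell_\tau(\mathcal{M}(f(X),Y))\right].
\end{equation}

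Next, I would invoke the classical Rademacher generalization bound (for instance, Mohri, Rostamizadeh \& Talwalkar, Thm.~3.3; or Bartlett \& Mendelson, 2002) applied to $\mathcal{F}_\tau$, whose outputs lie in $[0,1]$ because $\ell_\tau$ is a ramp into $[0,1]$. This gives, with probability at least $1-\delta$ over $S$ and uniformly over $f \in \mathcal{F}$,
\begin{equation}
\mathbb{E}\!\left[\ell_\tau(\mathcal{M}(f(X),Y))\right] \;\leq\; \hat{\mathcal{R}}_\tau(f) + 2\mathfrak{R}_S(\mathcal{F}_\tau) + 3\sqrt{\frac{\ln(1/\delta)}{2n}}.
\end{equation}
The derivation is the usual pipeline: symmetrization to replace the worst-case deviation $\sup_{g\in\mathcal{F}_\tau}(\mathbb{E}[g]-\hat{\mathbb{E}}[g])$ by the expected Rademacher complexity, one application of McDiarmid's inequality on that sup-deviation using bounded differences of the $[0,1]$-valued losses, and a second application of McDiarmid to pass from the population Rademacher complexity to its empirical counterpart $\mathfrak{R}_S$. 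The two concentration budgets combine into the $3\sqrt{\ln(1/\delta)/(2n)}$ term.

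Chaining the two displays and rearranging immediately gives the desired inequality $\hat{\mathcal{R}}_\tau(f) \geq \mathcal{P}_e - 2\mathfrak{R}_S(\mathcal{F}_\tau) - 3\sqrt{\ln(1/\delta)/(2n)}$. The main technical weight is absorbed into the citation of the Rademacher bound, which I would not reprove. The conceptual content worth highlighting is the \emph{reversal of perspective}: the same inequality that is usually read as ``population risk $\leq$ empirical risk $+$ complexity $+$ slack'' is here turned around to read ``empirical risk $\geq$ irreducible error $-$ complexity $-$ slack,'' which is precisely what is needed to argue that low-capacity (e.g., Lipschitz) models cannot drive the empirical margin loss below $\mathcal{P}_e$ by more than the complexity-plus-slack term. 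No genuine technical obstacle remains; the only care needed is to verify that the ramp loss is $[0,1]$-valued and $1/\tau$-Lipschitz in its argument so that the Rademacher bound applies cleanly to $\mathcal{F}_\tau$.
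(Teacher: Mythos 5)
Your proposal is correct and follows essentially the same route as the paper's own proof: bound $\mathcal{P}_e$ by the population ramp-loss risk via the pointwise inequality $\ell_\tau(m) \geq \mathbf{1}\{m \leq 0\}$, then invoke the standard Rademacher generalization bound (the paper also cites Mohri et al., Theorem~3.3) and rearrange to get the lower bound on $\hat{\mathcal{R}}_\tau(f)$. The only difference is that you sketch the symmetrization-plus-McDiarmid derivation of the cited bound, which the paper simply quotes as a lemma.
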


\begin{table*}[t]
  \centering
  \caption{Comparison of the proposed method with previous works. The $\ell_2$ perturbation budget $\varepsilon$ for certified accuracy is chosen following the convention of previous works. $^\dagger$ For fair comparison, \hlr{no additional diffusion-generated synthetic datasets are used during training.}}
    \vspace{0.2cm}
    \scalebox{0.92}{
    \begin{tabular}{llcrrrr}
    \toprule
     \multicolumn{1}{l}{\multirow{2}[2]{*}{\textbf{Datasets}}}  &
     \multicolumn{1}{l}{\multirow{2}[2]{*}{\textbf{Models}}} &
     \multicolumn{1}{c}{\multirow{2}[2]{*}{\textbf{\#Param.}}} &
     \multicolumn{1}{c}{\multirow{2}[2]{*}{\textbf{\makecell{\textbf{Clean} \\ \textbf{Acc.}}}}} &
     \multicolumn{3}{c}{\boldmath{}\textbf{Certified Acc. ($\varepsilon$)}\unboldmath{}} \\
   \cmidrule{5-7}     & &   &   & $\bf \frac{36}{255}$ & $\bf \frac{72}{255}$ & $\bf \frac{108}{255}$ \\
    \midrule
    \multirow{11}{*}{CIFAR-10}
      & {Cayley Large} {\scriptsize \citep{trockman2020orthogonalizing}} & 21M & 74.6 & 61.4 & 46.4 & 32.1 \\
      & {SOC-20} {\scriptsize \citep{singla2022improvediclr}} & 27M & 76.3 & 62.6 & 48.7 & 36.0 \\
      & {LOT-20} {\scriptsize \citep{xu2022lot}} & 18M & 77.1 & 64.3 & 49.5 & 36.3 \\
      & {CPL XL} {\scriptsize \citep{meunier2022dynamical}} & 236M & 78.5 & 64.4 & 48.0 & 33.0 \\
      & {AOL Large} {\scriptsize \citep{prach2022almost}} & 136M & 71.6 & 64.0 & {56.4} & \textbf{49.0} \\
      & {SOC-20+CRC} {\scriptsize \citep{singla2022improved}} & 40M & 79.1 & 66.5 & 52.5 & 38.1 \\
      & {SLL X-Large} {\scriptsize \citep{araujo2023unified}} & 236M & 73.3 & 64.8 & 55.7 & 47.1 \\
      & {LiResNet}$^\dagger$ {\scriptsize\citep{hu2024recipe}} & 83M & 81.0 & 69.8 & 56.3 & 42.9 \\
      
    \cmidrule{2-7}
     & {\archname-M  }  & 37M & 80.5 & 68.9 & 56.3 & 42.7 \\
     & {\archname-L  }  & 68M &  81.0  &  70.2  & 57.1 &  43.0 \\
     & {\archname-M  (+LA)}  & 37M & 81.2 & 69.7 & 55.6 & 40.7 \\
     & {\archname-L  (+LA)}  & 68M & \bf 81.6  & \bf 70.6  & \textbf{57.2} & 42.5 \\
      
    \midrule
    \multirow{12}{*}{CIFAR-100}
      & {Cayley Large} {\scriptsize \citep{trockman2020orthogonalizing}} & 21M & 43.3 & 29.2 & 18.8 & 11.0 \\
      & {SOC-20} {\scriptsize \citep{singla2022improvediclr}} & 27M & 47.8 & 34.8 & 23.7 & 15.8 \\
      & {LOT-20} {\scriptsize \citep{xu2022lot}} & 18M & 48.8 & 35.2 & 24.3 & 16.2 \\
      & {CPL XL} {\scriptsize \citep{meunier2022dynamical}} & 236M & 47.8 & 33.4 & 20.9 & 12.6 \\
      & {AOL Large} {\scriptsize \citep{prach2022almost}} & 136M & 43.7 & 33.7 & 26.3 & {20.7} \\
      & {SOC-20+CRC} {\scriptsize \citep{singla2022improved}} & 40M & 51.8 & 38.5 & 27.2 & 18.5 \\
      & {SLL X-Large} {\scriptsize \citep{araujo2023unified}} & 236M & 47.8 & 36.7 & 28.3 & \textbf{22.2} \\
      & {Sandwich} {\scriptsize \citep{wang2023direct}} & 26M & 46.3 & 35.3 & 26.3 & {20.3} \\
      & {LiResNet}$^\dagger$ {\scriptsize \citep{hu2024recipe}} & 83M & 53.0 & \textbf{40.2} & 28.3 & 19.2 \\
      
    \cmidrule{2-7}
     & {\archname-M  }  & 37M & 53.3 & 40.0 & 28.3 & 19.2 \\
      & {\archname-L}  & 68M & 53.6 & \textbf{40.2} & 28.6 &  	19.2 \\
      & {\archname-M (+LA)}  & 37M & 54.1          & 40.1          & 28.5          &  19.6 \\
      & {\archname-L (+LA)}  & 68M & \textbf{54.3} & \textbf{40.2} & \textbf{29.1} &  20.3 \\
    \midrule
    \multirow{3}{*}{\makecell{ImageNet}}
      & {LiResNet}$^\dagger$ {\scriptsize \citep{hu2024recipe}} & 98M & 47.3 & 35.3 & 25.1 & 16.9 \\
      
    \cmidrule{2-7}
      & {\archname}    & 86M & 48.8  & 36.4   & 25.8  & 17.5    \\
      & {\archname  (+LA)}    & 86M & \bf 49.3  & \bf 37.6   & \bf 27.9  & \bf 19.6    \\
    \bottomrule
    \end{tabular}%
    }
  \label{tab:best_benchmark}%
\end{table*}%

Furthermore, for the $L$-Lipschitz neural networks, 
we introduce the following inequality to show that the model complexity is upper-bounded by $L$.
\begin{proposition}
    \citep{ledoux2013probability} Let $\mathcal{F}$ be the hypothesis set of the $L$-Lipschitz neural network $f$, and $\ell_\tau$ is the ramp loss with Lipschitz constant $1 /\tau$, for some $\tau > 0$. 
	Then, given a set $S$ of size $n$, we have:
	\begin{align}
		\mathfrak{R}_S(\mathcal{F}_\tau)
		 & = \mathbb{E}_{\sigma} \left[ \frac{1}{n}\sup_{f \in \mathcal{F}}  \sum_{i=1}^n \sigma_i (\ell_\tau \circ f)(x_i) \right] \nonumber \\
		 & \leq \frac{1}{\tau} \mathbb{E}_{\sigma} \left[ \frac{1}{n}\sup_{f \in \mathcal{F}} \sum_{i=1}^n \sigma_i f(x_i)\right] \nonumber   \\
		 & \leq \frac{L}{\tau \cdot n} \sum_{i=1}^n ||x_i||.
	\end{align}
\end{proposition}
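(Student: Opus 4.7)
The proposition splits naturally into two inequalities, and the plan is to prove each with a standard tool from empirical process theory: the first by Talagrand's contraction principle, the second by exploiting the Lipschitz bound pointwise.

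For the first inequality, the ramp loss $\ell_\tau$ is $1/\tau$-Lipschitz and bounded on $\mathbb{R}$ by construction (see Appendix~\ref{supp:loss_supp}). Applying the Ledoux--Talagrand contraction lemma to the composition $\ell_\tau \circ f$ with Lipschitz factor $1/\tau$ immediately yields
\begin{equation*}
\mathbb{E}_\sigma\!\left[\sup_{f\in\mathcal{F}}\sum_{i=1}^n \sigma_i\,\ell_\tau(\mathcal{M}(f(x_i),y_i))\right]
\le \frac{1}{\tau}\,\mathbb{E}_\sigma\!\left[\sup_{f\in\mathcal{F}}\sum_{i=1}^n \sigma_i f(x_i)\right].
\end{equation*}
Dividing both sides by $n$ gives the first inequality of the proposition. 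The only bookkeeping here is that the standard contraction lemma is usually stated with $\phi(0)=0$; the minor shift $\ell_\tau(0)$ is absorbed either by subtracting the constant (leaving a term $\ell_\tau(0)\sum_i\sigma_i$ that vanishes after averaging) or by invoking the general form of the contraction principle that does not require this normalization.

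For the second inequality, I would use the $L$-Lipschitz assumption pointwise. Without loss of generality (absorbing bias into a reference point) take $f(0)=0$, so that $|f(x_i)| \le L\|x_i\|$ for every $f\in\mathcal{F}$ and every $i$. Then for any fixed sign sequence $\sigma\in\{-1,+1\}^n$,
\begin{equation*}
\sum_{i=1}^n \sigma_i f(x_i) \le \sum_{i=1}^n |f(x_i)| \le L\sum_{i=1}^n \|x_i\|,
\end{equation*}
which is uniform in $f$ and $\sigma$. Taking the supremum over $f\in\mathcal{F}$ and then the expectation over $\sigma$ preserves this deterministic bound, giving
$\tfrac{1}{n}\,\mathbb{E}_\sigma[\sup_f \sum_i \sigma_i f(x_i)] \le \tfrac{L}{n}\sum_i \|x_i\|$. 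Chaining with the first inequality delivers the stated bound of $L/(\tau n)\sum_i \|x_i\|$.

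The main obstacle is conceptual rather than computational: one must justify reducing to the centered case $f(0)=0$, since otherwise $\mathcal{F}$ could contain arbitrary vertical shifts that would blow up the naive pointwise bound (while leaving the Rademacher average invariant in expectation). The cleanest resolution is to observe that the Rademacher average is invariant under additive constants (because $\mathbb{E}_\sigma\sum_i\sigma_i = 0$ and the sup is attained by a single $f$, not by adjusting the constant separately per sign pattern), so we may restrict attention to $\mathcal{F}$ modulo constants and then invoke the Lipschitz bound from a fixed reference point. Once that is in place, the remaining steps are routine.
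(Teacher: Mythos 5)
Your overall route is exactly the one the paper intends: the first inequality is the Ledoux--Talagrand contraction applied to the $1/\tau$-Lipschitz ramp loss, and the second is the crude pointwise bound $|f(x_i)|\le L\lVert x_i\rVert$. The paper in fact gives no proof of this proposition at all --- it states it and cites Ledoux--Talagrand --- so your reconstruction is the natural one, and your handling of the $\ell_\tau(0)\neq 0$ normalization (the constant rides outside the supremum and its Rademacher average vanishes) is correct.

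The one step that does not hold as written is your justification for reducing to $f(0)=0$. You claim the Rademacher average is invariant under additive constants because ``the sup is attained by a single $f$.'' That argument only works if every function in $\mathcal{F}$ is shifted by the \emph{same} constant; if different $f\in\mathcal{F}$ take different values at the origin, the term $f(0)\sum_i\sigma_i$ sits inside the supremum, is correlated with the choice of maximizer, and does not average to zero. (Take $\mathcal{F}$ to be the constant functions with values in $[-B,B]$: the centered class is trivial, yet $\mathbb{E}_\sigma\sup_f\sum_i\sigma_i f(x_i)=B\,\mathbb{E}_\sigma|\sum_i\sigma_i|$, which is of order $B\sqrt{n}$.) So the final bound $\tfrac{L}{\tau n}\sum_i\lVert x_i\rVert$ genuinely requires $f(0)=0$, or a uniform bound on $|f(0)|$ added to the right-hand side, as a hypothesis rather than a harmless normalization. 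The same assumption is implicit in the paper's statement --- it is satisfied by the bias-free Lipschitz architectures under consideration --- so the right fix is to state it explicitly, not to argue it away via invariance. Everything else in your write-up is sound.
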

This is also known as Ledoux-Talagrand contraction \citep{ledoux2013probability}.
In Lipschitz neural networks, the upper bound is typically lower than in standard networks due to the smaller Lipschitz constant $L$, consequently limiting $\mathfrak{R}_{S}(\mathcal{F}_\tau)$.

\begin{table*}[tbp]
  \centering
  \caption{Improvements of LA and BRONet compared to LiResNet using diffusion data augmentation. The best results of each dataset are marked in bold. Performance improvements and degradations relative to the baseline are marked in \textcolor{ForestGreen}{green} and \textcolor{BrickRed}{red}, respectively.}
    \vspace{0.1cm}
    \begin{tabular}{l c cccc}
    \toprule
     \multicolumn{1}{l}{\multirow{2}[2]{*}{\textbf{Datasets}}}  &
     \multicolumn{1}{c}{\multirow{2}[2]{*}{\textbf{Methods}}} &
     \multicolumn{1}{c}{\multirow{2}[2]{*}{\textbf{\makecell{\textbf{Clean} \\ \textbf{Acc.}}}}} &
     \multicolumn{3}{c}{\boldmath{}\textbf{Certified Acc. ($\varepsilon$)}\unboldmath{}} \\
   \cmidrule{4-6}     & &  & $\bf \frac{36}{255}$ & $\bf \frac{72}{255}$ & $\bf \frac{108}{255}$ \\
    \midrule
        \multirow{3}{*}{\makecell[l]{CIFAR-10 \\~~(+EDM 4M)}}& {LiResNet} & 87.0 & 78.1 & 66.1 & 53.1 \\
         & {+LA} & 86.7 \textcolor{BrickRed}{(-0.3)} & 78.1 \textcolor{Black}{(+0.0)} & 67.0 \textcolor{ForestGreen}{(+0.9)} & 54.2 \textcolor{ForestGreen}{(+1.1)} \\
         & {BRONet+LA} & \textbf{87.2} \textcolor{ForestGreen}{(+0.2)} & \textbf{78.3} \textcolor{ForestGreen}{(+0.2)} & \textbf{67.4} \textcolor{ForestGreen}{(+1.3)} & \textbf{54.5} \textcolor{ForestGreen}{(+1.4)} \\
    \midrule
        \multirow{3}{*}{\makecell[l]{CIFAR-100 \\~~ (+EDM 1M)}}& {LiResNet}  & 61.0 & 48.4 & 36.9 & 26.5 \\
        & {+LA}  & 61.1 \textcolor{ForestGreen}{(+0.1)} & 48.9 \textcolor{ForestGreen}{(+0.5)} & 37.5 \textcolor{ForestGreen}{(+0.6)} & \textbf{27.6} \textcolor{ForestGreen}{(+1.1)} \\
        & {BRONet+LA} & \textbf{61.6} \textcolor{ForestGreen}{(+0.6)} & \textbf{49.1} \textcolor{ForestGreen}{(+0.7)} & \textbf{37.7} \textcolor{ForestGreen}{(+0.8)} & 27.2 \textcolor{ForestGreen}{(+0.7)} \\
    \midrule
        \multirow{3}{*}{\makecell[l]{ImageNet \\~~ (+EDM2 2M)}}& {LiResNet}  & 50.9 & 38.4 & 27.6 & 18.9 \\
        & {+LA}  & 51.0 \textcolor{ForestGreen}{(+0.1)} & 39.4 \textcolor{ForestGreen}{(+1.0)} & 29.2 \textcolor{ForestGreen}{(+1.6)} & 20.6 \textcolor{ForestGreen}{(+1.7)} \\
        & {BRONet+LA} & \textbf{52.3} \textcolor{ForestGreen}{(+1.4)} & \textbf{40.7} \textcolor{ForestGreen}{(+2.3)} & \textbf{30.3} \textcolor{ForestGreen}{(+2.7)} & \textbf{21.6} \textcolor{ForestGreen}{(+2.7)} \\
    \bottomrule
    \end{tabular}%
  \label{tab:improve}%
\end{table*}%

According to Theorem~\ref{thm:risk_lower_bound}, the empirical margin loss risk exhibits a greater lower bound if $\mathfrak{R}_{S}(\mathcal{F}_\tau)$ is low.
It is important to note that the risk of the CR term, i.e., CR loss risk, is exactly the margin loss risk decreased by one unit when $\tau=1/\gamma$. That is $\hat{\mathcal{R}}_{CR}(f) = \hat{\mathcal{R}}_\tau(f) - 1$.
This indicates that CR loss risk also exhibits a greater lower bound.
Thus, it is unlikely to minimize the CR term indefinitely if the model exhibits limited Rademacher complexity.
Note that limited Rademacher complexity can result from a low Lipschitz constant or a large sample set.
This also implies that we cannot limitlessly enlarge the margin in Lipschitz networks, especially for large real-world datasets.
Detailed proofs can be found in Appendix~\ref{supp:loss_supp}.

The CR term encourages a large margin for every data point simultaneously, which is unlikely since the risk has a great lower bound.
Due to the limited capacity of Lipschitz models, we must design a mechanism that enables models to learn appropriate margins for most data points.
Specifically, when a data point exhibits a large margin, indicating further optimizing it is less beneficial, its loss should be annealed to allocate capacity for other data points.
Based on this idea, we design a logit annealing mechanism to modulate the learning process, gradually reducing loss values of the large-margin data points.
Consequently, we propose a novel loss function: the \textit{Logit Annealing} (LA) loss.
Let $\vz=f(x)$ represent the logits output by the neural network, and let $\vy$ be the one-hot encoding of the true label $t$.
We define the LA loss as follows:
\begin{align}\textstyle
    &\!\!\textstyle\mathcal{L}_{\mathrm{LA}}(\vz,\vy)= -T(1-\vp_t)^{\beta}\log{(\vp_t)}, \nonumber \\
    &\textstyle\text{where}~
	\vp = \mathrm{softmax}(\frac{\vz - \xi \vy}{T}).
\end{align}
The hyper-parameters temperature $T$ and offset $\xi$ are adapted from the loss function in \citet{prach2022almost} for
margin training.
The term $(1-\vp_t)^\beta$, referred to as the annealing mechanism, draws inspiration from the Focal Loss~\citep{Lin_2017_ICCV}.
During training, the LA loss initially promotes a moderate margin for each data point, subsequently annealing the data points with large margins as training progresses.
Unlike the CR term, which encourages aggressive margin maximization, our method employs a balanced learning strategy that effectively utilizes the model’s capacity, especially when it is limited.
Consequently, the LA loss allows Lipschitz models to learn an appropriate margin for most data points.
Appendix~\ref{supp:loss_supp} provides more details on the LA loss.

\begin{table*}[tbp]
  \centering
  \caption{Comparison of clean and certified accuracy using different Lipschitz convolutional backbones. The best results are marked in bold. \#Layers is the number of convolutional backbone layers, and \#param. is the number of parameters in the constructed architecture.}
    \vspace{0.15cm}
    \begin{tabular}{lcc cccc cccc}
        \toprule
         \multicolumn{1}{l}{\multirow{2}{*}{\bf\shortstack{Conv. \\ Backbone}}} & \multicolumn{1}{c}{\multirow{2}{*}{\bf\shortstack{\#Layers}}} &
         \multicolumn{1}{c}{\multirow{2}{*}{\bf\shortstack{\#Param.}}} &
         \multicolumn{4}{c}{\textbf{CIFAR-10 (+EDM)}}& \multicolumn{4}{c}{\textbf{CIFAR-100 (+EDM)}}\\
        \cmidrule(lr){4-7} \cmidrule(lr){8-11}
 
       & & &\bf Clean  & $\mathbf{\frac{36}{255}}$ & $\mathbf{\frac{72}{255}}$ & $\mathbf{\frac{108}{255}}$  & \bf Clean & $\mathbf{\frac{36}{255}}$ & $\mathbf{\frac{72}{255}}$ & $\mathbf{\frac{108}{255}}$ \\  
        \midrule
        LOT      & 2 & 59M &85.7 & 76.4 & 65.1 & 52.2 & 59.4 & 47.6 & 36.6 & 26.3 \\
        Cayley   & 6 & 68M &86.7 & 77.7 & 66.9 & 54.3 & 61.1 & 48.7 & \textbf{37.8} & 27.5  \\
        Cholesky & 6 & 68M &85.4 & 76.6 & 65.7 & 53.3 & 59.4 & 47.4 & 36.8 & 26.9 \\
        SLL      & 12& 83M &85.6 & 76.8 & 66.0 & 53.3 & 59.4 & 47.6 & 36.6 & 27.0 \\
        SOC      & 12& 83M &86.6 & 78.2 & 67.0 & 54.1 & 60.9 & 48.9 & 37.6 & \textbf{27.8} \\
        Lip-reg  & 12& 83M &86.7 & 78.1 & 67.0 & 54.2 & 61.1 & 48.9 & 37.5 & 27.6 \\
        BRO      & 12& 68M &\textbf{87.2} & \textbf{78.3} & \textbf{67.4} & \textbf{54.5} & \textbf{61.6} & \textbf{49.1} & 37.7 & 27.2 \\

    \bottomrule
    \end{tabular}%
  \label{tab:backbone}%
  \vspace{-0.15cm}
\end{table*}%

\section{Experiments}
In this section, we first evaluate the overall performance of our proposed BRONet against the $\ell_2$ certified robustness baselines.
Next, to further demonstrate the effectiveness of the BRO layer, we conduct fair and comprehensive evaluations on multiple architectures for comparative analysis with orthogonal and other Lipschitz layers in previous literature.
Lastly, we present the experimental results and analysis on the LA loss.
See Appendix~\ref{sup:ImpDetails} for implementation details.

\subsection{Main Results}
We compare \archname to the current leading methods in the literature.
Figure~\ref{fig:teaser} presents a visual comparison on CIFAR-10.
The circle size indicates the number of model parameters.
Furthermore, Table~\ref{tab:best_benchmark} details the clean accuracy, certified accuracy,  as well as the total number of parameters.
For reference, we present the results of the proposed BRONet both with and without the LA loss function.
On CIFAR-10 and CIFAR-100, our model achieves the best clean and certified accuracy with the $\ell_2$ perturbation budget $\varepsilon=36/255$.
On the ImageNet dataset, our method achieves state-of-the-art performance, highlighting its scalability. Notably, BRONets attain these results with a reasonable number of parameters.
Additional results for the Tiny-ImageNet experiments are provided in Appendix~\ref{sup:exp_tab1_ext}.

\begin{table*}[t]
    \centering
    \caption{Comparison of clean and certified accuracy with different orthogonal layers in LipConvNets (Depth-Width). Instances marked with a dash (-) indicate out of memory during training. The best results with each model are marked with bold.}
    \vspace{0.15cm}

    \begin{tabular}{c c c cccc cccc}
  \toprule
  \multirow{2}{*}{\textbf{Model}} & \multirow{2}{*}{\textbf{D-W}} & \multirow{2}{*}{\textbf{Layer}}
    & \multicolumn{4}{c}{\textbf{CIFAR-100}}
    & \multicolumn{4}{c}{\textbf{Tiny-ImageNet}} \\
  \cmidrule(lr){4-7} \cmidrule(lr){8-11}
  & & & \bf Clean & $\mathbf{\tfrac{36}{255}}$ & $\mathbf{\tfrac{72}{255}}$ & $\mathbf{\tfrac{108}{255}}$
    & \bf Clean & $\mathbf{\tfrac{36}{255}}$ & $\mathbf{\tfrac{72}{255}}$ & $\mathbf{\tfrac{108}{255}}$ \\
  \midrule
\multirow{9}{*}{\makecell{LipConvNet \\ (+LA)}}
    & \multirow{3}{*}{(10–32)} 
      & SOC & 47.5 & 34.7 & 24.0 & 15.9 
            & 38.0 & 26.5 & 17.7 & 11.3 \\
    &                                     
      & LOT & \textbf{49.1} & \textbf{35.5} & 24.4 & \textbf{16.3} 
            & \textbf{40.2} & 27.9 & \textbf{18.7} & \textbf{11.8} \\
    &                                     
      & BRO & 48.6 & 35.4 & \textbf{24.5} & 16.1 
            & 39.4 & \textbf{28.1} & 18.2 & 11.6 \\
  \cmidrule(lr){2-11}
    & \multirow{3}{*}{(10–48)} 
      & SOC & 48.2 & 34.9 & 24.4 & 16.2 
            & 38.9 & 27.1 & 17.6 & 11.2 \\
    &                                     
      & LOT & \textbf{49.4} & 35.8 & 24.8 & 16.3 
            & {–}  & {–}  & {–}  & {–}   \\
    &                                     
      & BRO & \textbf{49.4} & \textbf{36.2} & \textbf{24.9} & \textbf{16.7} 
            & \textbf{40.0} & \textbf{28.1} & \textbf{18.9} & \textbf{12.3} \\
  \cmidrule(lr){2-11}
    & \multirow{3}{*}{(10–64)} 
      & SOC & 48.5 & 35.5 & 24.4 & 16.3 
            & 39.3 & 27.3 & 17.6 & 11.2 \\
    &                                     
      & LOT & 49.6 & 36.1 & 24.7 & 16.2 
            & {–}  & {–}  & {–}  & {–}   \\
    &                                     
      & BRO & \textbf{49.7} & \textbf{36.7} & \textbf{25.2} & \textbf{16.8} 
            & \textbf{40.7} & \textbf{28.4} & \textbf{19.2} & \textbf{12.5} \\
    \bottomrule
    \end{tabular}
    \label{tab:ortho_bench_lite}
\end{table*}

\begin{table}[t]
  \centering
  \caption{Ablation study on the components contributing to the improvement on the \textbf{ImageNet} dataset.}
  \vspace{0.15cm}
    \begin{tabular}{c c c c c c c}
      \toprule
      \multirow{2}[2]{*}{\textbf{LA}} &
      \multirow{2}[2]{*}{\textbf{Arch}} &
      \multirow{2}[2]{*}{\textbf{BRO}} &
      \multirow{2}[2]{*}{\makecell{\textbf{Clean} \\ \textbf{Acc.}}} &
      \multicolumn{3}{c}{\boldmath{}\textbf{Certified Acc. ($\varepsilon$)}\unboldmath{}} \\
      \cmidrule(lr){5-7}
        &&&&\multicolumn{1}{c}{$\bf \frac{36}{255}$} & \multicolumn{1}{c}{$\bf \frac{72}{255}$} & \multicolumn{1}{c}{$\bf \frac{108}{255}$} \\
      \midrule
      \xmark & \xmark & \xmark & 47.3 & 35.3 & 25.1 & 16.9 \\
      \midrule
      \cmark & \xmark & \xmark & 47.8 & 36.6 & 26.7 & 18.7 \\
      \cmark & \cmark & \xmark & 48.8 & 37.1 & 26.8 & 18.8 \\
      \cmark & \cmark & \cmark & \textbf{49.3} & \textbf{37.6} & \textbf{27.9} & \textbf{19.6} \\
      \bottomrule
    \end{tabular}
    
  \label{tab:imgnetablation}
\end{table}

\subsection{Ablation Studies}

\paragraph{Extra Diffusion Data Augmentation}

As demonstrated in previous studies \citep{hu2024recipe, wang2023better}, incorporating additional synthetic data generated by diffusion models such as elucidating diffusion model (EDM)~\citep{karras2022edm} can enhance performance. 
We evaluate the effectiveness of our method in this setting, using synthetic datasets publicly released from \citet{hu2024recipe, wang2023better} for CIFAR-10 and CIFAR-100, 
which contain post-filtered 4 million and 1 million images, respectively. For ImageNet, we generate 2 million 512x512 images using the script recommended by EDM2 with autoguidance~\citep{Karras2024edm2, Karras2024autoguidance}. Table~\ref{tab:improve} presents the results, showing that combining LA and BRO effectively leverages these synthetic datasets to enhance performance.

\textbf{Backbone Comparison}
As the improvements in the previous work by \citet{hu2024recipe} primarily stem from using diffusion-generated synthetic datasets and architectural changes, 
we conduct a fair and comprehensive comparison of different Lipschitz convolutional layers using the default LiResNet architecture (with Lipschitz-regularized convolutional layers), along with LA and diffusion-based data augmentation. 
The only modification is swapping out the convolutional backbone layers. 
It is important to note that for FFT-based orthogonal layers (excluding BRO), we must reduce the number of backbone layers to stay within memory constraints.
LOT has the fewest parameters due to its costly parameterization. 
With half-rank parameterization in BRO, the number of parameters for BRO, Cayley, and Cholesky remain consistent, 
while SLL, SOC, and Lipschitz-regularized retain the original number of parameters. 
The results in Table~\ref{tab:backbone} indicate that BRO is the optimal choice compared to other layers in terms of overall performance.

\textbf{LipConvNet Benchmark} \label{sec:explipconvnet}
To further validate the effectiveness of BRO, we evaluate it on LipConvNets, a standard lightweight architecture widely used in the literature on orthogonal layers.
Table~\ref{tab:ortho_bench_lite} illustrates the certified robustness of SOC, LOT, and BRO layers.
It is evident that the LipConvNet constructed by BRO layers compares favorably to the other orthogonal layers.
Furthermore, due to the nature of its low-rank parametrization property, \name layers require the fewest parameters and reasonable runtime.
Detailed comparisons are provided in Appendix~\ref{sup:lipconvnetabl}.

\textbf{Improvement on ImageNet}
Compared to LiResNet on ImageNet, we have made three key modifications: introducing the BRO convolutional layers, incorporating the LA loss, and making an architectural adjustment—replacing the 1-Lipschitz learnable downsampling layer with norm-preserving \(\ell_2\)-norm pooling. Table~\ref{tab:imgnetablation} presents the ablation study on the components, demonstrating that the combination of all three components yields the best performance.

\subsection{LA Loss Effectiveness}
Table~\ref{tab:best_benchmark},~\ref{tab:improve}, and~\ref{tab:imgnetablation} illustrates the performance improvements achieved using the proposed LA loss.
We also provide extensive ablation experiments in Appendix~\ref{sup:la_exp} to validate its effectiveness on LipConvNets.
Our experiments show that the LA loss promotes a balanced margin, especially for models trained on more challenging datasets.

\begin{minipage}[!t]{0.47\textwidth}  
	\centering
	\captionsetup{hypcap=false}
	\captionof{table}{The statistics of certified radius distribution.}
        \vspace{0.3cm}
		\begin{tabular}{lccc}
			\toprule
			\textbf{Loss}  & \textbf{Median} & \textbf{Variance} & \textbf{Skewness} \\
			\midrule
			$\text{CE}$    & 0.2577          & 0.0732            & 1.2114            \\
			$\text{CE+CR}$ & 0.2750          & 0.1000            & 1.4843            \\
			$\text{LA}$    & 0.2840          & 0.0797            & 1.0539            \\
			\bottomrule
		\end{tabular}
     \vspace{0.7cm}
	\label{tab:loss_stats}
	\vspace{0.2cm}
	\includegraphics[width=0.95\textwidth]{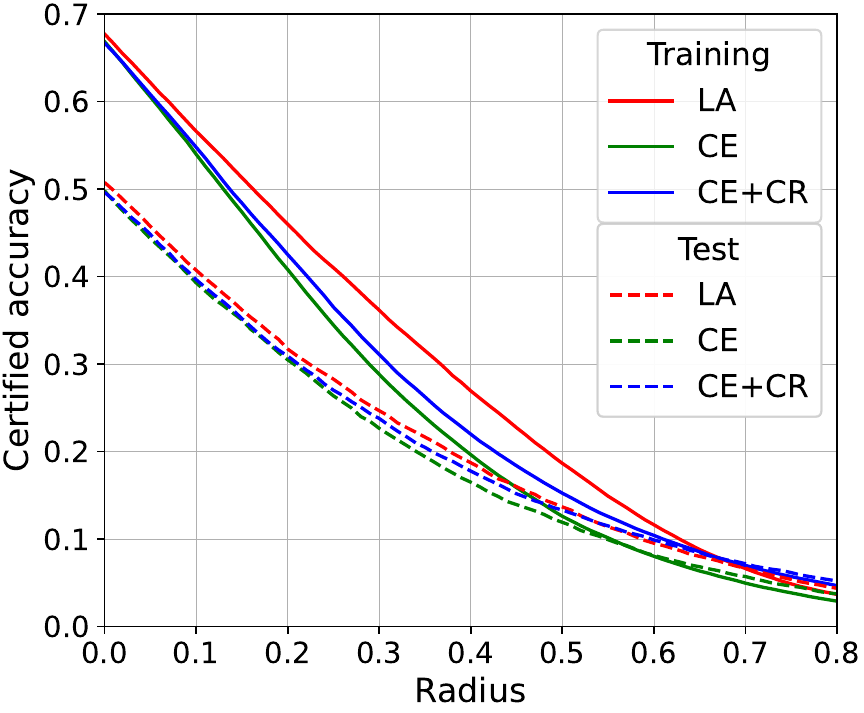}  
	\vspace{-0.3cm}
	\captionsetup{hypcap=false}
	\captionof{figure}{Certified accuracy with respect to radius. The LA loss helps learn appropriate margin.}
	\label{fig:radius_curve}
	\vspace{0.4cm}
\end{minipage}

To demonstrate that the LA loss enables learning an appropriate margin for most data points, we further investigate the certified radius distribution.
Following \citet{cohen2019certified}, we plot the certified accuracy with respect to the radius {on CIFAR-100} to visualize the margin
distribution in Figure~\ref{fig:radius_curve}.
The certified radius is proportional to the margin in Lipschitz models.
Thus, the x-axis and y-axis can be seen as margin and complementary cumulative distribution of data points, respectively \citep{lecuyer2019certified}.

The results indicate that the number of data points with appropriate margins increases, which is evident as the red curve rises higher than the others at the radius between $\left[0.0, 0.6\right]$.
Moreover, the clean accuracy, which corresponds to certified accuracy at zero radius, is also observed to be slightly higher.
This suggests that the LA loss does not compromise clean accuracy for robustness.
To better characterize the annealing mechanism, we analyze the distribution of the certified radius across the data points, as shown in Table~\ref{tab:loss_stats}.
Compared to CR, the LA loss reduces both the positive skewness and variance of the distribution, indicating a rightward shift in the peak and a decrease in the dispersion of the radius. This suggests that LA loss helps mitigate the issue of overfitting to certain data points and improves the certified radius for most points.
Additional experiments, including ablation studies on BRO rank and the LA loss, are presented in Appendix~\ref{sup:exp}.

\section{Conclusion}

In this paper, we introduce a novel BRO layer that features low-rank parameterization and is free from iterative approximations.
As a result, it is both memory and time efficient compared to existing orthogonal layers, making it well-suited for integration into advanced Lipschitz architectures.
Furthermore, comprehensive experimental results have shown that BRO is one of the most promising orthogonal convolutional layers for constructing expressive Lipschitz networks.
Next, we address the limited complexity issue of Lipschitz neural networks and introduce the new Logit Annealing loss function to help models learn appropriate margins.
Extensive experiments on CIFAR-10, CIFAR-100, Tiny-ImageNet, and ImageNet validate the effectiveness of the proposed methods over existing baselines.
Moving forward, the principles and methodologies in this paper could serve as a foundation for future research in certifiably robust network design.

\section*{Acknowledgment}

This work was supported in part by the National Science and Technology Council (NSTC) under Grants NSTC 113-2222-E-002-004-MY3, NSTC 113-2634-F-002-007, NSTC 113-2634-F-002-001-MBK, and NSTC 113-2923-E-002-010-MY2, as well as by the Featured Area Research Center Program within the Higher Education Sprout Project of the Ministry of Education (Grant 113L900903). In addition, we thank the National Center for High-performance Computing (NCHC) for providing computational and storage resources.
We also appreciate the constructive comments received from anonymous reviewers, which substantially improved both the clarity and depth of our presentation. 

\section*{Impact Statement}
    
This work aims to enhance the robustness and reliability of deep neural networks, 
ultimately contributing to the development of trustworthy AI systems. 
Consequently, it has the potential for positive societal impact. 
However, we also recognize two potential negative implications.
The first concern is adversarial misuse. 
While this method improves the robustness and reliability of machine learning models, 
there is a risk that it could be exploited for malicious purposes. 
For example, more secure and robust models might be weaponized in applications such as surveillance systems or autonomous malicious agents.
The second concern relates to resource utilization. 
Although this method is more memory-efficient than other approaches, 
the overall computational and energy costs associated with large-scale deployment still need to be considered, 
particularly in cloud computing environments.

\clearpage

\bibliography{icml2025}

\begin{thebibliography}{68}
\providecommand{\natexlab}[1]{#1}
\providecommand{\url}[1]{\texttt{#1}}
\expandafter\ifx\csname urlstyle\endcsname\relax
  \providecommand{\doi}[1]{doi: #1}\else
  \providecommand{\doi}{doi: \begingroup \urlstyle{rm}\Url}\fi

\bibitem[Achour et~al.(2022)Achour, Malgouyres, and Mamalet]{Achour2022ExistenceOrthConv}
Achour, E.~M., Malgouyres, F., and Mamalet, F.
\newblock Existence, stability and scalability of orthogonal convolutional neural networks.
\newblock \emph{Journal of Machine Learning Research (JMLR)}, 2022.

\bibitem[Anil et~al.(2019)Anil, Lucas, and Grosse]{anil2019sorting}
Anil, C., Lucas, J., and Grosse, R.
\newblock Sorting out {L}ipschitz function approximation.
\newblock In \emph{International Conference on Machine Learning (ICML)}, 2019.

\bibitem[Araujo et~al.(2023)Araujo, Havens, Delattre, Allauzen, and Hu]{araujo2023unified}
Araujo, A., Havens, A.~J., Delattre, B., Allauzen, A., and Hu, B.
\newblock A unified algebraic perspective on {Lipschitz} neural networks.
\newblock In \emph{International Conference on Learning Representations (ICLR)}, 2023.

\bibitem[Bartlett et~al.(2017)Bartlett, Foster, and Telgarsky]{bartlett2017spectrally}
Bartlett, P.~L., Foster, D.~J., and Telgarsky, M.~J.
\newblock Spectrally-normalized margin bounds for neural networks.
\newblock In \emph{Advances in Neural Information Processing Systems (NeurIPS)}, 2017.

\bibitem[B{\'e}thune et~al.(2022)B{\'e}thune, Boissin, Serrurier, Mamalet, Friedrich, and Gonzalez~Sanz]{bethune2022pay}
B{\'e}thune, L., Boissin, T., Serrurier, M., Mamalet, F., Friedrich, C., and Gonzalez~Sanz, A.
\newblock Pay attention to your loss: understanding misconceptions about {Lipschitz} neural networks.
\newblock In \emph{Advances in Neural Information Processing Systems (NeurIPS)}, 2022.

\bibitem[Brown et~al.(2020)Brown, Mann, Ryder, Subbiah, Kaplan, Dhariwal, Neelakantan, Shyam, Sastry, Askell, et~al.]{brown2020language}
Brown, T., Mann, B., Ryder, N., Subbiah, M., Kaplan, J.~D., Dhariwal, P., Neelakantan, A., Shyam, P., Sastry, G., Askell, A., et~al.
\newblock Language models are few-shot learners.
\newblock In \emph{Advances in Neural Information Processing Systems (NeurIPS)}, 2020.

\bibitem[Carlini \& Wagner(2018)Carlini and Wagner]{carlini2018audio}
Carlini, N. and Wagner, D.
\newblock Audio adversarial examples: Targeted attacks on speech-to-text.
\newblock In \emph{IEEE security and privacy workshops (SPW)}, 2018.

\bibitem[Chen et~al.(2021)Chen, Zhang, Liu, Chang, and Wang]{chen2020robust}
Chen, T., Zhang, Z., Liu, S., Chang, S., and Wang, Z.
\newblock Robust overfitting may be mitigated by properly learned smoothening.
\newblock In \emph{International Conference on Learning Representations (ICLR)}, 2021.

\bibitem[Chernodub \& Nowicki(2016)Chernodub and Nowicki]{chernodub2017normpreserving}
Chernodub, A. and Nowicki, D.
\newblock Norm-preserving orthogonal permutation linear unit activation functions {(OPLU)}.
\newblock \emph{arXiv preprint arXiv:1604.02313}, 2016.

\bibitem[Cohen et~al.(2019)Cohen, Rosenfeld, and Kolter]{cohen2019certified}
Cohen, J., Rosenfeld, E., and Kolter, Z.
\newblock Certified adversarial robustness via randomized smoothing.
\newblock In \emph{International Conference on Machine Learning (ICML)}, 2019.

\bibitem[Croce \& Hein(2020)Croce and Hein]{croce2020reliable}
Croce, F. and Hein, M.
\newblock Reliable evaluation of adversarial robustness with an ensemble of diverse parameter-free attacks.
\newblock In \emph{International Conference on Machine Learning (ICML)}, 2020.

\bibitem[Das et~al.(2018)Das, Shanbhogue, Chen, Hohman, Li, Chen, Kounavis, and Chau]{das2018shield}
Das, N., Shanbhogue, M., Chen, S.-T., Hohman, F., Li, S., Chen, L., Kounavis, M.~E., and Chau, D.~H.
\newblock Shield: Fast, practical defense and vaccination for deep learning using jpeg compression.
\newblock In \emph{Proceedings of the 24th ACM SIGKDD International Conference on Knowledge Discovery \& Data Mining (KDD)}, 2018.

\bibitem[Dietrich(1976)]{dietrich1976new}
Dietrich, G.
\newblock A new formulation of the hypermatrix householder-{QR} decomposition.
\newblock \emph{Computer Methods in Applied Mechanics and Engineering}, 9\penalty0 (3):\penalty0 273--280, 1976.

\bibitem[Dozat(2016)]{dozat2016incorporating}
Dozat, T.
\newblock Incorporating {Nesterov} momentum into {Adam}.
\newblock \emph{International Conference on Learning Representations workshop (ICLR workshop)}, 2016.

\bibitem[Ehlers(2017)]{ehlers2017formal}
Ehlers, R.
\newblock Formal verification of piece-wise linear feed-forward neural networks.
\newblock In \emph{International Symposium on Automated Technology for Verification and Analysis (ATVA)}, 2017.

\bibitem[Engstrom et~al.(2018)Engstrom, Ilyas, and Athalye]{engstrom2018evaluating}
Engstrom, L., Ilyas, A., and Athalye, A.
\newblock Evaluating and understanding the robustness of adversarial logit pairing.
\newblock \emph{arXiv preprint arXiv:1807.10272}, 2018.

\bibitem[Farnia et~al.(2019)Farnia, Zhang, and Tse]{farnia2018generalizable}
Farnia, F., Zhang, J., and Tse, D.
\newblock Generalizable adversarial training via spectral normalization.
\newblock In \emph{International Conference on Learning Representations (ICLR)}, 2019.

\bibitem[Fazlyab et~al.(2019)Fazlyab, Robey, Hassani, Morari, and Pappas]{fazlyab2019efficient}
Fazlyab, M., Robey, A., Hassani, H., Morari, M., and Pappas, G.
\newblock Efficient and accurate estimation of {Lipschitz} constants for deep neural networks.
\newblock In \emph{Advances in Neural Information Processing Systems (NeurIPS)}, 2019.

\bibitem[Goodfellow et~al.(2014)Goodfellow, Shlens, and Szegedy]{goodfellow2014explaining}
Goodfellow, I.~J., Shlens, J., and Szegedy, C.
\newblock Explaining and harnessing adversarial examples.
\newblock \emph{arXiv preprint arXiv:1412.6572}, 2014.

\bibitem[Gowal et~al.(2018)Gowal, Dvijotham, Stanforth, Bunel, Qin, Uesato, Arandjelovic, Mann, and Kohli]{gowal2018effectiveness}
Gowal, S., Dvijotham, K., Stanforth, R., Bunel, R., Qin, C., Uesato, J., Arandjelovic, R., Mann, T., and Kohli, P.
\newblock On the effectiveness of interval bound propagation for training verifiably robust models.
\newblock \emph{arXiv preprint arXiv:1810.12715}, 2018.

\bibitem[He et~al.(2015)He, Zhang, Ren, and Sun]{He_2015_ICCV}
He, K., Zhang, X., Ren, S., and Sun, J.
\newblock Delving deep into rectifiers: Surpassing human-level performance on imagenet classification.
\newblock In \emph{Proceedings of the IEEE International Conference on Computer Vision (ICCV)}, 2015.

\bibitem[Hu et~al.(2022)Hu, Wallis, Allen-Zhu, Li, Wang, Wang, Chen, et~al.]{hu2021lora}
Hu, E.~J., Wallis, P., Allen-Zhu, Z., Li, Y., Wang, S., Wang, L., Chen, W., et~al.
\newblock Lora: Low-rank adaptation of large language models.
\newblock In \emph{International Conference on Learning Representations (ICLR)}, 2022.

\bibitem[Hu et~al.(2023)Hu, Zou, Wang, Leino, and Fredrikson]{hu2023unlocking}
Hu, K., Zou, A., Wang, Z., Leino, K., and Fredrikson, M.
\newblock Unlocking deterministic robustness certification on {Imagenet}.
\newblock In \emph{Advances in Neural Information Processing Systems (NeurIPS)}, 2023.

\bibitem[Hu et~al.(2024)Hu, Leino, Wang, and Fredrikson]{hu2024recipe}
Hu, K., Leino, K., Wang, Z., and Fredrikson, M.
\newblock A recipe for improved certifiable robustness.
\newblock In \emph{The Twelfth International Conference on Learning Representations (ICLR)}, 2024.

\bibitem[Jain(1989)]{jain1989fundamentals}
Jain, A.~K.
\newblock \emph{Fundamentals of digital image processing}.
\newblock Prentice-Hall, Inc., 1989.

\bibitem[Karras et~al.(2022)Karras, Aittala, Aila, and Laine]{karras2022edm}
Karras, T., Aittala, M., Aila, T., and Laine, S.
\newblock Elucidating the design space of diffusion-based generative models.
\newblock In \emph{Advances in Neural Information Processing Systems (NeurIPS)}, 2022.

\bibitem[Karras et~al.(2024{\natexlab{a}})Karras, Aittala, Kynk\"a\"anniemi, Lehtinen, Aila, and Laine]{Karras2024autoguidance}
Karras, T., Aittala, M., Kynk\"a\"anniemi, T., Lehtinen, J., Aila, T., and Laine, S.
\newblock Guiding a diffusion model with a bad version of itself.
\newblock In \emph{Advances in Neural Information Processing Systems (NeurIPS)}, 2024{\natexlab{a}}.

\bibitem[Karras et~al.(2024{\natexlab{b}})Karras, Aittala, Lehtinen, Hellsten, Aila, and Laine]{Karras2024edm2}
Karras, T., Aittala, M., Lehtinen, J., Hellsten, J., Aila, T., and Laine, S.
\newblock Analyzing and improving the training dynamics of diffusion models.
\newblock In \emph{Proceedings of the IEEE/CVF Conference on Computer Vision and Pattern Recognition (CVPR)}, 2024{\natexlab{b}}.

\bibitem[Lecuyer et~al.(2019)Lecuyer, Atlidakis, Geambasu, Hsu, and Jana]{lecuyer2019certified}
Lecuyer, M., Atlidakis, V., Geambasu, R., Hsu, D., and Jana, S.
\newblock Certified robustness to adversarial examples with differential privacy.
\newblock In \emph{2019 IEEE Symposium on Security and Privacy (SP)}, 2019.

\bibitem[Ledoux \& Talagrand(2013)Ledoux and Talagrand]{ledoux2013probability}
Ledoux, M. and Talagrand, M.
\newblock \emph{Probability in Banach Spaces: isoperimetry and processes}.
\newblock Springer Science \& Business Media, 2013.

\bibitem[Lee \& Kim(2023)Lee and Kim]{lee2023robust}
Lee, M. and Kim, D.
\newblock Robust evaluation of diffusion-based adversarial purification.
\newblock In \emph{Proceedings of the IEEE/CVF International Conference on Computer Vision (ICCV)}, 2023.

\bibitem[Leino et~al.(2021)Leino, Wang, and Fredrikson]{leino2021globally}
Leino, K., Wang, Z., and Fredrikson, M.
\newblock Globally-robust neural networks.
\newblock In \emph{International Conference on Machine Learning (ICML)}, 2021.

\bibitem[Li et~al.(2023)Li, Xie, and Li]{li2023sok}
Li, L., Xie, T., and Li, B.
\newblock Sok: Certified robustness for deep neural networks.
\newblock In \emph{2023 IEEE symposium on security and privacy (SP)}, 2023.

\bibitem[Li et~al.(2019)Li, Haque, Anil, Lucas, Grosse, and Jacobsen]{li2019preventing}
Li, Q., Haque, S., Anil, C., Lucas, J., Grosse, R.~B., and Jacobsen, J.-H.
\newblock Preventing gradient attenuation in {Lipschitz} constrained convolutional networks.
\newblock In \emph{Advances in Neural Information Processing Systems (NeurIPS)}, 2019.

\bibitem[Lin et~al.(2017)Lin, Goyal, Girshick, He, and Dollar]{Lin_2017_ICCV}
Lin, T.-Y., Goyal, P., Girshick, R., He, K., and Dollar, P.
\newblock Focal loss for dense object detection.
\newblock In \emph{Proceedings of the IEEE International Conference on Computer Vision (ICCV)}, 2017.

\bibitem[Madry et~al.(2018)Madry, Makelov, Schmidt, Tsipras, and Vladu]{madry2017towards}
Madry, A., Makelov, A., Schmidt, L., Tsipras, D., and Vladu, A.
\newblock Towards deep learning models resistant to adversarial attacks.
\newblock In \emph{International Conference on Learning Representations (ICLR)}, 2018.

\bibitem[Mao et~al.(2023)Mao, M\"{u}ller, Fischer, and Vechev]{Mao2023Connecting}
Mao, Y., M\"{u}ller, M., Fischer, M., and Vechev, M.
\newblock Connecting certified and adversarial training.
\newblock In \emph{Advances in Neural Information Processing Systems (NeurIPS)}, 2023.

\bibitem[Meunier et~al.(2022)Meunier, Delattre, Araujo, and Allauzen]{meunier2022dynamical}
Meunier, L., Delattre, B.~J., Araujo, A., and Allauzen, A.
\newblock A dynamical system perspective for {Lipschitz} neural networks.
\newblock In \emph{International Conference on Machine Learning (ICML)}, 2022.

\bibitem[Mohri et~al.(2018)Mohri, Rostamizadeh, and Talwalkar]{mohri2018foundations}
Mohri, M., Rostamizadeh, A., and Talwalkar, A.
\newblock \emph{Foundations of machine learning}.
\newblock MIT press, 2018.

\bibitem[Mueller et~al.(2023)Mueller, Eckert, Fischer, and Vechev]{mueller2022certified}
Mueller, M.~N., Eckert, F., Fischer, M., and Vechev, M.
\newblock Certified training: Small boxes are all you need.
\newblock In \emph{The Eleventh International Conference on Learning Representations (ICLR)}, 2023.

\bibitem[M{\"u}ller et~al.(2019)M{\"u}ller, Klein, and Weinmann]{muller2019orthogonal}
M{\"u}ller, J., Klein, R., and Weinmann, M.
\newblock Orthogonal wasserstein gans.
\newblock \emph{arXiv preprint arXiv:1911.13060}, 2019.

\bibitem[Paszke et~al.(2019)Paszke, Gross, Massa, Lerer, Bradbury, Chanan, Killeen, Lin, Gimelshein, Antiga, et~al.]{paszke2019pytorch}
Paszke, A., Gross, S., Massa, F., Lerer, A., Bradbury, J., Chanan, G., Killeen, T., Lin, Z., Gimelshein, N., Antiga, L., et~al.
\newblock Pytorch: An imperative style, high-performance deep learning library.
\newblock In \emph{Advances in Neural Information Processing Systems (NeurIPS)}, 2019.

\bibitem[Prach \& Lampert(2022)Prach and Lampert]{prach2022almost}
Prach, B. and Lampert, C.~H.
\newblock Almost-orthogonal layers for efficient general-purpose {Lipschitz} networks.
\newblock In \emph{European Conference on Computer Vision (ECCV)}, 2022.

\bibitem[Prach et~al.(2023)Prach, Brau, Buttazzo, and Lampert]{prach20231}
Prach, B., Brau, F., Buttazzo, G., and Lampert, C.~H.
\newblock {1-Lipschitz} layers compared: Memory, speed, and certifiable robustness.
\newblock \emph{arXiv preprint arXiv:2311.16833}, 2023.

\bibitem[Qi et~al.(2020)Qi, You, Wang, Ma, and Malik]{qi2020deep}
Qi, H., You, C., Wang, X., Ma, Y., and Malik, J.
\newblock Deep isometric learning for visual recognition.
\newblock In \emph{International Conference on Machine Learning (ICML)}, 2020.

\bibitem[Samangouei et~al.(2018)Samangouei, Kabkab, and Chellappa]{defensegan}
Samangouei, P., Kabkab, M., and Chellappa, R.
\newblock Defense-gan: Protecting classifiers against adversarial attacks using generative models.
\newblock In \emph{International Conference on Learning Representations (ICLR)}, 2018.

\bibitem[Sch{\"o}nemann(1966)]{schonemann1966generalized}
Sch{\"o}nemann, P.~H.
\newblock A generalized solution of the orthogonal procrustes problem.
\newblock \emph{Psychometrika}, 31\penalty0 (1):\penalty0 1--10, 1966.

\bibitem[Schreiber \& Parlett(1988)Schreiber and Parlett]{schreiber1988block}
Schreiber, R. and Parlett, B.
\newblock Block reflectors: Theory and computation.
\newblock \emph{SIAM Journal on Numerical Analysis}, 25\penalty0 (1):\penalty0 189--205, 1988.

\bibitem[Shafahi et~al.(2019)Shafahi, Najibi, Ghiasi, Xu, Dickerson, Studer, Davis, Taylor, and Goldstein]{shafahi2019adversarial}
Shafahi, A., Najibi, M., Ghiasi, M.~A., Xu, Z., Dickerson, J., Studer, C., Davis, L.~S., Taylor, G., and Goldstein, T.
\newblock Adversarial training for free!
\newblock In \emph{Advances in Neural Information Processing Systems (NeurIPS)}, 2019.

\bibitem[Shi et~al.(2022)Shi, Wang, Zhang, Kolter, and Hsieh]{shi2022efficiently}
Shi, Z., Wang, Y., Zhang, H., Kolter, J.~Z., and Hsieh, C.-J.
\newblock Efficiently computing local {Lipschitz} constants of neural networks via bound propagation.
\newblock In \emph{Advances in Neural Information Processing Systems (NeurIPS)}, 2022.

\bibitem[Singla \& Feizi(2021{\natexlab{a}})Singla and Feizi]{singla2021fantastic}
Singla, S. and Feizi, S.
\newblock Fantastic four: Differentiable bounds on singular values of convolution layers.
\newblock In \emph{International Conference on Learning Representations (ICLR)}, 2021{\natexlab{a}}.

\bibitem[Singla \& Feizi(2021{\natexlab{b}})Singla and Feizi]{singla2021skew}
Singla, S. and Feizi, S.
\newblock Skew orthogonal convolutions.
\newblock In \emph{International Conference on Machine Learning (ICML)}, 2021{\natexlab{b}}.

\bibitem[Singla \& Feizi(2022)Singla and Feizi]{singla2022improved}
Singla, S. and Feizi, S.
\newblock Improved techniques for deterministic l2 robustness.
\newblock In \emph{Advances in Neural Information Processing Systems (NeurIPS)}, 2022.

\bibitem[Singla et~al.(2022)Singla, Singla, and Feizi]{singla2022improvediclr}
Singla, S., Singla, S., and Feizi, S.
\newblock Improved deterministic l2 robustness on {CIFAR-10} and {CIFAR-100}.
\newblock In \emph{International Conference on Learning Representations (ICLR)}, 2022.

\bibitem[Szegedy et~al.(2014)Szegedy, Zaremba, Sutskever, Bruna, Erhan, Goodfellow, and Fergus]{szegedy2013intriguing}
Szegedy, C., Zaremba, W., Sutskever, I., Bruna, J., Erhan, D., Goodfellow, I., and Fergus, R.
\newblock Intriguing properties of neural networks.
\newblock In \emph{International Conference on Learning Representations (ICLR)}, 2014.

\bibitem[Trockman \& Kolter(2021)Trockman and Kolter]{trockman2020orthogonalizing}
Trockman, A. and Kolter, J.~Z.
\newblock Orthogonalizing convolutional layers with the {Cayley} transform.
\newblock In \emph{International Conference on Learning Representations (ICLR)}, 2021.

\bibitem[Tsuzuku et~al.(2018)Tsuzuku, Sato, and Sugiyama]{tsuzuku2018lipschitz}
Tsuzuku, Y., Sato, I., and Sugiyama, M.
\newblock {Lipschitz}-margin training: Scalable certification of perturbation invariance for deep neural networks.
\newblock In \emph{Advances in Neural Information Processing Systems (NeurIPS)}, 2018.

\bibitem[Virmaux \& Scaman(2018)Virmaux and Scaman]{LipschitzNP}
Virmaux, A. and Scaman, K.
\newblock {Lipschitz} regularity of deep neural networks: analysis and efficient estimation.
\newblock In \emph{Advances in Neural Information Processing Systems (NeurIPS)}, 2018.

\bibitem[Wang et~al.(2022)Wang, Bochkovskiy, and Liao]{wang2022yolov7}
Wang, C.-Y., Bochkovskiy, A., and Liao, H.-Y.~M.
\newblock Yolov7: Trainable bag-of-freebies sets new state-of-the-art for real-time object detectors.
\newblock \emph{arXiv preprint arXiv:2207.02696}, 2022.

\bibitem[Wang \& Manchester(2023)Wang and Manchester]{wang2023direct}
Wang, R. and Manchester, I.
\newblock Direct parameterization of {Lipschitz}-bounded deep networks.
\newblock In \emph{International Conference on Machine Learning (ICML)}, 2023.

\bibitem[Wang et~al.(2021)Wang, Zhang, Xu, Lin, Jana, Hsieh, and Kolter]{wang2021beta}
Wang, S., Zhang, H., Xu, K., Lin, X., Jana, S., Hsieh, C.-J., and Kolter, J.~Z.
\newblock Beta-crown: Efficient bound propagation with per-neuron split constraints for neural network robustness verification.
\newblock In \emph{Advances in Neural Information Processing Systems (NeurIPS)}, 2021.

\bibitem[Wang et~al.(2023)Wang, Pang, Du, Lin, Liu, and Yan]{wang2023better}
Wang, Z., Pang, T., Du, C., Lin, M., Liu, W., and Yan, S.
\newblock Better diffusion models further improve adversarial training.
\newblock In \emph{International Conference on Machine Learning (ICML)}, 2023.

\bibitem[Xiao et~al.(2018)Xiao, Bahri, Sohl-Dickstein, Schoenholz, and Pennington]{xiao2018dynamical}
Xiao, L., Bahri, Y., Sohl-Dickstein, J., Schoenholz, S., and Pennington, J.
\newblock Dynamical isometry and a mean field theory of cnns: How to train 10,000-layer vanilla convolutional neural networks.
\newblock In \emph{International Conference on Machine Learning (ICML)}, 2018.

\bibitem[Xu et~al.(2022)Xu, Li, and Li]{xu2022lot}
Xu, X., Li, L., and Li, B.
\newblock {LOT}: Layer-wise orthogonal training on improving l2 certified robustness.
\newblock In \emph{Advances in Neural Information Processing Systems (NeurIPS)}, 2022.

\bibitem[Yang et~al.(2020)Yang, Duan, Hu, Salman, Razenshteyn, and Li]{yang2020randomized}
Yang, G., Duan, T., Hu, J.~E., Salman, H., Razenshteyn, I., and Li, J.
\newblock Randomized smoothing of all shapes and sizes.
\newblock In \emph{International Conference on Machine Learning (ICML)}, 2020.

\bibitem[Yu et~al.(2022)Yu, Li, Cai, and Li]{yu2021constructing}
Yu, T., Li, J., Cai, Y., and Li, P.
\newblock Constructing orthogonal convolutions in an explicit manner.
\newblock In \emph{International Conference on Learning Representations (ICLR)}, 2022.

\bibitem[Zhang et~al.(2022)Zhang, Wang, Xu, Li, Li, Jana, Hsieh, and Kolter]{zhang2022general}
Zhang, H., Wang, S., Xu, K., Li, L., Li, B., Jana, S., Hsieh, C.-J., and Kolter, J.~Z.
\newblock General cutting planes for bound-propagation-based neural network verification.
\newblock In \emph{Advances in Neural Information Processing Systems (NeurIPS)}, 2022.

\bibitem[Zhang et~al.(2019)Zhang, Lucas, Ba, and Hinton]{zhang2019lookahead}
Zhang, M., Lucas, J., Ba, J., and Hinton, G.~E.
\newblock Lookahead optimizer: k steps forward, 1 step back.
\newblock In \emph{Advances in Neural Information Processing Systems (NeurIPS)}, 2019.

\end{thebibliography}
\bibliographystyle{icml2025}

\clearpage
\onecolumn
\appendix


\part{Appendix}
\parttoc


\clearpage

\section{\name Layer Analysis}\label{sup:analysis_layer}

\subsection{Proof of Proposition~\ref{prop:low-rank-orthogonal}} \label{sup:pf-low-rank-orthogonal}

\newtheorem*{reprop1}{Proposition 1}
\begin{reprop1}[]
	Let $V \in \mathbb{R}^{m \times n}$ be a matrix of rank $n$, and, without loss of generality, assume $m \geq n$. Then the parameterization $W = I - 2V(V^T V)^{-1}V^T$ satisfies the following properties:
	\begin{enumerate}
		\item $W$ is orthogonal and symmetric, i.e., $W^T = W$ and $W^T W = I$.
		\item $W$ is an $n$-rank perturbation of the identity matrix, i.e., it has $n$ eigenvalues equal to $-1$ and $m-n$ eigenvalues equal to $1$.
		\item $W$ degenerates to the negative identity matrix when $V$ is a full-rank square matrix.
	\end{enumerate}
\end{reprop1}

\begin{proof}
	Assuming $V$ is as defined in Proposition~\ref{prop:low-rank-orthogonal}, the symmetry of this parameterization is straightforward to verify. The orthogonality of $W$, however, requires confirmation that the following condition is satisfied:
	\begin{align}
		W W^T & = (I - 2V(V^T V)^{-1}V^T) (I - 2V(V^T V)^{-1}V^T)^T \nonumber            \\
		      & = (I - 2V(V^T V)^{-1}V^T) (I - 2V(V^T V)^{-1}V^T) \nonumber              \\
		      & = I - 4 V(V^T V)^{-1}V^T + 4 V(V^T V)^{-1}V^T V(V^T V)^{-1}V^T \nonumber \\
		      & = I - 4 V(V^T V)^{-1}V^T + 4 V(V^T V)^{-1}V^T \nonumber                  \\
		      & = I.
	\end{align}

	Next, define $S = \{v_1, v_2, \cdots, v_n\}$ as the set of column vectors of $V$. Let $e_i$ denote the $i$-th standard basis vector in $\mathbb{R}^n$. Then, we have
	\begin{align}
		W v_i & = (I - 2 V (V^T V)^{-1} V^T) v_i \nonumber     \\
		      & = v_i - 2 V (V^T V)^{-1} V^T v_i \nonumber     \\
		      & = v_i - 2V (V^T V)^{-1} (V^T V e_i) \nonumber  \\
		      & = v_i - 2 V (V^T V)^{-1} (V^T V) e_i \nonumber \\
		      & = v_i - 2 V e_i \nonumber                      \\
		      & = v_i - 2 v_i = - v_i.
	\end{align}
	For the vectors in the orthogonal complement of $S$, denoted by $S^{\perp} = \{v_{n+1}, v_{n+2}, \cdots, v_{m}\}$, we have
	\begin{align}
		W v_i & = (I - 2 V (V^T V)^{-1} V^T) v_i = v_i.
	\end{align}
	The equality holds because, for all $v_i \in S^{\perp}$, we have $V^T v_i = 0$.

	Therefore, the eigenspace corresponding to eigenvalue $-1$ is spanned by $S$, while the eigenspace corresponding to eigenvalue $1$ is spanned by $S^{\perp}$.

	Assume $V$ is a full-rank square matrix, which implies that $V$ is invertible. Thus:
	\begin{align}
		W & = I - 2 V (V^T V)^{-1} V^T \nonumber     \\
		  & = I - 2 V V^{-1}(V^T)^{-1} V^T \nonumber \\
		  & = I - 2 I \nonumber                      \\
		  & = -I.
	\end{align}
	Thus, the proof is complete.
\end{proof}

\subsection{Proof of Proposition~\ref{prop:real-ifft}} \label{sup:pf-real-parameterization}

Before providing a rigorous mathematical proof, we first outline the intuitive rationale. To demonstrate that BRO convolution is orthogonal, we show that each component in the convolution pipeline is orthogonal. This pipeline comprises three orthogonal transformations:
\begin{enumerate}
    \item \textit{Fourier Transform:} Transforms the input from the spatial domain to the Fourier domain.
    \item \textit{BRO Matrix Multiplications:} Applies the Block Reflector Orthogonal (BRO) matrices in the Fourier domain.
    \item \textit{Inverse Fourier Transform:} Converts the output back to the spatial domain.
\end{enumerate}
Consequently, the input undergoes a sequence of orthogonal operations, ensuring norm preservation. Importantly, an essential part of the analysis is the application of the 2D convolution theorem, which establishes an equivalence between convolution in the spatial domain and pointwise multiplication in the Fourier domain.

\noindent\rule{\textwidth}{0.0pt}

\newtheorem*{reprop2}{Proposition 2}
\begin{reprop2}
	Let $\tilde{X} = \fft(X) \in {\mathbb{C}^{c \times s \times s}}$ and $\tilde{V} = \fft(V) \in {\mathbb{C}^{c \times n \times s \times s}}$, the proposed BRO convolution $Y=\fft^{-1}(\tilde{Y})$, where $\tilde{Y}_{:, i, j} = \tilde{W}_{:, :, i, j} \tilde{X}_{:, i, j}$ and $\tilde{W}_{{:, :, i, j}} = I - 2 \tilde{V}_{{:, :, i, j}} (\tilde{V}_{:, :, i, j}^* \tilde{V}_{:, :, i, j})^{-1} \tilde{V}_{:, :, i, j}^*$, is a real, \emph{orthogonal} multi-channel 2D circular convolution.
\end{reprop2}

\begin{proof}
	To establish the results of Proposition~\ref{prop:real-ifft}, we first present several supporting lemmas.

	\newcommand{\vecc}{\operatorname{vec}}
	\newcommand{\diag}{\operatorname{diag}}
	\newcommand{\kron}{\otimes}
	\newcommand{\Real}{\mathbb{R}}
	\newcommand{\Comp}{\mathbb{C}}
	\newcommand{\conv}{\mathsf{conv}}
	\newcommand{\iconv}{\mathsf{conv}^{-1}}
	\newcommand{\cin}{c_{\mathsf{in}}}
	\newcommand{\cout}{c_{\mathsf{out}}}
	\newcommand{\cF}{\mathcal{F}}

	\begin{lemma}
		(2D convolution theorem)
		Let $X, W \in \mathbb{R}^{s \times s}$, and $F \in \mathbb{R}^{s \times s}$ be the DFT matrix. Then, $\tilde{X}=\fft(X)=FXF$, i.e., the DFT is applied to the rows and columns of \( X \).
		In addition, let the 2D circular convolution of $X$ with $W$ be $\conv_W(X) \in \mathbb{R}^{s \times s}$. It follows that $$\tilde{W} \odot \tilde{X} = FWF \odot FXF = F\conv_W(X)F,$$ where $\odot$ is the element-wise product.
	\end{lemma}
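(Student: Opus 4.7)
The plan is to verify the identity by direct entrywise computation with the DFT matrix, since at this level of generality everything reduces to bookkeeping with roots of unity modulo $s$, plus one change of variables in a double sum.

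First I would unpack the DFT matrix as $F_{p,i}=\omega^{pi}$ with $\omega=e^{-2\pi \mathrm{i}/s}$, noting that $F$ is symmetric ($F_{p,i}=F_{i,p}$), so that $FXF$ applies a 1D DFT along both axes of $X$. A short calculation then yields the separable entrywise formula
\begin{equation}
\tilde X_{p,q}=(FXF)_{p,q}=\sum_{i,j}X_{i,j}\,\omega^{ip+jq},
\end{equation}
and analogously for $\tilde W_{p,q}$. This is the only place the specific form of $F$ is used.

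Next I would expand $(F\,\conv_W(X)\,F)_{p,q}$ using the formula above together with the definition of 2D circular convolution, $(\conv_W(X))_{i,j}=\sum_{a,b}W_{a,b}\,X_{i-a,\,j-b}$, with indices taken modulo $s$. Interchanging the order of summation, I would perform the substitution $(i',j')=(i-a,\,j-b)\bmod s$, which is legitimate because summation over a complete residue system modulo $s$ is translation-invariant. The key observation is the factorization $\omega^{ip+jq}=\omega^{ap+bq}\cdot\omega^{i'p+j'q}$, which is valid since $\omega^{s}=1$ so exponents only matter modulo $s$. This decouples the sum into a product, and recognizing each factor gives
\begin{equation}
(F\,\conv_W(X)\,F)_{p,q}=\Bigl(\sum_{a,b}W_{a,b}\,\omega^{ap+bq}\Bigr)\Bigl(\sum_{i',j'}X_{i',j'}\,\omega^{i'p+j'q}\Bigr)=\tilde W_{p,q}\cdot\tilde X_{p,q},
\end{equation}
which is exactly the elementwise product identity $\tilde W\odot\tilde X$.

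The main obstacle is simply keeping the two-index cyclic substitution clean; there is no deeper content. Two minor points worth flagging in the write-up: (i) the symmetry of $F$ (rather than needing $F^{T}$) is what allows the statement to be phrased as $FXF$ on both sides of the convolution; and (ii) depending on whether $F$ is taken with or without the $1/\sqrt{s}$ normalization, the identity holds exactly or up to a global scalar, but the structural claim is unaffected, so I would adopt the unnormalized convention that matches the statement as written.
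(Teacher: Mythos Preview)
Your proof is correct; the entrywise DFT computation followed by the cyclic change of variables is the standard argument for this classical result. The paper does not actually supply a proof of this lemma: it is stated as a supporting fact (with an implicit reference to the signal-processing literature) and used without further justification in the proof of Proposition~2, so there is nothing to compare against beyond noting that your write-up fills in what the paper leaves as background.

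One cosmetic remark: the lemma as stated in the paper writes $F\in\mathbb{R}^{s\times s}$, but the DFT matrix is complex-valued for $s>2$; you implicitly (and correctly) work over $\mathbb{C}$, which is what is needed.
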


	Next, we introduce the multi-channels 2D circular convolution. Following \citet{trockman2020orthogonalizing},
	we flatten the four-dimension tensors into matrices to facilitate the analysis.
	Let an input image with $\cin$ input channels represent $X \in \mathbb{R}^{\cin \times s \times s}$, it can be vectorized into $\mathcal{X} = [\vecc^T(X_1),...,\vecc^T(X_{\cin})]^T \in \mathbb{R}^{\cin s^2}$.
	Similarly, the vectorized output is $\mathcal{Y} = [\vecc^T(Y_1),...,\vecc^T(Y_{\cout})]^T \in \mathbb{R}^{\cout s^2}$.
	Then, we have a 2D circular convolution operation with $\mathcal{C} \in \Real^{\cout s^2 \times \cin s^2}$ such that $\mathcal{Y}=\mathcal{C}\mathcal{X}$.
	Note that $\mathcal{C}$ has $\cout \times \cin$ blocks with size $s^2 \times s^2$.

	\begin{lemma}\label{lemma:diag}
		\citep[Corollary A.1.1]{trockman2020orthogonalizing}
		If $\mathcal{C} \in \Real^{\cout s^2 \times \cin s^2}$ represents a 2D circular convolution with $\cin$ input channels
		and $\cout$ output channels, then it can be block diagonalized as
		\begin{align}
			\cF_{\cout} \mathcal{C} \cF_{\cin}^* = \mathcal{D},
		\end{align}
		where $\cF_c = S_{c, s^2} \left(I_c \kron (F \kron F)\right)$,
		$S_{c, s^2}$ is a permutation matrix, $I_k$ is the identity matrix of order $k$, and $\mathcal{D}$ is block diagonal
		with $s^2$ blocks of size $\cout \times \cin$. Note that $\kron$ is the Kronecker product.
	\end{lemma}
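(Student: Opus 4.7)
The plan is to derive the block diagonalization by reducing to the well-known single-channel fact and then carefully accounting for the channel structure via the permutation. First, I would recall that a 2D circular convolution of an $s\times s$ image with an $s\times s$ kernel corresponds, after vectorization, to multiplication by a block-circulant-with-circulant-blocks (BCCB) matrix in $\mathbb{R}^{s^2\times s^2}$, and that every BCCB matrix is diagonalized by the 2D DFT: there exists a diagonal matrix $D\in\mathbb{C}^{s^2\times s^2}$ (whose entries are the 2D DFT of the kernel, up to normalization) such that $(F\otimes F)\,C\,(F\otimes F)^{*}=D$. I would state this once cleanly and use it as a black box for each input–output channel pair.

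Second, I would unpack the multi-channel convolution matrix in block form. Writing $\mathcal{C}$ as a $c_{\mathsf{out}}\times c_{\mathsf{in}}$ array of BCCB blocks $C_{ij}\in\mathbb{R}^{s^2\times s^2}$, where $C_{ij}$ is the single-channel circular convolution from input channel $j$ to output channel $i$, I would compute
\begin{equation}
\bigl(I_{c_{\mathsf{out}}}\otimes (F\otimes F)\bigr)\,\mathcal{C}\,\bigl(I_{c_{\mathsf{in}}}\otimes (F\otimes F)\bigr)^{*}
\;=\;\bigl[\,(F\otimes F)\,C_{ij}\,(F\otimes F)^{*}\,\bigr]_{i,j}
\;=\;[\,D_{ij}\,]_{i,j}.
\end{equation}
The mixed-product identity for the Kronecker product makes the first equality routine, and the BCCB diagonalization supplies the second. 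The result is a $c_{\mathsf{out}}s^2\times c_{\mathsf{in}}s^2$ matrix whose every block is diagonal — so it has the ``channel-major, frequency-minor'' sparsity pattern.

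Third, I would introduce the permutation. Let $S_{c,s^2}$ be the perfect-shuffle (or commutation) permutation that, applied to a vectorized tensor arranged in channel-major / spatial-minor order, reorders it into spatial-major / channel-minor order; equivalently, $S_{c,s^2}$ satisfies the Kronecker commutation identity so that $S_{c,s^2}(I_c\otimes M)S_{c,s^2}^{*}=M\otimes I_c$ for an appropriately sized $M$. Defining $\mathcal{F}_c := S_{c,s^2}\bigl(I_c\otimes(F\otimes F)\bigr)$, I would then show
\begin{equation}
\mathcal{F}_{c_{\mathsf{out}}}\,\mathcal{C}\,\mathcal{F}_{c_{\mathsf{in}}}^{*}
\;=\;S_{c_{\mathsf{out}},s^2}\,[\,D_{ij}\,]_{i,j}\,S_{c_{\mathsf{in}},s^2}^{*}.
\end{equation}
The left permutation groups the rows by spatial-frequency index $(p,q)\in\{0,\dots,s-1\}^2$ and, within each group, by output channel; the right permutation does the same for columns and input channels. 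Because each $D_{ij}$ is diagonal in the frequency index, only entries sharing the same $(p,q)$ survive, yielding a block-diagonal matrix with $s^2$ blocks of size $c_{\mathsf{out}}\times c_{\mathsf{in}}$ whose $(p,q)$-th block has $(i,j)$ entry equal to the $(p,q)$-th diagonal of $D_{ij}$.

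The main obstacle is not the algebra of BCCB diagonalization — that is standard — but pinning down the shuffle permutation $S_{c,s^2}$ precisely enough that the row-column regrouping argument is rigorous rather than pictorial. I would therefore devote a short paragraph to defining $S_{c,s^2}$ via its action on basis vectors $e_{a}\otimes e_{b}\mapsto e_{b}\otimes e_{a}$ (with $a\in\{1,\dots,c\}$, $b\in\{1,\dots,s^2\}$), deriving the Kronecker commutation identity, and then applying it slotwise to finish the proof. A brief remark would record that $\mathcal{F}_c$ is unitary (up to the usual DFT normalization), which will be useful downstream when arguing that orthogonality in the Fourier domain implies orthogonality of the spatial convolution.
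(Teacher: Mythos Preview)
Your proof proposal is correct and follows the standard derivation: diagonalize each BCCB block via the 2D DFT, then apply the perfect-shuffle (commutation) permutation to regroup by frequency index and obtain the block-diagonal form. Note, however, that the paper does not actually prove this lemma; it is quoted verbatim from \citet[Corollary~A.1.1]{trockman2020orthogonalizing} and used as a black box in the proof of Proposition~\ref{prop:real-ifft}, so there is no in-paper proof to compare against --- your argument is essentially the proof one would find in that reference.
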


	\begin{lemma}\label{prop:real-parameterization}
		Consider $J \in \mathbb{C}^{p \times p}$ as a unitary matrix. Define $V$ and $\tilde{V}$ such that $V = J \tilde{V} J^*$, where $V \in \mathbb{R}^{p \times p}$ and $\tilde{V} \in \mathbb{C}^{p \times p}$. Let $\mathsf{BRO}(V) = \complexpara$ be our parameterization. Then,
		\begin{align}\label{eq:bro_JVJ}
			\mathsf{BRO}(V) = J \mathsf{BRO}(\tilde{V}) J^*.
		\end{align}
	\end{lemma}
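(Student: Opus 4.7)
The plan is to verify the identity by direct algebraic manipulation, exploiting the unitarity relation $J^*J = JJ^* = I$ at each occurrence of $J^*J$ produced by expanding the substitution $V = J\tilde{V}J^*$. Since both sides of \eqref{eq:bro_JVJ} are rational expressions in $V$ and $\tilde{V}$ with only one inverse, no structural argument is needed beyond bookkeeping.

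First, I would substitute $V = J\tilde{V}J^*$ into the Gram matrix and simplify:
\begin{equation*}
V^{*}V \;=\; (J\tilde{V}J^{*})^{*}(J\tilde{V}J^{*}) \;=\; J\tilde{V}^{*}(J^{*}J)\tilde{V}J^{*} \;=\; J(\tilde{V}^{*}\tilde{V})J^{*}.
\end{equation*}
Because $J$ is unitary, the inverse of this conjugated expression is simply the conjugation of the inverse, giving $(V^{*}V)^{-1} = J(\tilde{V}^{*}\tilde{V})^{-1}J^{*}$; this is the only place where invertibility is used, and it reduces to the invertibility of $\tilde{V}^{*}\tilde{V}$, which is part of the hypothesis inherited from Proposition~\ref{prop:low-rank-orthogonal}.

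Next I would assemble the outer-product term in $\mathsf{BRO}(V)$, again collapsing two copies of $J^{*}J$:
\begin{equation*}
V(V^{*}V)^{-1}V^{*} \;=\; J\tilde{V}J^{*}\cdot J(\tilde{V}^{*}\tilde{V})^{-1}J^{*}\cdot J\tilde{V}^{*}J^{*} \;=\; J\bigl[\tilde{V}(\tilde{V}^{*}\tilde{V})^{-1}\tilde{V}^{*}\bigr]J^{*}.
\end{equation*}
Finally, rewriting the identity as $I = JJ^{*}$ and factoring yields
\begin{equation*}
\mathsf{BRO}(V) \;=\; JJ^{*} - 2J\bigl[\tilde{V}(\tilde{V}^{*}\tilde{V})^{-1}\tilde{V}^{*}\bigr]J^{*} \;=\; J\bigl(I - 2\tilde{V}(\tilde{V}^{*}\tilde{V})^{-1}\tilde{V}^{*}\bigr)J^{*} \;=\; J\,\mathsf{BRO}(\tilde{V})\,J^{*},
\end{equation*}
which is the desired equality.

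There is no real obstacle in this proof; the only subtlety worth flagging is that the statement implicitly requires $\tilde{V}^{*}\tilde{V}$ to be invertible so that $\mathsf{BRO}(\tilde{V})$ is even well-defined, which follows from the rank hypothesis carried over via the unitary change of basis (rank is preserved under $V \mapsto JVJ^{*}$). Beyond that, the lemma is purely a conjugation-equivariance statement about a matrix function built from $V$, $V^{*}$, and $(V^{*}V)^{-1}$, and such expressions are automatically equivariant under unitary similarity; the calculation above is the concrete verification of this general principle in the present notation, which we will invoke in the proof of Proposition~\ref{prop:real-ifft} with $J$ chosen as the 2D DFT matrix.
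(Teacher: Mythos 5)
Your proposal is correct and follows essentially the same route as the paper's proof: both substitute $V = J\tilde{V}J^*$, collapse the $J^*J$ factors by unitarity, and use $(JAJ^*)^{-1} = JA^{-1}J^*$ to pull the conjugation through the Gram-matrix inverse (the paper merely writes the computation as $J^*\,\mathsf{BRO}(V)\,J = \mathsf{BRO}(\tilde{V})$ rather than in your rearranged form). Your added remark that the invertibility of $\tilde{V}^*\tilde{V}$ is inherited under the unitary change of basis is a reasonable clarification the paper leaves implicit.
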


	\begin{proof}
		Assume $J$ and $V$ are as defined in Lemma~\ref{prop:real-parameterization}. Then
		\begin{align}
			J^* \mathsf{BRO}(V) J
			 & = J^* (\realpara) J \nonumber                                                                     \\
			 & = I - 2 (J^* J \tilde{V}J^*) [(J \tilde{V}^* \tilde{V}J^*)]^{-1}  (J \tilde{V}^* J^* J) \nonumber \\
			 & = I - 2 (\tilde{V}J^*) [(J \tilde{V}^* \tilde{V}J^*)]^{-1}  (J \tilde{V}^*) \nonumber             \\
			 & =  I - 2 \tilde{V} (\tilde{V}^* \tilde{V})^{-1} \tilde{V}^* \tag{$\star$}                         \\
			 & = \mathsf{BRO}(\tilde{V}). \nonumber
		\end{align}
		The equality at ($\star$) holds because
		$$
			(\tilde{V}^* \tilde{V})^{-1} = J^* (J \tilde{V}^* \tilde{V} J^*)^{-1} J.
		$$
	\end{proof}


	We begin the proof under the assumption that the number of input channels equals the number of output channels, i.e., \(\cin = \cout = c\). According to Lemma~\ref{lemma:diag}, the stacked weight matrix \(\mathcal{C} \in \mathbb{R}^{\cout s^2 \times \cin s^2}\) can be diagonalized as follows:
	\begin{align}\label{eq:cfdf}
		\mathcal{C} = \mathcal{F}_{c}^* \mathcal{D} \mathcal{F}_{c}.
	\end{align}
	where \(\mathcal{F}_{c}^*\) and \(\mathcal{F}_{c}\) are unitary matrices, and \(\mathcal{D}\) is a block diagonal matrix.

	Note that since \(\mathcal{D}\) is block diagonal, the BRO transformation of \(\mathcal{D}\) can be expressed as:
	\[
		\mathsf{BRO}(\mathcal{D}) = \mathsf{BRO}(\mathcal{D}_1) \oplus \mathsf{BRO}(\mathcal{D}_2) \oplus \dots \oplus \mathsf{BRO}(\mathcal{D}_{s^2}),
	\]
	where $\oplus$ denotes the direct sum. This is because each block \(\mathcal{D}_k\) for \(k = 1, \dots, s^2\) is independently transformed by the BRO operation.
	Additionally, because the original weight matrix \(\mathcal{C}\) is real, the BRO convolution \(\mathsf{BRO}(\mathcal{C})\) remains real as well.

	Applying Lemma~\ref{prop:real-parameterization} on Equation~\ref{eq:cfdf}, we consider a real vectorized input \(\mathcal{X}\). The output of the BRO convolution is given by:
	\begin{align}\label{eq:whole_pipeline}
		\mathcal{Y} = \mathsf{BRO}(\mathcal{C})\mathcal{X} = \mathcal{F}_{c}^* \mathsf{BRO}(\mathcal{D}) \mathcal{F}_{c} \mathcal{X}.
	\end{align}
	This ensures that \(\mathcal{Y}\) is real. Consequently, Algorithm~\ref{alg:bro-conv} is guaranteed to produce a real output when given a real input \(\mathcal{X}\).

	Finally, the orthogonality of the BRO convolution operation \(\mathsf{BRO}(\mathcal{C})\) can be derived as follows.
	Since both \(\mathcal{F}_{c}^*\) and \(\mathcal{F}_{c}\) are unitary matrices \citep{trockman2020orthogonalizing}, and \(\mathsf{BRO}(\mathcal{D})\) is unitary as well, the composition of these unitary operations preserves orthogonality.

	Thus, we have established that the BRO convolution operation, $\mathsf{BRO}(\mathcal{C})$, is both orthogonal and real, thereby completing the proof of Proposition~\ref{prop:real-ifft}.

\end{proof}




\subsection{Implementation Details and Analysis of Semi-Orthogonal Layer} \label{sup:semi-ortho}

To derive the parameterization of semi-orthogonal matrices, we first construct an orthogonal matrix \( W \) and then truncate it to the required dimensions.
Specifically, for BRO convolution, let the input and output channel sizes be \( c_{\text{in}} \) and \( c_{\text{out}} \), respectively.
Define \( c = \max(c_{\text{out}}, c_{\text{in}}) \). For each index \( i \) and \( j \),
we parameterize \( \tilde{V}_{:,:,i,j} \in \mathbb{C}^{c \times n} \) as \( \tilde{W}_{:,:,i,j} \in \mathbb{C}^{c \times c} \),
which is then truncated to \( \tilde{W}_{:c_{\text{out}},:c_{\text{in}},i,j} \in \mathbb{C}^{c_{\text{out}} \times c_{\text{in}}} \).

In the following, we provide the detailed analysis of semi-orthogonal BRO layers, which can be categorized into two types: dimension expanding layers and dimension reduction layers. To facilitate understanding, we begin with the dense version of BRO (a single 2D matrix).

For an expanding layer constructed with $W \in \mathbb{R}^{d_\text{out} \times d_\text{in}}$, where $d_\text{in} < d_\text{out}$, it satisfies the condition $W^T W = I_{d_\text{in}}$.
Since the condition is equivalent to ensure that the columns are orthonormal, the norm of a vector is preserved when projecting onto its column space, which means $\lVert W x \rVert = \lVert x\rVert$ for every $x \in \mathbb{R}^{d_\text{in}}$, thus, ensures 1-Lipschitz property.

For a reduction layer constructed with $W \in \mathbb{R}^{d_\text{out} \times d_\text{in}}$, where $d_\text{in} > d_\text{out}$, it satisfies the condition $W W^T = I_{d_\text{out}}$.
Unlike expanding layers, the columns of $W$ in reduction layers cannot be orthonormal due to the dimensionality constraint $d_\text{in} > d_\text{out}$.
Consequently, we have the following relationship for every $x \in \mathbb{R}^{d_\text{in}}$, $\lVert W x \rVert \leq \lVert x\rVert$.
The equality holds if $x$ lies entirely within the subspace spanned by the rows of $W$.
In general, reduction layers do not preserve the norm of input vectors. However, they remain 1-Lipschitz bounded, ensuring that the transformation does not amplify the input norm.

The same principles apply to BRO convolution. The primary difference between the dense and convolution versions of BRO layers arises from the dimensions they are applied to. Therefore, the results discussed for BRO dense layers also hold for BRO convolution layers. Figure~\ref{fig:operation} visualizes BRO convolution under three different dimensional settings, illustrating the behavior of channel-expanding and channel-reduction operations in convolutional contexts.

\begin{figure}
	\centering
	\includegraphics[width=0.6\linewidth]{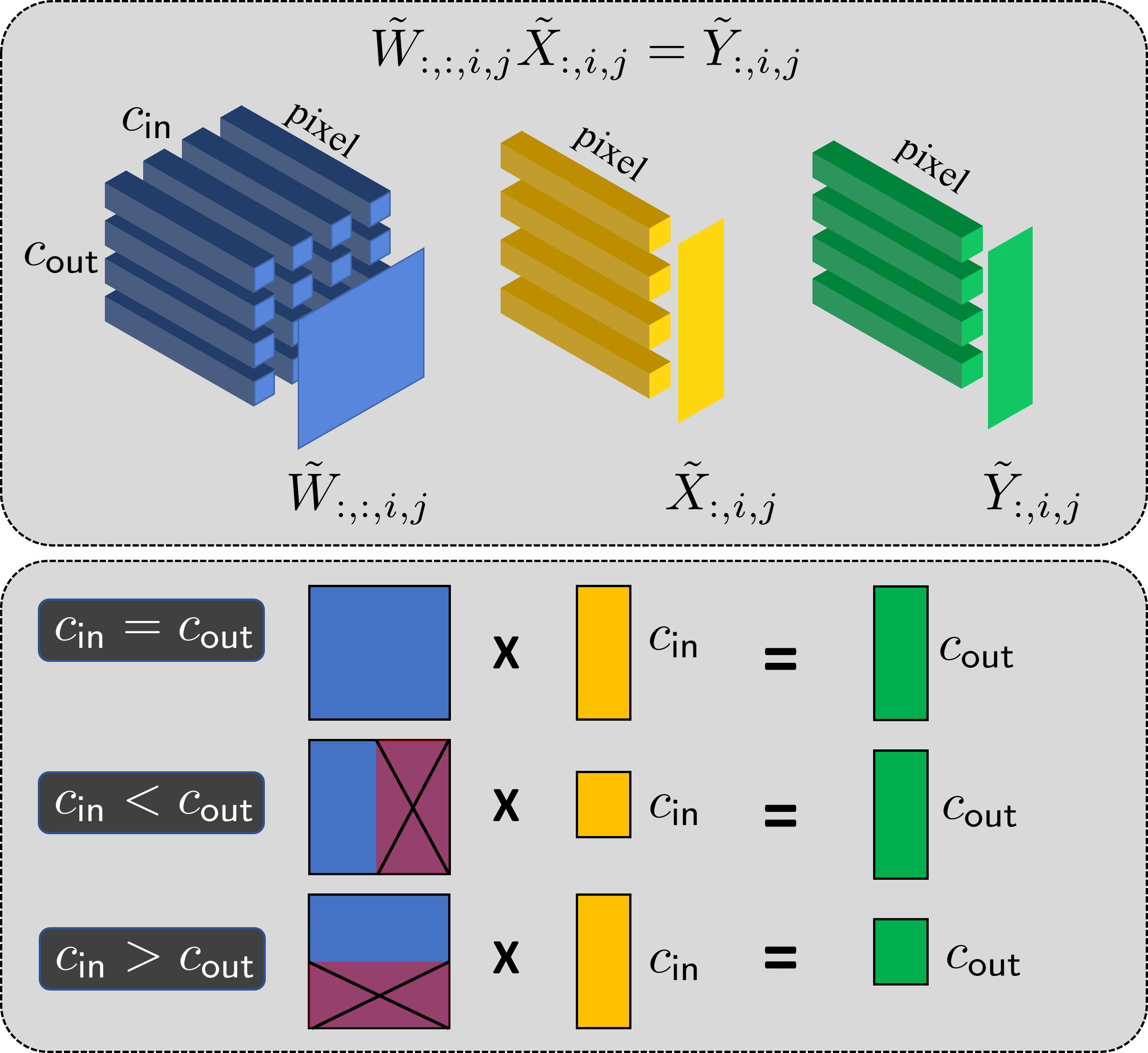}
	\caption{Visualization of BRO convolution for different $c_\text{in}$ and $c_\text{out}$.}
	\label{fig:operation}
\end{figure}

\subsection{The Implementation and Effect of Zero-padding} \label{sup:zero-padding}

Following \citet{xu2022lot}, we apply zero-padding on images $X \in {\mathbb{R}^{c \times s \times s}}$, creating $X_{\text{pad}} \in {\mathbb{R}^{c \times (s+2k') \times (s+2k')}}$, before performing the 2D $\text{FFT}$. After applying $\text{FFT}^{-1}$, we obtain $Y_{\text{pad}} \in {\mathbb{R}^{c \times (s+2k') \times (s+2k')}}$, from which the padded pixels are removed to restore the original dimensions of $X$, resulting in $Y \in {\mathbb{R}^{c \times s \times s}}$.
This approach leverages zero-padding to alleviate the effect of circular convolution across edges, which empirically improves performance. 
Note that the orthogonalized kernel will be a sparse $s+2k'$ kernel with values propagated from the original $k$ size kernel. 
Since circular convolution is applied to the entire image, including the zero-padded regions of $X_{\text{pad}}$, these padded areas can acquire information from the central spatial region.
As norm preservation only holds for $||X||=||X_{\text{pad}}||=||Y_{\text{pad}}||$, removing pixels from $Y_{\text{pad}}$ will cause a slight norm drop.
Importantly, it does not affect the validity of the certified results, as neither zero-padding nor the removal of padded parts expands the norm or violates the 1-Lipschitz bound.

For the BRONet architecture on ImageNet, we retain the padded regions in the output. Empirically, we found this beneficial, as it preserves more information from the feature map. We compensate for the resulting size increase by using a larger down-sampling stride in the neck module.
For other datasets, we remove the padded regions to keep the input and output feature maps the same size. Further architectural implementation details are provided in Appendix~\ref{sup:ImpDetails}.






\subsection{Complexity Comparison of Orthogonal Layers}\label{sup:orthocomplexity}

\begin{figure}
	\centering
	\includegraphics[width=0.85\linewidth]{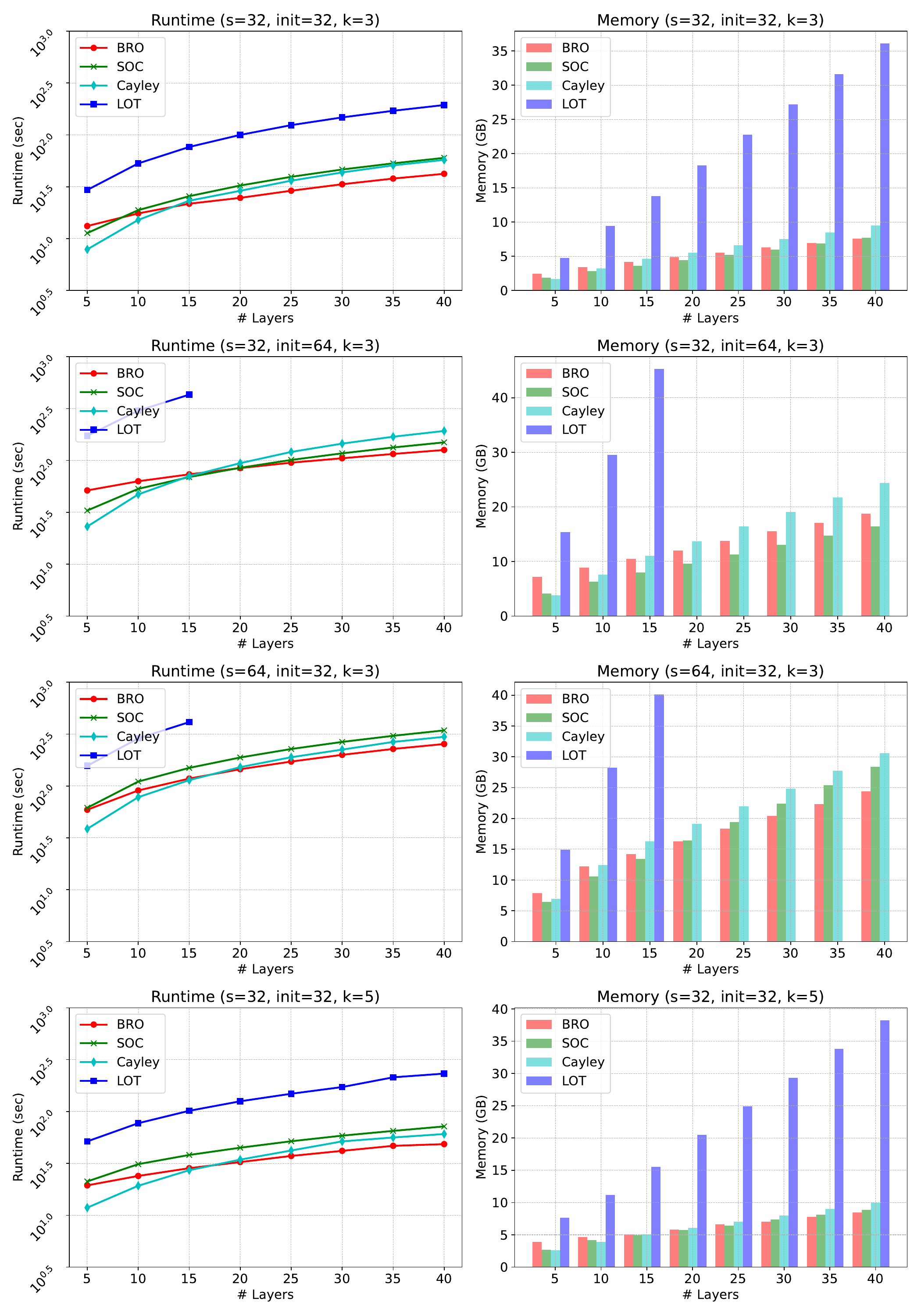}
	\caption{Demonstration of the runtime and memory consumption under different settings with LipConvNet architecture. The notation $s$ denotes the input size, $\text{init}$ denote the initial channel of the the entire model, and $k$ denotes the kernel size.
		The batch sizes are fixed at 512 for all plots, and each value is the average over 10 iterations.
	}
	\label{fig:complexity_practical_all}
\end{figure}

\begin{figure}
	\centering
	\includegraphics[width=0.85\linewidth]{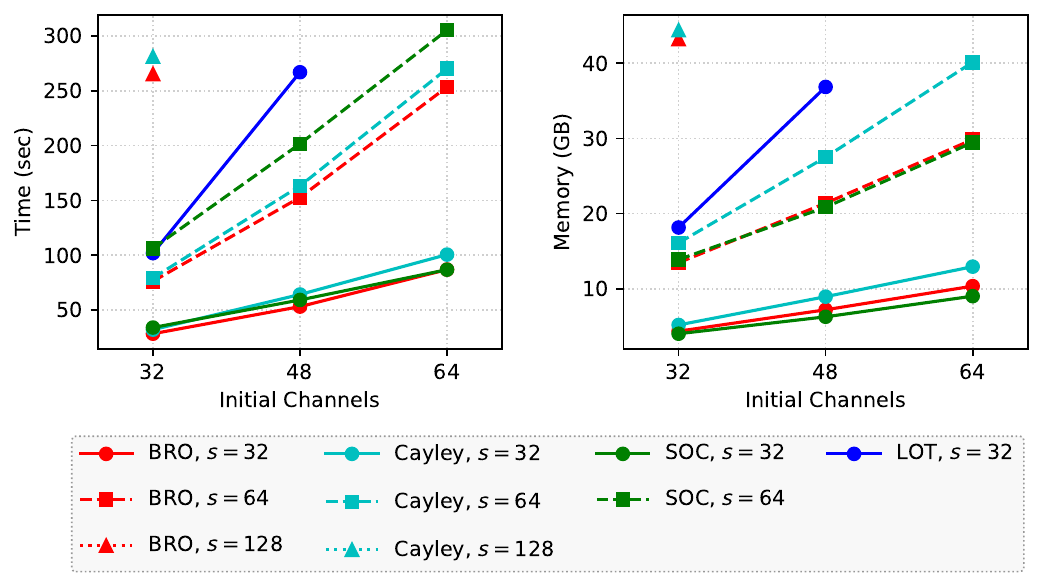}
	\caption{Demonstration of the runtime and memory consumption under different settings with 
 LipConvNet architecture. For each line, we keep the input size $s$ fixed while varying the initial channel.
		The batch sizes are fixed at 512 for all plots, and each value is the average over 10 iterations.
	}
	\label{fig:complexity_practical_all_2}
\end{figure}

In this section, we demonstrate the computational and memory advantages of the proposed method by analyzing its complexity compared to prior work. We use conventional notation from~\citet{prach20231}.
We focus on algorithmic complexity and required memory, particularly in terms of \textit{multiply-accumulate operations (MACs)}. The detailed complexity comparison is presented in Table~\ref{tab:complexity_theoretical}.

The analysis has two objectives: input transformation and parameter transformation.
The computational complexity and memory requirements of the forward pass during training are the sum of the respective MACs and memory needs. The backward pass has the same complexity and memory requirements, increasing the overall complexity by a constant factor.
In addition to theoretical complexity, we report the practical time and memory usage for different orthogonal layers under various settings in Figure~\ref{fig:complexity_practical_all} and Figure~\ref{fig:complexity_practical_all_2}.

In the following analysis, we consider only dimension-preserving layers, where the input and output channels are equal, denoted by $c$.
Define the input size as $s \times s \times c$, the batch size as $b$, the kernel size as $k \times k$, the number of inner iterations of a method as $t$, and the rank-control factor for \name as $\kappa$, as listed in Table~\ref{tab:complexity_notation}.
To simplify the analysis, we assume $c > \log_2(s)$.
Under the PyTorch~\citep{paszke2019pytorch} framework, we can also assume that rescaling a tensor by a scalar and adding two tensors do not require extra memory during back-propagation.

\textbf{Standard Convolution}
In standard convolutional layers, the computational complexity of the input transformation is $C=b s^2 c^2 k^2$ MACs, and the memory requirement for input and kernel are $M = b s^2 c$ and $P = c^2 k^2$, respectively.
Additionally, these layers do not require any computation for parameter transformation.

\textbf{\soc}
For the \soc layer, $t$ convolution iterations are required. Thus, the input transformation requires computation complexity and memory $t$ times that of standard convolution.
For the parameter transformation, a kernel re-parameterization is needed to ensure the Jacobian of the induced convolution is skew-symmetric.
During training, the SOC layer applies Fantastic Four~\citep{singla2021fantastic} technique to bound the spectral norm of the convolution, which incurs a cost of $c^2 k^2 t$.
The memory consumption remains the same as standard convolution.

\textbf{\lot} The \lot layer achieves orthogonal convolution via Fourier domain operations. Applying the Fast Fourier Transform (FFT) to inputs and weights has complexities of $\mathcal{O}(b c s^2 \log(s^2))$ and $\mathcal{O}(c^2 s^2 \log(s^2))$, respectively. Subsequently, $s^2$ matrix orthogonalizations are required using the transformation $V (V^T V)^{-\frac{1}{2}}$. The Newton Method is employed to find the inverse square root. Specifically, let $Y_0 = V^T V$ and $Z_0 = I$, then $Y_i$ is defined as
\begin{equation}
	Y_{i+1} = \frac{1}{2} Y_i \left(3I - Z_i Y_i \right),\quad Z_{i+1} = \frac{1}{2} \left(3I - Z_i Y_i\right) Z_i.
\end{equation}
This iteration converges to $(V^T V)^{-\frac{1}{2}}$. Executing this procedure involves computing $4 s^2 t$ matrix multiplications, requiring about $4 s^2 c^3 t$ MACs and $4 s^2 c^2 t$ memory. The final steps consist of performing $\frac{1}{2} b s^2$ matrix-vector products, requiring $\frac{1}{2} b s^2 c^2$ MACs, as well as the inverse FFT.
Given our assumption that $c > \log(s^2)$, the FFT operation is dominated by other operations.
Considering the memory consumption, \lot requires padding the kernel from a size of $c \times c \times k \times k$ to $c \times c \times s \times s$, requiring $b s^2 c^2$ memory.
Additionally, we need to keep the outputs of the FFT and the matrix multiplications in memory, requiring about $4 s^2 c^2 t$ memory each.

\textbf{\name} Our proposed \name layer also achieves orthogonal convolution via Fourier domain operations.
Therefore, the input transformation requires the same computational complexity as \lot.
However, by leveraging the symmetry properties of the Fourier transform of a real matrix, we reduce both the memory requirement and computational complexity by half.
During the orthogonalization process, only $\frac{1}{2} s^2$ are addressed.
The low-rank parameterization results in a complexity of approximately $s^2 c^3 \kappa$ and memory usage of $\frac{1}{2} s^2 c^2$. Additionally, we need to keep the outputs of the FFT, the matrix inversion, and the two matrix multiplications in memory, requiring about $\frac{1}{2} s^2 c^2 t$ memory each.

\textbf{Cayley} Like \name, the Cayley layer achieves orthogonal convolution in the Fourier domain and leverages real-matrix symmetry to reduce the input-transformation memory. Thus, its input-transformation complexity and memory match those of \name. Its parameter-transformation, however, has complexity $s^2 c^3$ with memory usage of $\frac{1}{2} s^2 c^2$. Additionally, we need to keep the outputs of the FFT, the matrix inversion, and the single matrix multiplications in memory similarly demands about $\frac{1}{2} s^2 c^2 t$ each.

\begin{table}[t]
  \centering
  \caption{Notation used in this section.}
    \vspace{0.2cm}
    \scalebox{1.0}{
    \begin{tabular}{l c}
        \toprule
        \textbf{Notation} & \textbf{Description} \\
        \midrule
        $b$ & batch size \\
        $k$ & kernel size \\
        $c$ & input/output channels \\
        $s$ & input size (resolution) \\
        $t$ & number of internal iterations \\
        $\kappa$ & Rank-Control factor for \name \\
        \bottomrule
    \end{tabular}
    }
  \label{tab:complexity_notation}
\end{table}

\begin{table}[t]
	\centering
	\caption{Computational complexity and memory requirements of different methods. We report multiply-accumulate operations (MACS) as well as memory requirements for batch size $b$, input size $s \times s \times c$, kernel size $k \times k$ and number of inner iterations $t$ for \soc and \lot, rank-control factor $\kappa \in \left[0,1\right]$ for \name.
		We denote the complexity and memory requirement of standard convolution as $C = b s^2 c^2 k^2$, $M = b s^2 c$, and $P = c^2 k^2$, respectively.}
        \vspace{0.2cm}
	\renewcommand{\arraystretch}{1.2} 
	\scalebox{0.93}{
		\begin{tabular}{l l l l l }
			\toprule
			\textbf{Method} & \multicolumn{2}{c }{\textbf{Input Transformations}} & \multicolumn{2}{c}{\textbf{Parameter Transformations}}                                                               \\
			\cmidrule(lr){2-3}\cmidrule(lr){4-5}
			                & MACS $\mathcal{O}(\cdot)$                           & Memory                                                 & MACS $ \mathcal{O}(\cdot) $ & Memory $ \mathcal{O}(\cdot) $ \\
			\midrule
			Standard        & $C$                                                 & $M$                                                    & -                           & $P$                           \\
			SOC             & $C t$                                               & $M t$                                                  & $c^2 k^2 t$                 & $P$                           \\
			LOT             & $b s^2 c^2$                                         & $3M$                                                   & $4 s^2 c^3 t$               & $4 s^2 c^2 t$                 \\
			Cayley          & $b s^2 c^2$                                         & $2.5M$                                                 & $s^2 c^3$                   & $1.5 s^2 c^2$               \\
			\name           & $b s^2 c^2$                                         & $2.5M$                                                 & $s^2 c^3 \kappa$            & $2 s^2 c^2$                   \\
			\bottomrule
		\end{tabular}
	}
	\label{tab:complexity_theoretical}
\end{table}

\section{Implementation Details} \label{sup:ImpDetails}
In this section, we will detail our computational resources, the architectures of BRONet and LipConvNet, rank-n configuration, hyper-parameters used in LA loss, and experimental settings.

\subsection{Computational Resources}
Most of the experiments are conducted on a computer with an Intel Xeon Gold 6226R processor and 192 GB of DRAM memory.
The GPU we used is the NVIDIA RTX A6000 (10,752 CUDA cores, 48 GB memory per card).
For CIFAR-10 and CIFAR-100, we used a single A6000 card for training.
For Tiny-ImageNet and diffusion data augmentation on CIFAR-10/100, we utilized distributed data parallel (DDP) across two A6000 cards for joint training.
For the two ImageNet experiments in Table~\ref{tab:best_benchmark}, they are conducted on an 8-GPU (NVIDIA H100) machine.

\subsection{Architecture Details} \label{sup:ArchDetails}
The proposed \name layer is illustrated in Figure~\ref{fig:bro_conv}. 
In this paper, we mainly use the \name layer to construct two different architectures: BRONet and LipConvNet.
We will first explain the details of BRONet, followed by an explanation of LipConvNet constructed using the \name layer.

\begin{figure}
	\centering
	\includegraphics[width=0.45\linewidth]{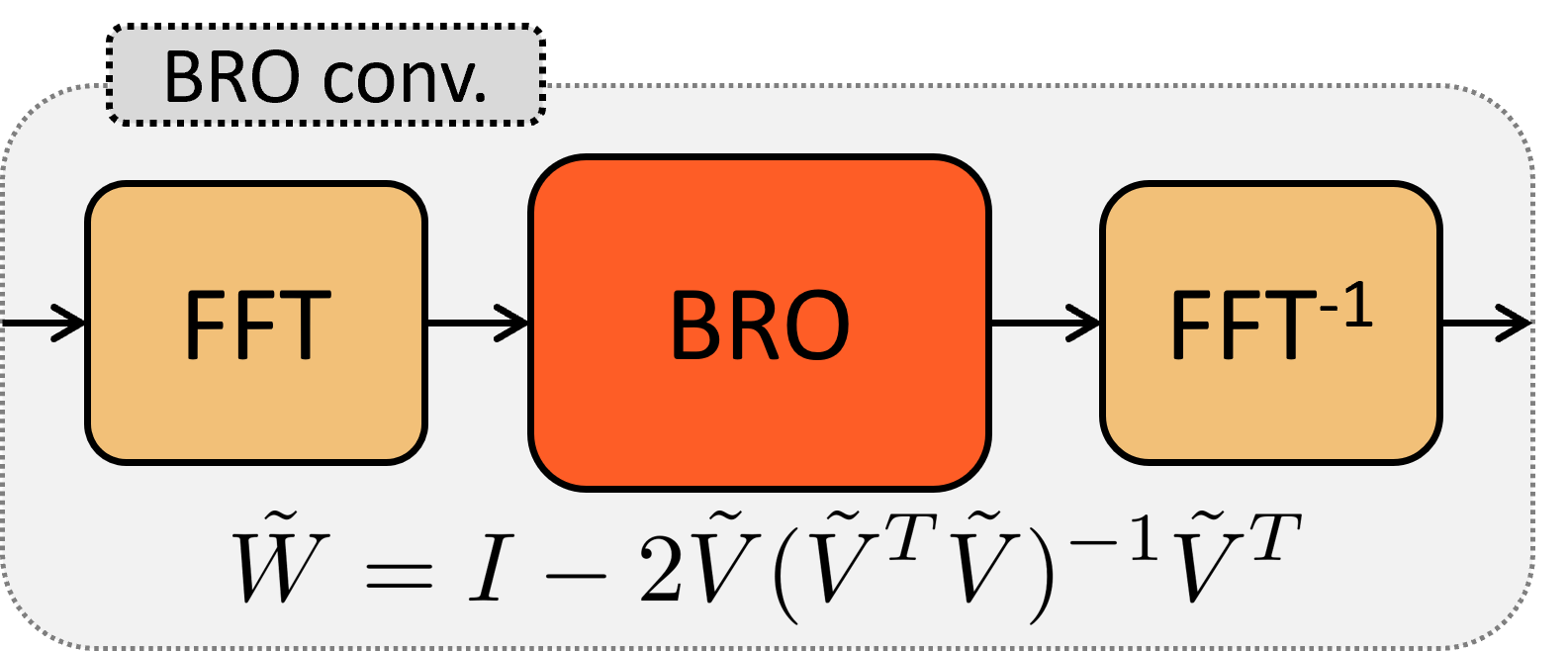}
	\caption{The proposed Block Reflector Orthogonal (BRO) convolution kernel, which is an orthogonal matrix, employs Fourier transformation to simulate the convolution operation. This convolution is inherently orthogonal and thus $1$-Lipschitz, providing guarantees for adversarial robustness. }
	\label{fig:bro_conv}
\end{figure}

\begin{figure}
	\centering
	\includegraphics[width=0.7\linewidth]{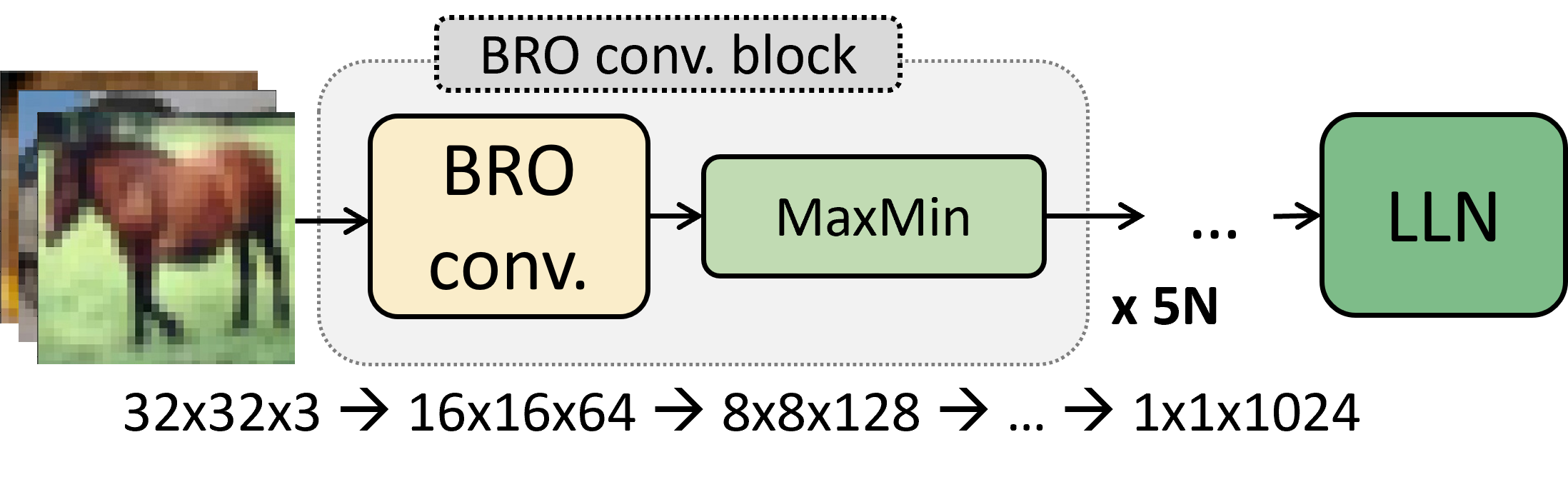}
	\caption{Following \citet{trockman2020orthogonalizing,singla2021skew,xu2022lot}, we use the proposed orthogonal convolution layer to construct the LipConvnet. This figure illustrates the LipConvnet-5, which cascades five BRO convolution layers. The activation function used is the MaxMin function, and the final layer is the last layer normalization (LLN).}
	\label{fig:lipconvnet}
\end{figure}

\paragraph{BRONet Architecture} \label{sup:bro_arch}

We design our architecture \archname similar to SLL and LiResNet. 
It consists of a stem layer for image-to-feature conversion, several convolutional backbone blocks of same channel width for feature extraction, a neck block for down-sampling and converting feature maps into flattened vectors, and multiple dense blocks followed by a spectral normalized layer~\citep{singla2022improvediclr}.
For non-linearity, MaxMin activation is used.

Compared to LiResNet with Lipschitz-regularized (Lip-reg) convolutional backbone blocks and SLL with SDP-based 1-Lipschitz layers, BRO orthogonal backbone blocks stabilize the gradient norm due to the property of orthogonal layers.
We keep the first few layers (the first stem layer for all datasets and six extra layers for ImageNet) in \archname to be Lipschitz-regularized since we empirically find it benefits the model training with a more flexible Lipschitz control in the early layers.

\begin{figure}
	\centering
	\includegraphics[width=0.6\linewidth]{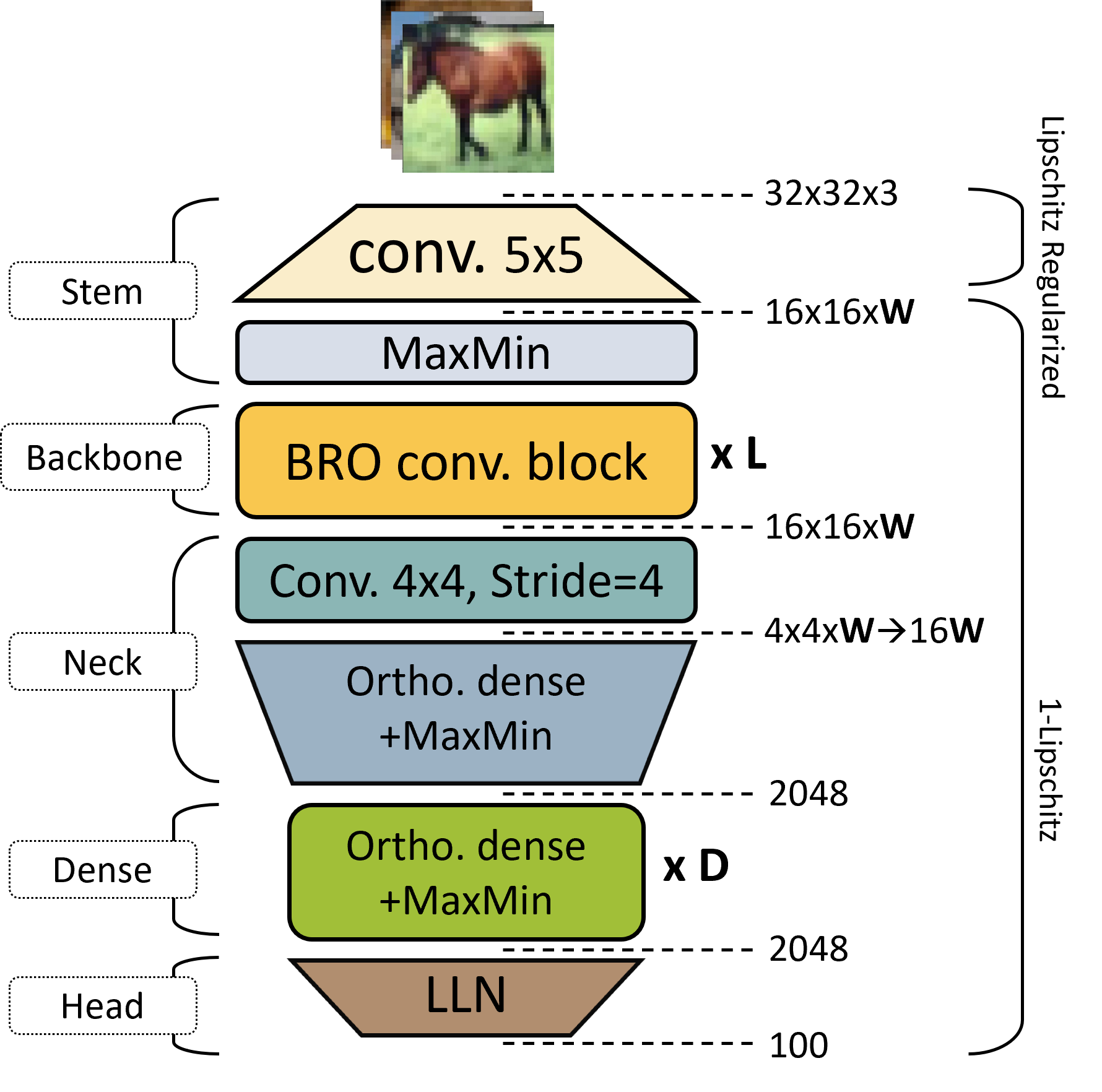}
	\caption{Following the LiResNet architecture \citep{leino2021globally, hu2023unlocking}, we utilized the BRO layer to construct \textbf{BRONet}. The parameters $L$, $W$, and $D$ can be adjusted to control the model size.}
	\label{fig:liresnet}
\end{figure}

Figure~\ref{fig:liresnet} illustrates the details of the BRONet architecture, which is comprised of several key components:
\begin{itemize}
	\item Stem: This consists of an unconstrained convolutional layer that is Lipschitz-regularized during training. The width $W$ is the feature channel dimension, which is an adjustable parameter. For the ImageNet architecture, we extend the stem layer with six unconstrained Lipschitz-regularized convolution layers of same channel width, which we empirically found beneficial for feature extraction.
	\item Backbone: This segment consists of $L$ BRO convolutional blocks with channel width $W$, each using $k=3$ kernels and constrained to be $1$-Lipschitz. Before applying BRO parameterization, we apply a fixed identity residual re-parameterization to the unconstrained parameter $V$:
$V \leftarrow I + \frac{\alpha}{\sqrt{L}} V,$
where $I$ is the identity convolutional kernel and $\alpha$ is a learned scalar. This implementation is adapted from LiResNet~\citep{hu2023unlocking, hu2024recipe}, which was designed as a training trick for normalization-free networks. We also found it to be empirically effective in our setting.
	\item Neck: This consists of a convolutional down-sampling layer followed by a dense layer, which reduces the feature dimension. For the convolutional layer, we follow LiResNet~\citep{hu2024recipe} to construct a $1$-Lipschitz matrix with dimension $(c_{\text{out}}, c_{\text{in}} \times k^2)$ and reshape it back to $(c_{\text{out}}, c_{\text{in}}, k, k)$. It is important to note that while the reshaped kernel differs from the orthogonal convolution described in \name convolutional layer, it remains $1$-Lipschitz bounded due to being non-overlapping ($\text{stride} = \text{kernel size}~k$) \citep{tsuzuku2018lipschitz}. For the ImageNet architecture, we replace the learnable convolution layer with $\ell_2$-norm patch pooling, which is a norm-preserving nonlinear pooling layer that we found to work better on the ImageNet benchmark. Furthermore, as we do not remove the $2k'$ padded regions for the ImageNet zero-padding implementation (discussed in~\ref{sup:zero-padding}), the output feature map size for each BRO convolutional layer increases. Consequently, we use a larger down-sampling stride to keep the feature map the same size before flattened.
	\item Dense: $D$ BRO or Cholesky-orthogonal~\citep{hu2024recipe} dense layers with width 2048 are appended to increase the network’s depth and enhance the model capability. The identity re-parameterization that is used in the backbone is also applied here with $L$ replaced by $D$.
	\item Head: The architecture concludes with an LLN (Last Layer Normalization) layer~\citep{singla2022improvediclr}. Note that we also use the LLN certification bound instead of $\gM_f(x)/{\sqrt{2}L}$.
\end{itemize}
We can use the $W$, $L$, and $D$ to control the model size.

\textbf{LipConvNet Architecture}

This architecture is utilized in orthogonal neural networks such as SOC and LOT.
The fundamental architecture, LipConvNet, consists of five orthogonal convolutional blocks, each serving as a down-sampling layer. The MaxMin or householder \citep{singla2022improvediclr} activation function is employed for activation, and the final layer is an affine layer such as LLN. Figure~\ref{fig:lipconvnet} provides an illustration of LipConvNet.
To increase the network depth, dimension-preserving orthogonal convolutional blocks are added subsequent to each down-sampling block; thus, the depth remains a multiple of five.

We use the notation LipConvNet-depth-width to describe the configurations.
For example, LipConvNet-10-32 indicates a network with 10 convolutional layers and an initial channel width of 32, consisting of five downsampling semi-orthogonal layers and five dimension-preserving orthogonal layers.

\subsection{Architecture and Rank-n Configuration} \label{sup:arch_config}

As mentioned in Section \ref{sec:lowrankscheme}, for \name layers with dimension $d_{\text{out}} = d_{\text{in}} = m$, we explicitly set the unconstrained parameter $V$ to be of shape $m \times n$ with $m > n$ to avoid the degenerate case. An ablation study on the effect of different choices of rank-n is presented in Appendix~\ref{sup:exprankablation}.

\textbf{BRONet}

We set $n=m/4$ for the \archname-M backbone and dense layers on CIFAR-10/100 and set $n=m/8$ for the \archname on Tiny-ImageNet.
For all other BRONet experiments, we use $n=m/2$ for the \name backbone and use Cholesky-orthogonal dense layers.
For CIFAR-10/100, \archname is configured with L12W512D8, and L6W512D4 for Tiny-ImageNet.
For the ImageNet architecture, \archname is configured with L14W588D8, with six Lipschitz-regularized layers and eight BRO convolutional layers. Specific architecture details are presented in Appendix~\ref{sup:bro_arch}.

\textbf{LipConvNet}

For LipConvNet, we set $n=m/8$ for all experiments. 

\subsection{LA Hyper-parameters} \label{app:imp-LA-hyper-sel}
Unless otherwise specified, the LA loss hyperparameters are set to $T = 0.75$, $\xi = 2$, and $\beta = 5.0$. For LipConvNet experiments, $\xi$ is set to $2\sqrt{2}$. These hyperparameters were selected based on an ablation study conducted on LipConvNet. Please see Appendix~\ref{sup:explalosshyper} for details.

\subsection{Table \ref{tab:best_benchmark} Details} \label{sup:tablebestdetails}

Mainly following \cite{hu2024recipe}, we use NAdam \citep{dozat2016incorporating} and the LookAhead Wrapper \citep{zhang2019lookahead} with an initial learning rate of $10^{-3}$, batch size of 256, and weight decay of $4 \times 10^{-5}$.
The learning rate follows a cosine decay schedule with linear warm-up during the first 20 epochs, and the model is trained for a total of 800 epochs.
We combine the LA loss with the EMMA \citep{hu2023unlocking} method to adjust non-ground-truth logit values for Lipschitz regularization on the stem layer. The target budget for EMMA is set to $\varepsilon=108/255$ and offset for LA is set to $\xi=2$.
For ImageNet experiments, the batch size is set to 1024, with EMMA target budget $\varepsilon=72/255$, and trained for a total of 400 epochs. Weight decay is removed for the ImageNet setting as we empirically found it hurt model performance.
To report the results of LiResNet \citep{hu2024recipe}, we reproduce the results in the same setting without diffusion data augmentation for fair comparison.
The experimental results are the average of three runs.
For other baselines, results are reported as found in the literature.

\subsection{Table \ref{tab:improve} Details}
In this table, we utilize diffusion-synthetic datasets pubicly available from \cite{hu2024recipe, wang2023better} for CIFAR-10 and CIFAR-100, which contain 4 million and 1 million images, respectively. Following \cite{hu2024recipe}, we augment each 256 size minibatch with 1:3 real-to-synthetic ratio, resulting in a total batch size of 1024. We have removed weight decay, as we observed it does not contribute positively to performance with diffusion-synthetic datasets. For the ImageNet dataset, we generate 2 million images using the best setting recommended EDM2 with autoguidance~\citet{Karras2024edm2, Karras2024autoguidance}, and use batch size 512 augmented with 1:1 real-to-synthetic ratio, resulting in a total batch size of 1024. All other settings remain consistent with those in Table~\ref{tab:best_benchmark}.

\subsection{Table \ref{tab:backbone} Details}
The settings are consistent with those in Table~\ref{tab:improve}, where we use the default architecture of LiResNet (L12W512D8), LA loss, and diffusion data augmentation. We replace the convolutional backbone for each Lipschitz layer.

\subsection{Table \ref{tab:ortho_bench_lite} Details}

Following the training configuration of \citet{singla2021skew}, we adopt the SGD optimizer with an initial learning rate of $0.1$, which is reduced by a factor of $0.1$ at the $50$-th and $150$-th epochs, over a total of 200 epochs. Weight decay is set to $3 \times 10^{-4}$, and a batch size of 512 is used for the training process. The architecture is initialized with initial channel sizes of 32, 48, and 64 for different rows in the table. The LA loss is adopted for training.

\section{Logit Annealing Loss Function}\label{supp:loss_supp}
In this section, we delve into the details of the LA loss.
Initially, we will prove Theorem~\ref{thm:risk_lower_bound}, which illustrates the lower bound of the empirical margin loss risk.
Next, we will visualize the LA loss and its gradient values.
Additionally, we will discuss issues related to the CR term used in the SOC and LOT frameworks.
Lastly, we will thoroughly explain the annealing mechanism.

\subsection{Proof of Theorem 1}
Here, we explain Theorem~\ref{thm:risk_lower_bound}, which demonstrates how model capacity constrains the optimization of margin loss.
The margin operation is defined as follows:
\begin{align}
	\mathcal{M}_f = f(x)_t - \max_{k \neq t} f(x)_k.
\end{align}
This operation is utilized to formulate margin loss, which is employed in various scenarios to enhance logit distance and predictive confidence.
The margin loss can be effectively formulated using the \textit{ramp loss} \citep{bartlett2017spectrally}, which offers an analytic perspective on margin loss risk.
The ramp loss provides a linear transition between full penalty and no penalty states. It is defined as follows:
\[
	\mathcal{\ell}_{\tau, \text{ramp}}(f, x, y) = \begin{cases}
		0                                                & \text{if } f(x)_t - \max_{k \neq t} f(x)_k \geq \tau, \\
		1                                                & \text{if } f(x)_t - \max_{k \neq t} f(x)_k \leq 0,    \\
		1 - \frac{f(x)_t - \max_{k \neq t} f(x)_k}{\tau} & \text{otherwise}.
	\end{cases}
\]

We employ the margin operation and the ramp loss to define margin loss risk as follows:
\begin{align}
	 & \mathcal{R}_{\tau}(f) := \mathbb{E}(\mathcal{\ell}_{\tau, \text{ramp}}(\mathcal{M}(f(x), y))),                  \\
	 & \hat{\mathcal{R}}_{\tau}(f) := \frac{1}{n}\sum_{i}\mathcal{\ell}_{\tau, \text{ramp}}(\mathcal{M}(f(x_i), y_i)),
\end{align}
where $\hat{\mathcal{R}_{\tau}}(f)$ denotes the corresponding empirical margin loss risk.
According to \citet{mohri2018foundations}, a risk bound exists for this loss:
\begin{lemma}\label{thm:rademacher_bound}
	\citep[Theorem 3.3]{mohri2018foundations} Given a neural network $f$, let $\tau$ denote the ramp loss. Let $\mathcal{F}$ represent the function class of $f$, and let $\mathfrak{R}_{S}(.)$ denote the Rademacher complexity. Assume that $S$ is a sample of size $n$. Then, with probability \( 1 - \delta \), we have:
	\begin{equation}
		\mathcal{R}_\tau(f) \leq \hat{\mathcal{R}}_\tau(f) + 2 \mathfrak{R}_{S}(\mathcal{F}_\tau) + 3 \sqrt{\frac{\ln (1/\delta)}{2n}}.
	\end{equation}
\end{lemma}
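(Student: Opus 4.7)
The plan is to bootstrap the stated lower bound from the standard Rademacher generalization bound (Lemma~\ref{thm:rademacher_bound}) by (i) rearranging it into a lower bound on the empirical risk, and (ii) lower-bounding the population ramp-loss risk $\mathcal{R}_\tau(f)$ by the error probability $\mathcal{P}_e$. Since Lemma~\ref{thm:rademacher_bound} is quoted from Mohri et al.\ and takes care of all the concentration machinery, the only genuinely new work is the second step, which is a pointwise domination argument between the ramp loss and the $0/1$ loss.

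First I would invoke Lemma~\ref{thm:rademacher_bound}: with probability at least $1-\delta$ over the draw of $S$,
\begin{equation*}
\mathcal{R}_\tau(f) \;\leq\; \hat{\mathcal{R}}_\tau(f) + 2\,\mathfrak{R}_S(\mathcal{F}_\tau) + 3\sqrt{\tfrac{\ln(1/\delta)}{2n}},
\end{equation*}
which I would immediately rearrange into
\begin{equation*}
\hat{\mathcal{R}}_\tau(f) \;\geq\; \mathcal{R}_\tau(f) - 2\,\mathfrak{R}_S(\mathcal{F}_\tau) - 3\sqrt{\tfrac{\ln(1/\delta)}{2n}}.
\end{equation*}
It then suffices to show $\mathcal{R}_\tau(f) \geq \mathcal{P}_e$, after which the stated bound follows by substitution.

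The second step is the pointwise inequality $\ell_\tau(\mathcal{M}(f(x),y)) \geq \mathbb{1}\{\arg\max_k f(x)_k \neq t\}$. I would split into two cases using the ramp-loss formula from Appendix~\ref{supp:loss_supp}. If the prediction is incorrect, then $f(x)_t - \max_{k\neq t}f(x)_k \leq 0$, which puts the argument into the flat region where $\ell_\tau = 1$, so both sides equal $1$. If the prediction is correct, the $0/1$ indicator is $0$, while $\ell_\tau \in [0,1]$, so the inequality holds trivially. Taking expectations over the data distribution gives $\mathcal{R}_\tau(f) = \mathbb{E}[\ell_\tau(\mathcal{M}(f(x),y))] \geq \mathbb{E}[\mathbb{1}\{\text{incorrect}\}] = \mathcal{P}_e$. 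Combining this with the rearranged bound yields exactly the claim.

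I do not expect a real obstacle here; the result is essentially a corollary of Lemma~\ref{thm:rademacher_bound} plus the elementary observation that the ramp loss upper-bounds the $0/1$ loss. The only mildly delicate point is to be careful with the strict/non-strict inequalities in the ramp-loss cases (in particular, ensuring that a tied margin of exactly $0$ is handled consistently, which follows from the definition that assigns $\ell_\tau=1$ when the margin is $\leq 0$); the rest is direct substitution.
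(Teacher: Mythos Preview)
Your proposal does not prove the stated lemma. The displayed statement is the \emph{upper bound} $\mathcal{R}_\tau(f) \leq \hat{\mathcal{R}}_\tau(f) + 2\mathfrak{R}_S(\mathcal{F}_\tau) + 3\sqrt{\ln(1/\delta)/(2n)}$, which is a standard Rademacher generalization bound quoted verbatim from Mohri et al.\ (Theorem~3.3). You do not prove this inequality at all; in your very first step you \emph{invoke} it as a black box (``First I would invoke Lemma~\ref{thm:rademacher_bound}'') and then proceed to derive the lower bound $\hat{\mathcal{R}}_\tau(f) \geq \mathcal{P}_e - 2\mathfrak{R}_S(\mathcal{F}_\tau) - 3\sqrt{\ln(1/\delta)/(2n)}$. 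That is Theorem~\ref{thm:risk_lower_bound}, not the lemma you were asked about.

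That said, the paper itself does not prove Lemma~\ref{thm:rademacher_bound} either; it simply cites it. And what you have written is exactly the paper's derivation of Theorem~\ref{thm:risk_lower_bound} in Appendix~\ref{supp:loss_supp}: rearrange the Mohri bound, then observe $\mathcal{P}_e = \mathbb{E}\,\mathbf{1}[\mathcal{M}(f(x),y)\le 0] \le \mathbb{E}[\ell_{\tau,\text{ramp}}(\mathcal{M}(f(x),y))] = \mathcal{R}_\tau(f)$ because the ramp loss dominates the $0/1$ indicator pointwise, and substitute. So if the intended target was Theorem~\ref{thm:risk_lower_bound}, your argument is correct and matches the paper's line for line; but as a proof of the lemma actually displayed, it is circular.
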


Next, apply the following properties for the prediction error probability:
\begin{align}
	\mathcal{P}_e=\Pr\left[\arg\max_i f(x)_i \neq y\right]
	 & = \Pr\left[-\mathcal{M}(f(x), y) \geq 0\right]                           \\
	 & = \mathbb{E}\mathbf{1}\left[\mathcal{M}(f(x), y) \leq 0\right]           \\
	 & \leq \mathbb{E}(\mathcal{\ell}_{\tau,\text{ramp}}(\mathcal{M}(f(x), y))) \\
	 & =\mathcal{R}_{\tau}(f)\label{eq:R_tau},
\end{align}
where $\mathcal{P}_e$ is the prediction error probability. Assuming that the $\mathcal{P}_e$ is fixed but unknown, we can utilize Lemma~\ref{thm:rademacher_bound} and \eqref{eq:R_tau} to prove Theorem~\ref{thm:risk_lower_bound}:
\begin{equation}
	\hat{\mathcal{R}}_\tau(f) \geq \mathcal{P}_e - 2 \mathfrak{R}_{S} (\mathcal{F}_\tau) - 3 \sqrt{\frac{\ln (1/\delta)}{2n}}.
\end{equation}
This suggests that the lower bound for the margin loss risk $\hat{\mathcal{R}}_\tau(f)$ may be influenced by the complexity of the model.

\begin{figure}[t]
	\centering
	\subfloat[Loss function]{\includegraphics[width=0.495\linewidth]{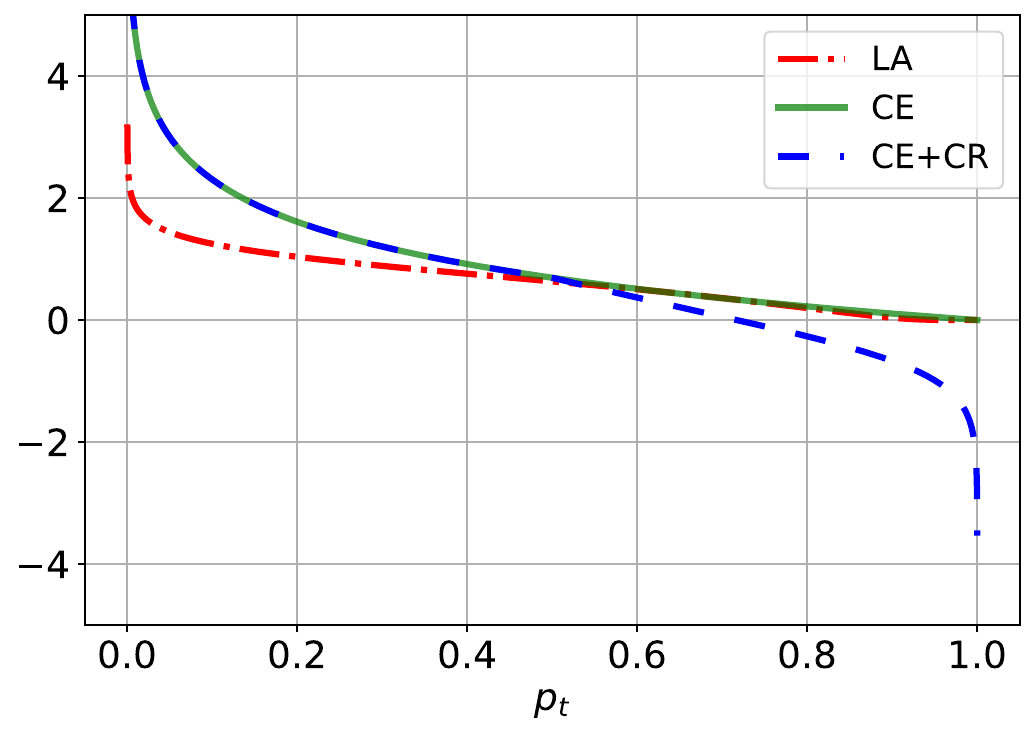}%
	}
	\hfill
	\subfloat[Derivative of loss function]{\includegraphics[width=0.495\linewidth]{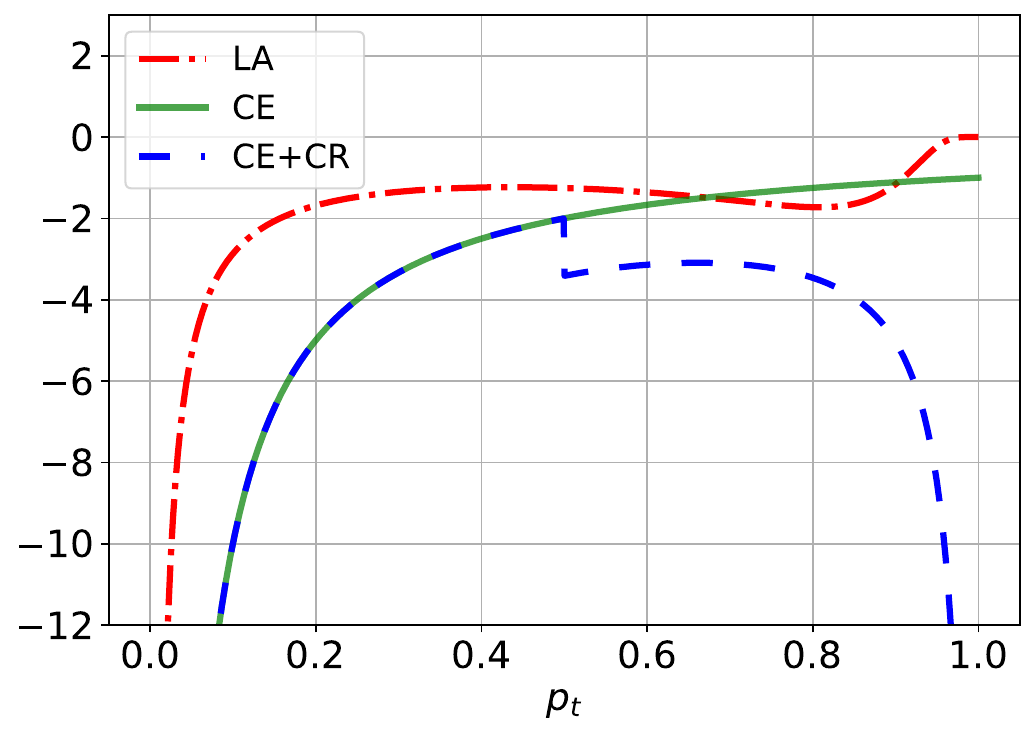}%
	}
	\caption{Comparison of three loss functions. The x-axis is $p_t$. This figure displays curves representing the behavior of the proposed LA loss, contrasted with cross-entropy loss and the Certificate Regularization (CR) term. We observe the discontinuous gradient of the CR term. Additionally, the gradient of the CR term tends to infinity as $p_t$ approaches one, leading to gradient domination and subsequently hindering the optimization of other data points. In contrast, the proposed LA loss employs a different strategy, where the gradient value anneals as nears one. This prevents overfitting and more effectively utilizes model capacity to enhance learning across all data points.}
	\label{fig:loss_function}
\end{figure}

\subsection{CR Loss Risk}
Next, we illustrate and prove the relationship between margin loss risk and the CR loss risk. Let the empirical CR loss risk be defined as follows:
\begin{align}
	\hat{\mathcal{R}}_{CR}(f) := \frac{1}{n}\sum_{i} -\gamma \max(\mathcal{M}(f(x_i), y_i), 0).
\end{align}

\begin{proposition}\label{prop:cr_tau}
	Let $\hat{\mathcal{R}}_{CR}(f)$ and $\hat{\mathcal{R}}_\tau(f)$ be the empirical CR loss risk and margin loss risk, respectively.
	Assume that $\tau = \sup_i \mathcal{M}_f(x_i)$ and $\gamma = 1/\tau$. Then, $\hat{\mathcal{R}}_{CR}(f)$ is $\hat{\mathcal{R}}_\tau(f)$ decreased by one unit:
	\begin{align}
		\hat{\mathcal{R}}_{CR}(f) = \hat{\mathcal{R}}_\tau(f) - 1.
	\end{align}
\end{proposition}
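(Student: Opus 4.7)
The plan is to prove the identity pointwise: show that for each sample $x_i$ the two per-sample losses differ by exactly $1$, and then average. The key simplification comes from the choice $\tau = \sup_i \mathcal{M}_f(x_i)$, which guarantees that no sample lies in the ``saturated'' region $\mathcal{M}_f(x_i) \geq \tau$ (except possibly with equality, which still gives ramp value $0$), so the piecewise ramp loss collapses into just two relevant branches: the linear branch on $[0,\tau]$ and the flat branch on $(-\infty,0]$.

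First I would do the case split on $\mathcal{M}_i := \mathcal{M}_f(x_i)$:
\begin{itemize}
\item If $0 \leq \mathcal{M}_i \leq \tau$, the ramp loss equals $1 - \mathcal{M}_i/\tau$, while $-\gamma\max(\mathcal{M}_i,0) = -\gamma \mathcal{M}_i = -\mathcal{M}_i/\tau$ using $\gamma = 1/\tau$. Their difference is exactly $1$.
\item If $\mathcal{M}_i < 0$, the ramp loss equals $1$ and $-\gamma\max(\mathcal{M}_i,0) = 0$. Their difference is again exactly $1$.
\end{itemize}
In both cases we obtain
\begin{equation*}
\ell_{\tau,\text{ramp}}(\mathcal{M}_i) - \bigl(-\gamma \max(\mathcal{M}_i,0)\bigr) = 1.
\end{equation*}

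Summing over $i=1,\ldots,n$ and dividing by $n$ yields $\hat{\mathcal{R}}_\tau(f) - \hat{\mathcal{R}}_{CR}(f) = 1$, which is the claim. The only minor subtlety, and the place where one should be a bit careful, is the boundary case $\mathcal{M}_i = \tau$: by definition of the ramp loss this gives value $0$, and since $\gamma = 1/\tau$ the CR term equals $-1$, so the difference is still $1$. There is no real obstacle here; the proposition is essentially a direct arithmetic consequence of the ramp loss being an affine function on $[0,\tau]$ whose intercept is $1$ and whose slope is exactly $-\gamma$, together with the choice of $\tau$ that rules out the saturated branch.
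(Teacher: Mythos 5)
Your proof is correct and follows essentially the same route as the paper's: a pointwise case split on the sign of $\mathcal{M}_f(x_i)$, using $\gamma = 1/\tau$ to make the linear branch of the ramp loss and the CR term differ by exactly $1$, then averaging. Your explicit handling of the boundary case $\mathcal{M}_i = \tau$ (which the choice $\tau = \sup_i \mathcal{M}_f(x_i)$ makes the only way to touch the saturated branch) is a small but welcome bit of extra care that the paper's proof leaves implicit.
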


\begin{proof}
	(Proof for Proposition~\ref{prop:cr_tau}) Consider two cases based on the value of $\mathcal{M}(x)$:
	\begin{itemize}
		\item When $\mathcal{M}(x) \leq 0$: the CR loss is always zero and the ramp loss is always one. Thus, the distance between $\hat{\mathcal{R}}_{CR}(f)$ and $\hat{\mathcal{R}}_\tau(f)$ is one.
		\item When $\mathcal{M}(x) > 0$: The distance between the ramp loss and CR loss is:
		      \begin{align}
			      {\ell}_{\tau, \text{ramp}}(\mathcal{M}(f(x_i), y_i)) + \gamma \max(\mathcal{M}(f(x_i), y_i), 0) & = 1 - \frac{\mathcal{M}(x_i)}{\tau} + \gamma{\mathcal{M}(x_i)}\nonumber \\
			                                                                                                      & = 1 + (\gamma - \frac{1}{\tau}) {\mathcal{M}(x_i)}.
		      \end{align}
	\end{itemize}
	Therefore, the empirical CR loss risk can be rewritten as:
	\begin{align}
		\hat{\mathcal{R}}_{CR}(f) & = \hat{\mathcal{R}}_\tau(f) - 1 - (\gamma - \frac{1}{\tau}) M_+, \text{ where} \\
		M_+                       & = \sum\nolimits_{x_i \in \{x_i | \mathcal{M}(x_i) > 0\} }{\mathcal{M}(x_i)}.
	\end{align}
	This equation simplifies to the one stated in Proposition~\ref{prop:cr_tau} if $\gamma = 1/\tau$.
\end{proof}

Conclusively, we demonstrate that the CR loss risk has a lower bound as follows:
\begin{equation}
	\hat{\mathcal{R}}_{CR}(f) \geq \mathcal{P}_e - 2 \mathfrak{R}_{S} (\mathcal{F}_\tau) - 3 \sqrt{\frac{\ln (1/\delta)}{2n}} - 1.
\end{equation}
When the complexity is limited, CR loss risk may exhibit a great lower bound.
Thus, enlarging margins using the CR term is less beneficial beyond a certain point.

\begin{figure}[t]
	\centering
	\subfloat[CR landscape]{\includegraphics[width=0.51\linewidth]{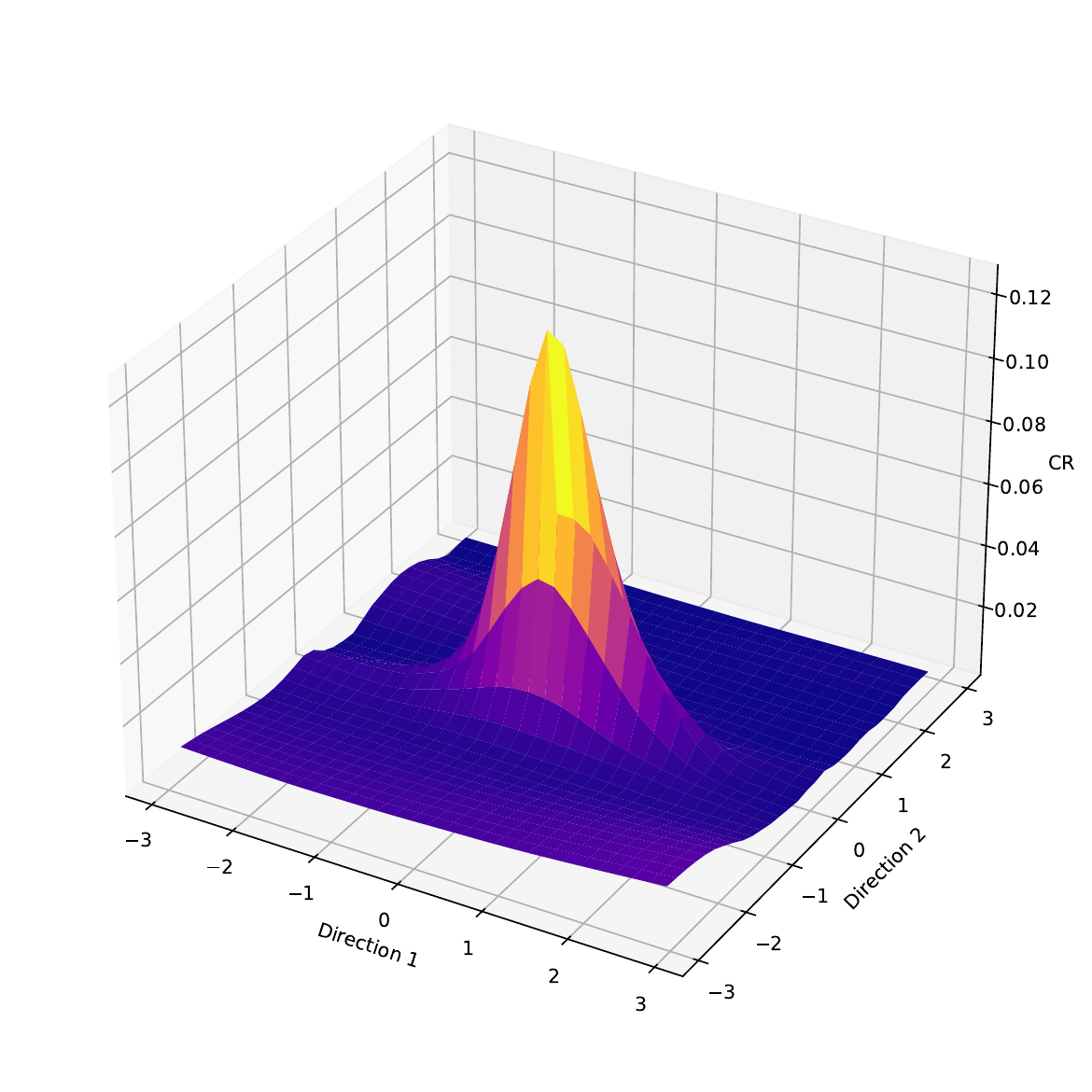}
	}
	\subfloat[CR landscape contour plot]{\includegraphics[width=0.45\linewidth]{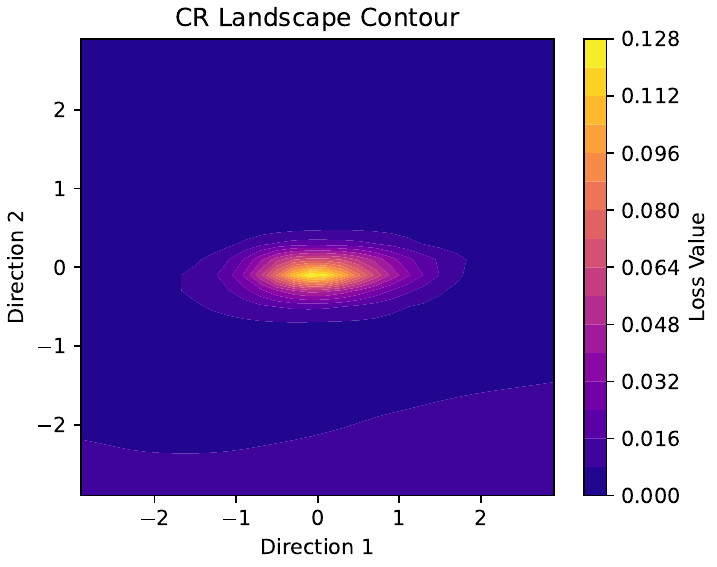}
	}
	\caption{CR Loss Landscape Analysis. This figure illustrates the loss landscape to investigate the effects of the CR term. Notably, the CR term can suddenly become ``activated'' or ``deactivated,'' which is vividly depicted in the landscape transitions. These abrupt changes contribute to unstable optimization during training, potentially affecting the convergence and reliability of the model. Understanding this behavior is crucial for improving the training process of Lipschitz neural networks. Regarding the direction of loss landscape, we follow the setting in \citet{engstrom2018evaluating} and \citet{chen2020robust}. We visualize the loss landscape function $z = \mathcal{L}_{CR}(x, w + \omega_1 d_1 + \omega_2 d_2)$, where $d_1 = sign(\nabla_w \mathcal{L}_{CR})$, $d_2 \sim Rademacher(0.5)$, and $\omega$ is the grid.}
	\label{fig:loss_landscape}
\end{figure}

\subsection{CR Issues}\label{supp:cr_issue}
Recall that CE loss with CR term is formulated as: $\mathcal{L}_{\mathrm{CE}} - \gamma \max(\mathcal{M}_f(x), 0)$, where $\gM_f(x) = f(x)_t - \max_{k \neq t} f(x)_k$ is the logit margin between the ground-truth class $t$ and the runner-up class.
We compare LA loss, CE loss, and the CE+CR loss with $\gamma=0.5$.
Figure~\ref{fig:loss_function} illustrates the loss values and their gradient values with respect to $p_t$, where $p_t$ represents the softmax result of the target logit.
When using CR term as the regularization for training Lipschitz models, we summarize the following issues:
\begin{enumerate}
	\item[(1).] Discontinuous loss gradient: the gradient value of CR term at $p_t=0.5$ is discontinuous   This discontinuity leads to unstable optimization processes, as shown in Figure~\ref{fig:loss_function}. This indicates that, during training, the CR loss term may be ``activated'' or ``deactivated.'' This phenomenon can be further explored through the loss landscape. Figure~\ref{fig:loss_landscape} displays the CR loss landscape for the CR term, where it can be seen that the CR term is activated suddenly. The transition is notably sharp.
	\item[(2).] Gradient domination: as $p_t$ approaches one, the gradient value escalates towards negative infinity. This would temper the optimization of the other data points in the same batch.
	\item[(3).] Imbalance issue: our observations indicate that the model tends to trade clean accuracy for increased margin, suggesting a possible imbalance in performance metrics.

\end{enumerate}
Therefore, instead of using the CR term to train Lipschitz neural networks, we design the LA loss to help Lipschitz models learn better margin values.

\subsection{Annealing Mechanism}
We can observe the annealing mechanism in the right subplot of Figure~\ref{fig:loss_function}. The green curve is the gradient value of the LA loss.
We can observe that the gradient value is gradually annealed to zero as the $p_t$ value approaches one.
This mechanism limits the optimization of the large-margin data points.
As mentioned previously, Lipschitz neural networks have limited capacity, so we cannot maximize the margin indefinitely.
Since further enlarging the margin for data points with sufficiently large margin is less beneficial, we employ the annealing mechanism to allocate the limited capacity for the other data points.

In addition, we delve deeper into the annealing mechanism of the proposed LA loss function.
As illustrated in Figure~\ref{fig:histo_margin}, we train three different models using three loss functions, and we plot the histogram of their margin distribution.
The red curve represents the proposed LA loss. Compared to CE loss, the proposed LA loss has more data points with margins between $0.4$ and $0.8$.
This indicates that the annealing mechanism successfully improves the small-margin data points to appropriate margin $0.4$ and $0.8$.

\begin{figure}[t]
	\centering
	\includegraphics[width=0.98\linewidth]{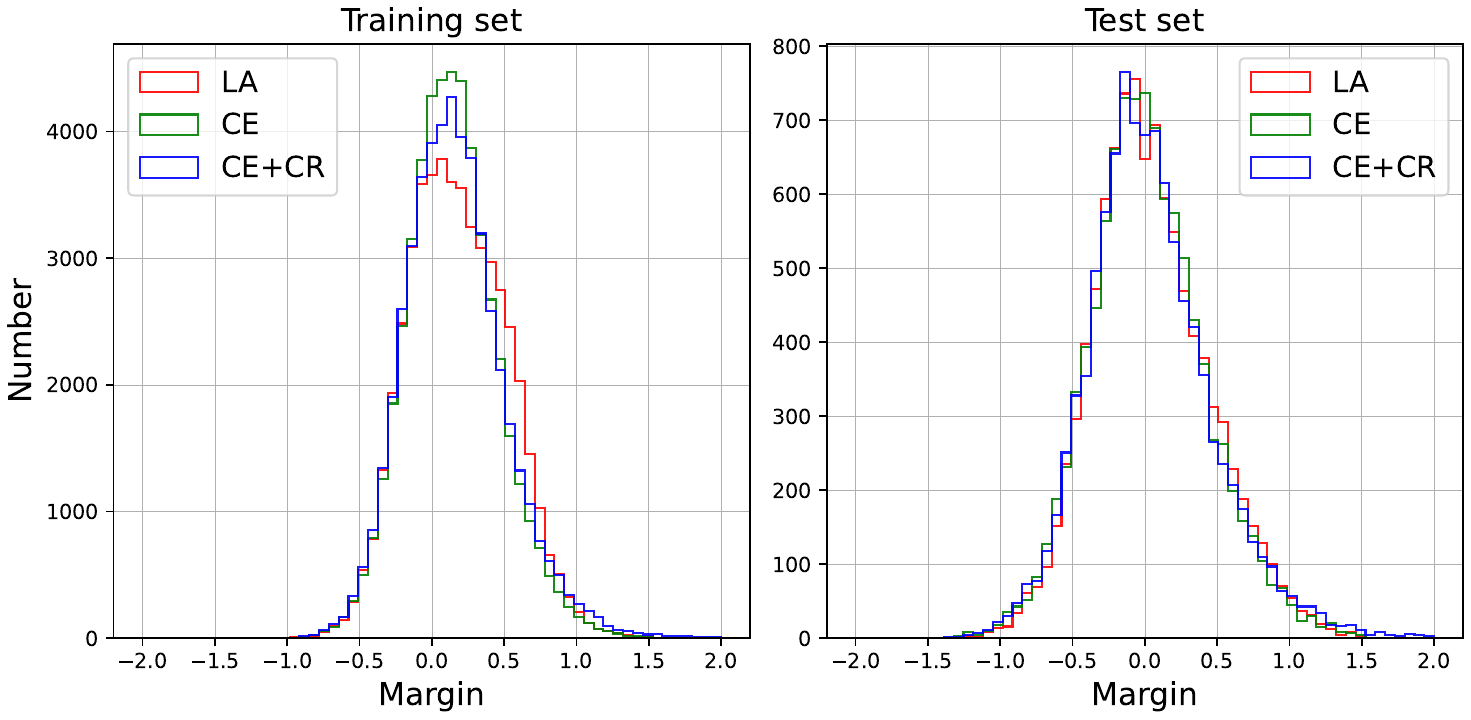}
	\caption{Histogram of margin distribution. The left histogram represents margin distribution obtained from the training set, while the right histogram shows margin distribution from the test set. The x-axis represents the margin values. These visualizations demonstrate that the LA loss helps the model learn better margins.}
	\label{fig:histo_margin}
\end{figure}
\begin{table*}[t]
  \centering
  \caption{Additional results on Tiny-ImageNet. No additional diffusion-generated synthetic datasets are used during training.}
    \vspace{0.2cm}
    \begin{tabular}{llcrrrr}
    \toprule
     \multicolumn{1}{l}{\multirow{2}[2]{*}{\textbf{Datasets}}}  &
     \multicolumn{1}{l}{\multirow{2}[2]{*}{\textbf{Models}}} &
     \multicolumn{1}{c}{\multirow{2}[2]{*}{\textbf{\#Param.}}} &
     \multicolumn{1}{c}{\multirow{2}[2]{*}{\textbf{\makecell{\textbf{Clean} \\ \textbf{Acc.}}}}} &
     \multicolumn{3}{c}{\boldmath{}\textbf{Certified Acc. ($\varepsilon$)}\unboldmath{}} \\
   \cmidrule{5-7}     & &   &   & $\bf \frac{36}{255}$ & $\bf \frac{72}{255}$ & $\bf \frac{108}{255}$ \\
    \midrule
    \multirow{5}{*}{Tiny-ImageNet}
      & {SLL X-Large} {\scriptsize \citep{araujo2023unified}} & 1.1B & 32.1 & 23.2 & 16.8 & 12.0 \\
      & {Sandwich} {\scriptsize \citep{wang2023direct}} & 39M & 33.4 & 24.7 & 18.1 & \textbf{13.4} \\
      & {LiResNet}$^\dagger$ {\scriptsize \citep{hu2024recipe}} & 133M & 40.9 & 26.2  & 15.7 & 8.9 \\   
    \cmidrule{2-7}
      & {\archname}    & 75M &  40.5 & 26.9  & 17.1 & 10.1    \\
      & {\archname  (+LA)}    & 75M & \bf 41.2 & \bf 29.0  & \bf 19.0 & 12.1    \\
    \bottomrule
    \end{tabular}%
    
  \label{tab:tiny_benchmark}%
\end{table*}%

Additionally, as the left subplot in Figure~\ref{fig:histo_margin} illustrates, the margin exhibits an upper bound; no data points exceed a value of $2.0$, even when a larger $\gamma$ is used in the CR term.

This observation coincides with our theoretical analysis, confirming that the Lipschitz models cannot learn large margins due to its limited capacity.

\begin{table}[t]
  \centering
  \caption{The clean, certified, and empirical robust accuracy of BRONet-M on CIFAR-10, CIFAR-100, and Tiny-ImageNet.}
  \vspace{0.3cm}
    \scalebox{1.0}{
    \begin{tabular}{l c r r r r r r r}
\toprule
 \multicolumn{1}{l}{\multirow{2}[2]{*}{\textbf{Datasets}}} &  \multicolumn{1}{c}{\multirow{2}[2]{*}{\textbf{\makecell{\textbf{Clean} \\ \textbf{Acc.}}}}} & \multicolumn{6}{c}{\textbf{Certified / AutoAttack \boldmath{}($\varepsilon$)\unboldmath{}}}  \\
\cmidrule(lr){3-8} 
& & \multicolumn{2}{c}{$\bf \frac{36}{255}$} & \multicolumn{2}{c}{$\bf \frac{72}{255}$} & \multicolumn{2}{c}{$\bf \frac{108}{255}$} \\
\midrule
\multirow{1}{*}{CIFAR-10}  & 81.1 & \multicolumn{2}{c}{$69.9~/~76.1$} & \multicolumn{2}{c}{$55.3~/~69.7$} & \multicolumn{2}{c}{$40.4~/~62.6$} \\
\midrule
\multirow{1}{*}{CIFAR-100}  & 54.3 & \multicolumn{2}{c}{$40.0~/~47.3$} & \multicolumn{2}{c}{$28.7~/~41.0$} & \multicolumn{2}{c}{$19.4~/~35.5$} \\
\midrule
\multirow{1}{*}{Tiny-ImageNet}  & 41.0 & \multicolumn{2}{c}{$29.2~/~36.3$} & \multicolumn{2}{c}{$19.7~/~31.7$} & \multicolumn{2}{c}{$12.3~/~27.5$} \\
\bottomrule
    \end{tabular}
    }
  \label{tab:aabenchmark}
\end{table}

\begin{table}[ht]
\centering
\caption{The clean accuracy, empirical accuracy against $\ell_\infty$ adversary, and certified accuracy on CIFAR-10.}
\scalebox{0.99}{
\begin{tabular}{lcccl}
\toprule
\textbf{Methods} & \textbf{Clean} & \boldmath{}\textbf{$\ell_{\infty}=2/255$}\unboldmath{} & \boldmath{} \textbf{$\ell_{\infty}=8/255$}\unboldmath{} & \textbf{Certified Acc.} \\
\midrule
STAPS  & 79.8 & 65.9 & N/A & 62.7 ~($\ell_{\infty}=2/255$) \\
SABR   & 79.5 & 65.8 & N/A & 62.6 ~($\ell_{\infty}=2/255$) \\
IBP    & 48.9 & N/A   & 35.4 & 35.3 ~($\ell_{\infty}=8/255$) \\
TAPS   & 49.1 & N/A   & 34.8 & 34.6 ~($\ell_{\infty}=8/255$) \\
SABR   & 52.0 & N/A   & \textbf{35.7} & 35.3 ~($\ell_{\infty}=8/255$) \\
\midrule
\multirow{3}{*}{BRONet-L}             & \multirow{3}{*}{\textbf{81.6}} & \multirow{3}{*}{\textbf{68.8}} & \multirow{3}{*}{21.0} & 70.6 ~($\ell_{2}=36/255$) \\
             &  &  &  & 57.2 ~($\ell_{2}=72/255$) \\
             &  &  &  & 42.5 ~($\ell_{2}=108/255$) \\
\bottomrule
\end{tabular}
}
\label{tab:linf_emp_robustness}
\end{table}

\section{Additional Experiments}\label{sup:exp}
In this section, we present additional experiments and ablation studies.

\begin{table}[t]
  \centering
  {
  \caption{We compare the clean accuracy, certified accuracy, and training time for different choices of $n$ for the unconstrained parameter $V$ on CIFAR-100 with \archname L6W256D4 and L6W512D4. Time is calculated in minutes per training epoch.}
  \label{tab:ranknexp}%
    \vspace{0.3cm}
    \begin{tabular}{crccccccccccc}
    \toprule
    \multicolumn{1}{c}{\multirow{2}[4]{*}{\textbf{$\mathbf{n}$}}} &   & \multicolumn{5}{c}{\textbf{L6W256D4}} &   & \multicolumn{5}{c}{\textbf{L6W512D4}}  \\
\cmidrule{3-7}\cmidrule{9-13}      &   & \bf Clean & $\bf \frac{36}{255}$ & $\bf \frac{72}{255}$ & $\bf \frac{108}{255}$ & \bf Time &   & \bf Clean & $\bf \frac{36}{255}$ & $\bf \frac{72}{255}$ & $\bf \frac{108}{255}$ & \bf Time \\
    \midrule
    {$m/8$}   &   & 51.6 & 39.2 & 28.3    & 19.5    & 0.66  &   & 52.8 & 40.2 & 28.6 & 20.3 & 1.57 \\
    {$m/4$}   &   & 52.8 & 39.5 & 27.9    & 19.7    & 0.73  &   & 54.0 & 40.2 & 28.3 & 19.3 & 1.92 \\
    {$m/2$}   &   & 53.4 & 39.0 & 27.3    & 18.5    & 0.94  &   & 54.1 & 39.7 & 27.7 & 18.6 & 2.82 \\
    {$3m/4$}   &   & 52.7 & 39.5 & 28.0    & 19.2    & 1.27  &   & 53.5 & 39.8 & 27.9 & 18.9 & 3.75 \\
    
    \bottomrule
    \end{tabular}%
  }
\end{table}%

\begin{table}[t]
        \centering
        \caption{The improvement of LA loss with LipConvNet-10-32 on different datasets.}
        \vspace{0.3cm}
        \begin{tabular}{lccccc}
        \toprule
        \textbf{Datasets} & \textbf{Loss} & \textbf{Clean} & $\mathbf{\frac{36}{255}}$ & $\mathbf{\frac{72}{255}}$ & $\mathbf{\frac{108}{255}}$\\
        \midrule
        \multirow{2}{*}{CIFAR-10}    
            & $\text{CE}$    & 77.5 & 62.1 & 44.8 & 29.2 \\ 
            & $\textbf{LA}$  & 76.9 & 63.4 & 47.2 & 32.6 \\
        \midrule
        \multirow{2}{*}{CIFAR-100}    
            & $\text{CE}$    & 48.5 & 34.1 & 22.6 & 14.4 \\ 
            & \textbf{LA}    & 48.6 & 35.4 & 24.5 & 16.1 \\
        \midrule
        \multirow{2}{*}{\shortstack{Tiny-ImageNet}}    
            & $\text{CE}$    & 38.0 & 26.3 & 17.0 & 10.3 \\ 
            & $\textbf{LA}$  & 39.4 & 28.1 & 18.2 & 11.6 \\
        \bottomrule
        \end{tabular}
        \label{tab:datasetloss_lipconvnet}
\end{table}

\begin{table}[t]
    \centering
    \caption{Experimental results for LipConvNet-10 on CIFAR-100 for different values of $\beta$ in the LA loss.}
    \vspace{0.3cm}
    \begin{tabular}{cccccc}
        \toprule
        \bf $\beta$ value & \bf Clean & $\mathbf{\frac{36}{255}}$ & $\mathbf{\frac{72}{255}}$ & $\mathbf{\frac{108}{255}}$ \\
        \midrule
        $1$ & 48.63 & 35.48 & 24.36 & 17.19 \\
        $3$ & 48.57 & 35.68 & 24.78 & 16.66 \\
        $5$ & 49.09 & 35.58 & 24.46 & 16.38 \\
        $7$ & 49.02 & 35.72 & 24.34 & 16.05 \\
        \bottomrule
    \end{tabular}
    \vspace{12pt}
    \label{tab:loss_hyperparameter}
\end{table}

\subsection{Additional Tiny-ImageNet Results for Table 1} \label{sup:exp_tab1_ext}
We present additional results on the Tiny-ImageNet dataset in Table~\ref{tab:tiny_benchmark}. The model configuration is described in Appendix~\ref{sup:arch_config} and the training setting is consistent with CIFAR-10/100 in Table 1.

\subsection{Empirical Robustness}\label{sup:expempirical}

In addition to certified robustness, we can validate the empirical robustness of the proposed method.
This further supports our robustness certificate.
Theoretically, certified robust accuracy is the lower bound for the worst-case accuracy, while empirical robust accuracy is the upper bound for the worst-case accuracy.
Thus, empirical robust accuracy must be greater than certified robust accuracy.
We employ AutoAttack \citep{croce2020reliable} to assess empirical robustness.
The certified and empirical robust accuracy for different attack budgets are illustrated in Table~\ref{tab:aabenchmark}.
We observe that all empirical robust accuracy values for each budget are indeed higher than their corresponding certified accuracy.
This indicates that the certification is correct under the AutoAttack, and that the proposed method achieves strong empirical robustness without any expensive adversarial training examples.
Furthermore, we also measure $\ell_\infty$ empirical robustness against $\ell_\infty$ AutoAttack on CIFAR-10. The results are presented in Table~\ref{tab:linf_emp_robustness}, and the baselines for $\ell_\infty$ certified defenses are from the literature~\citep{mueller2022certified, Mao2023Connecting}. Although not designed for $\ell_\infty$ certified robustness, our method shows competitive performance on the benchmark. 

\subsection{\name Rank-n Ablation Experiments}\label{sup:exprankablation}
As mentioned earlier, we can control the rank of $V$ to construct the orthogonal weight matrix.
In this paper, the matrix $V$ is of low rank.
Considering the internal term $V(V^T V)^{-1}V^T$ in our method's parameterization, the concept is similar to that of LoRA \citep{hu2021lora}.
We further investigate the effect of different $n$ values of $V$.
For the unconstrained $m \times n$ parameter $V$ in the backbone and dense blocks of \archname, we conduct experiments using different $n$ values.
The clean and certified accuracy, as well as training time, on CIFAR-100 are presented in Table~\ref{tab:ranknexp}.
Different values of $n$ result in slight variations in performance, with $n = m/2$ yielding the best results.
Therefore, we choose $n = m/4$ for all CIFAR-10/CIFAR-100 experiments on \archname-M, and $n = m/2$ for \archname-L and ImageNet.
For Tiny-ImageNet, we set \( n = m/8 \) to mitigate overfitting.

\begin{figure}[t]
	\centering
	\includegraphics[width=0.98\linewidth]{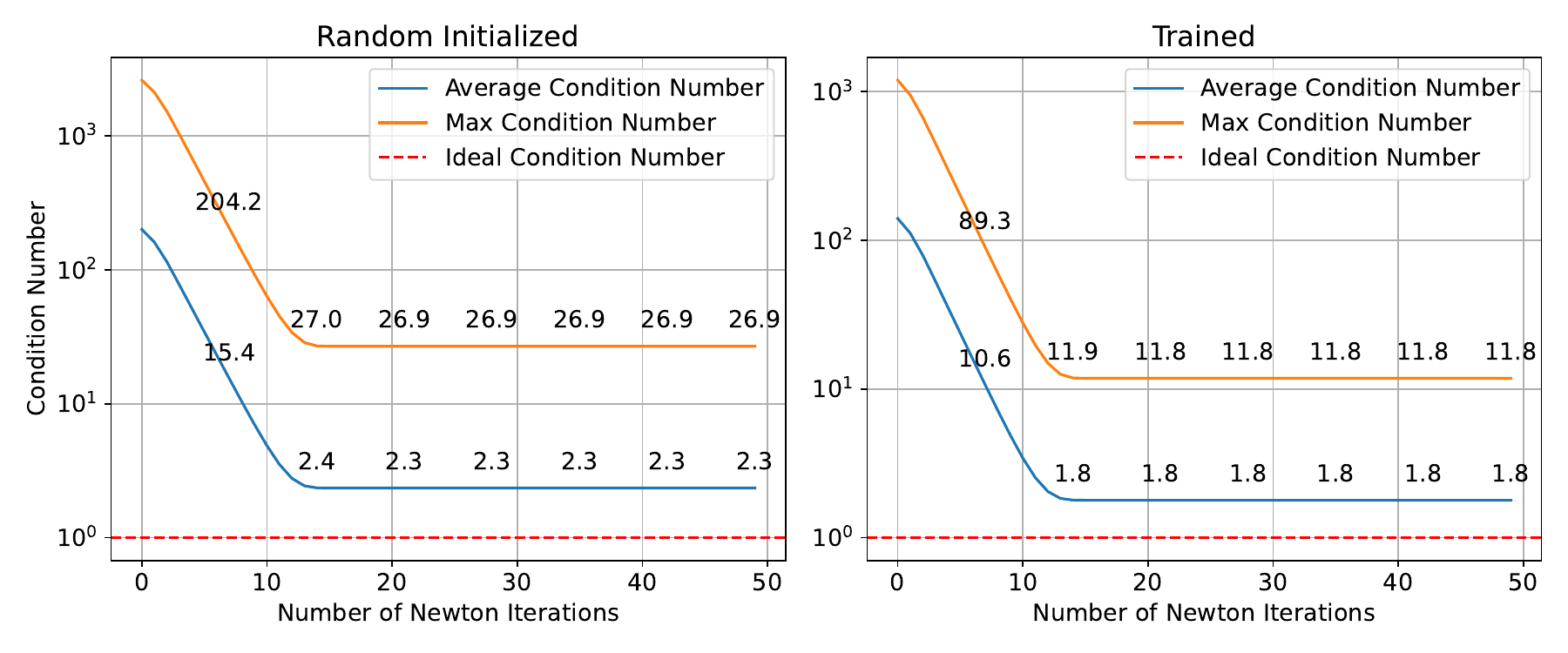}
	\caption{Plots of condition number of parameterized matrix in Fourier domain. The left plot shows the condition number with randomly initialized parameters, whereas the right plot shows the condition number with trained parameters.}
	\label{fig:lot_non_ortho}
\end{figure}

\newcommand{\xp}{\,\,\,\,\,\,\,\,\,\,\,\,+~}

\begin{table}[t]
    \centering
    \caption{Comparison of clean and certified accuracy, training and inference time (seconds/epoch), and number of parameters with different orthogonal layers in LipConvNet-10. Instances marked with a dash (-) indicate out of memory during training. In the Time column, we show the training time, and the inference time is in brackets. Time is calculated in minutes per training epoch.}
    \vspace{0.3cm}
    \begin{tabular}{c r c c c c c c c c c c}
        \toprule
        \multirow{2}{*}{\bf\shortstack{Init.\\Width}} & \multicolumn{1}{c}{\multirow{2}{*}{\bf\shortstack{Methods}}} & \multicolumn{5}{c}{\textbf{CIFAR-100}} & \multicolumn{5}{c}{\textbf{Tiny-ImageNet}} \\
        \cmidrule(lr){3-7} \cmidrule(lr){8-12}
        & & \bf Clean & $\mathbf{\frac{36}{255}}$ & $\mathbf{\frac{72}{255}}$ & $\mathbf{\frac{108}{255}}$ & \bf Time & \bf Clean & $\mathbf{\frac{36}{255}}$ & $\mathbf{\frac{72}{255}}$ & $\mathbf{\frac{108}{255}}$ & \bf Time \\  
        \midrule
        \multirow{6}{*}{\shortstack{32}} 
        & SOC + CR         & 48.1 & 34.3 & 23.5 & 15.6 & \multirow{2}{*}{\shortstack{19.2 \\ (5.3)}}   & 37.4 & 26.2 & 17.3 & 11.2 & \multirow{2}{*}{\shortstack{107.7 \\ (11.1)}} \\ 
        & LA               & 47.5 & 34.7 & 24.0 & 15.9 &                                               & 38.0 & 26.5 & 17.7 & 11.3 &  \\ 
        \cmidrule(lr){2-12}
        & LOT + CR         & 48.8 & 34.8 & 23.6 & 15.8 & \multirow{2}{*}{\shortstack{52.7 \\ (1.4)}}   & 38.7 & 26.8 & 17.4 & 11.3 & \multirow{2}{*}{\shortstack{291.5 \\ (7.3)}} \\
        & LA               & 49.1 & 35.5 & 24.4 & 16.3 &                                               & 40.2 & 27.9 & 18.7 & 11.8 &  \\
        \cmidrule(lr){2-12}
        & \textbf{\name} + CR & 48.4 & 34.7 & 23.6 & 15.4 & \multirow{2}{*}{\shortstack{17.3 \\ (0.9)}}   & 38.5 & 27.1 & 17.8 & 11.7 & \multirow{2}{*}{\shortstack{98.6 \\ (4.6)}}  \\
        & LA               & 48.6 & 35.4 & 24.5 & 16.1 &                                               & 39.4 & 28.1 & 18.2 & 11.6 &  \\
        \midrule
        \multirow{6}{*}{\shortstack{48}} 
        & SOC + CR         & 48.4 & 34.9 & 23.7 & 15.9 & \multirow{2}{*}{\shortstack{35.4 \\ (8.7)}}  & 38.2 & 26.6 & 17.3 & 11.0 & \multirow{2}{*}{\shortstack{199.3 \\ (20.3)}} \\
        & LA               & 48.2 & 34.9 & 24.4 & 16.2 &                                              & 38.9 & 27.1 & 17.6 & 11.2 &  \\
        \cmidrule(lr){2-12}
        & LOT + CR         & 49.3 & 35.3 & 24.2 & 16.3 & \multirow{2}{*}{\shortstack{143.0 \\ (3.0)}} & -    & -    & -    & -    & \multirow{2}{*}{-} \\
        & LA               & 49.4 & 35.8 & 24.8 & 16.3 &                                              & -    & -    & -    & -    &  \\
        \cmidrule(lr){2-12}
        & \textbf{\name} + CR & 49.4 & 35.7 & 24.5 & 16.3 & \multirow{2}{*}{\shortstack{35.2 \\ (1.1)}}  & 38.9 & 27.2 & 18.0 & 11.6 & \multirow{2}{*}{\shortstack{196.9 \\ (4.8)}} \\
        & LA               & 49.4 & 36.2 & 24.9 & 16.7 &                                              & 40.0 & 28.1 & 18.9 & 12.3 &  \\
        \midrule
        \multirow{6}{*}{\shortstack{64}} 
        & SOC + CR         & 48.4 & 34.8 & 24.1 & 16.0 & \multirow{2}{*}{\shortstack{53.1 \\ (12.4)}} & 38.6 & 26.9 & 17.3 & 11.0 & \multirow{2}{*}{\shortstack{305.1 \\ (32.5)}} \\
        & LA               & 48.5 & 35.5 & 24.4 & 16.3 &                                              & 39.3 & 27.3 & 17.6 & 11.2 &  \\
        \cmidrule(lr){2-12}
        & LOT + CR         & 49.4 & 35.4 & 24.4 & 16.3 & \multirow{2}{*}{\shortstack{301.8 \\ (5.8)}} & -    & -    & -    & -    & \multirow{2}{*}{-} \\ 
        & LA               & 49.6 & 36.1 & 24.7 & 16.2 &                                              & -    & -    & -    & -    &  \\
        \cmidrule(lr){2-12}
        & \textbf{\name} + CR & 49.7 & 35.6 & 24.5 & 16.4 & \multirow{2}{*}{\shortstack{64.4 \\ (1.6)}} & 39.6 & 27.9 & 18.2 & 11.9 & \multirow{2}{*}{\shortstack{355.3 \\ (4.9)}} \\
        & LA               & 49.7 & 36.7 & 25.2 & 16.8 &                                              & 40.7 & 28.4 & 19.2 & 12.5 &  \\

        \bottomrule
    \end{tabular}
    \label{tab:ortho_bench}
\end{table}

\subsection{LA Loss Ablation Experiments} \label{sup:la_exp}

We verify the LA loss on LipConvNet constructed using \name, \lot, or \soc. Table~\ref{tab:ortho_bench} illustrates the improvement achieved by replacing the CE+CR loss, which is initially recommended for training LipConvNet. The results suggest that using the LA loss improves the performance of LipConvNet constructed with all orthogonal layers on both CIFAR-100 and~Tiny-ImageNet.

We also compare LA to CE on LipConvNet. Table~\ref{tab:datasetloss_lipconvnet} shows the results for LipConvNet constructed with \name. Our results show that the LA loss encourages a moderate margin without compromising clean accuracy. Notably, the LA loss is more effective on larger-scale datasets, suggesting that the LA loss effectively addresses the challenge of models with limited Rademacher complexity.

\begin{table}[t]
    \centering
     \caption{The experiments conducted with varying initial widths and model depths using the CIFAR-100 and Tiny-ImageNet datasets. The model employed is LipConvNet.}
     \vspace{0.3cm}
    \begin{tabular}{cccccccccc}
        \toprule
        \multirow{2}{*}{\bf\shortstack{Depth}} & \multirow{2}{*}{\bf\shortstack{Init. \\ Width}} & \multicolumn{4}{c}{\textbf{CIFAR-100}}& \multicolumn{4}{c}{\textbf{Tiny-ImageNet}}\\
        \cmidrule(lr){3-6}\cmidrule(lr){7-10}
        & & \bf Clean & $\mathbf{\frac{36}{255}}$ & $\mathbf{\frac{72}{255}}$ & $\mathbf{\frac{108}{255}}$ & \bf Clean & $\mathbf{\frac{36}{255}}$ & $\mathbf{\frac{72}{255}}$ & $\mathbf{\frac{108}{255}}$ \\  
        \midrule
        \multirow{3}{*}{\shortstack{5}} 
        & 32 & 49.04 & 35.06 & 24.19 & 16.06  & 39.28 & 27.47 & 18.23 & 11.47 \\ 
        & 48 & 49.60 & 35.80 & 24.63 & 16.20  & 40.12 & 27.79 & 18.36 & 11.92 \\ 
        & 64 & 49.97 & 36.21 & 24.92 & 16.45  & 40.82 & 28.26 & 18.76 & 12.31 \\ 
        \midrule
        \multirow{3}{*}{\shortstack{10}} 
        & 32 & 48.62 & 35.36 & 24.48 & 16.11 & 39.37 & 28.06 & 18.16 & 11.58 \\ 
        & 48 & 49.39 & 36.19 & 24.86 & 16.68 & 39.98 & 28.12 & 18.86 & 12.27 \\ 
        & 64 & 49.74 & 36.70 & 25.24 & 16.80 & 40.66 & 28.36 & 19.24 & 12.48 \\ 
        \midrule
        \multirow{3}{*}{\shortstack{15}} 
        & 32 & 48.59 & 35.51 & 24.42 & 16.28 & 39.20 & 27.66 & 18.08 & 11.84 \\ 
        & 48 & 49.37 & 36.50 & 24.93 & 16.81 & 39.87 & 27.96 & 18.49 & 12.11 \\ 
        & 64 & 49.91 & 36.57 & 25.26 & 16.81 & 40.38 & 28.73 & 18.78 & 12.52 \\ 
        \midrule
        \multirow{3}{*}{\shortstack{20}} 
        & 32 & 48.62 & 35.68 & 24.66 & 16.57 & 38.74 & 27.23 & 17.75 & 11.67 \\ 
        & 48 & 49.26 & 36.09 & 24.91 & 16.62 & 39.63 & 27.88 & 18.49 & 12.07 \\ 
        & 64 & 49.60 & 36.47 & 25.24 & 17.09 & 39.77 & 28.03 & 18.53 & 12.17 \\
    \bottomrule
    \end{tabular}
    \label{tab:lipconvnet_ablation_cifar100}
\end{table}

\subsection{LA Loss Hyper-parameters Experiments} \label{sup:explalosshyper}

There are three tunable parameters in LA loss: temperature $T$, offset $\xi$, and annealing factor $\beta$.
The first two parameters control the trade-off between accuracy and robustness, while the last one determines the strength of the annealing mechanism.
For the temperature and offset, we slightly adjust the values used in \citet{prach2022almost} to find a better trade-off position, given the differences between their network settings and ours. Specifically, we evaluated the temperature $T \in \{0.25, 0.5, 0,75, 1.0\}$, the offset $\xi \in \{0.5, 1.0, 1.5, 2.0, 2.5, 3.0\}$, and the annealing factor $\beta \in \{1,3,5,7\}$ with LipConvNet-10-32 on CIFAR-100.
Based on the evaluation, all other LA experiments are set with $T=0.75, \xi=2.0$, and $\beta=5.0$. For further LipConvNet experiments, the offset is set to $\xi=2\sqrt{2}$ due to an oversight in the implementation.
As we have found these hyper-parameters to work well, we did not further finetune them for each architecture and datasets to save computational cost. One might consider further refine the hyper-parameters for better performance.
Additionally, we present the results of LA loss with different $\beta$ values for LipConvNet-10 on CIFAR-100 in Table~\ref{tab:loss_hyperparameter}.

\subsection{LipConvNet Ablation Experiments} \label{sup:lipconvnetabl}

More detailed comparison stem from Table~\ref{tab:ortho_bench_lite} are provided in Table~\ref{tab:ortho_bench}, demonstrating the efficacy of LA loss across different model architectures and orthogonal layers.
Following the same configuration as in Table~\ref{tab:ortho_bench_lite}, we further investigate the construction of LipConvNet by conducting experiments with varing initial channels and model depths, as detailed in Table~\ref{tab:lipconvnet_ablation_cifar100}.

\subsection{Instability of LOT Parameterization} \label{sup:lot_non_ortho}
During the construction of the \lot layer, we empirically observed that replacing the identity initialization with the common Kaiming initialization for dimension-preserving layers causes the Newton method to converge to a non-orthogonal matrix.
We check orthogonality by computing the condition number of the parameterized matrix of \lot in the Fourier domain.
For an orthogonal layer, the condition number should be close to one. However, even after five times the iterations suggested by the authors, the result for \lot does not converge to one.
Figure~\ref{fig:lot_non_ortho} illustrates that, even with 50 iterations, the condition number of \lot does not converge to one.
The orange curve represents the case with Kaiming randomly initialized parameters,
while the blue curve curve corresponds to the case after a few training epochs. Both exhibit a significant gap compared to the ideal case, indicating that \lot may produce a non-orthogonal layer.
\clearpage

\section{Limitations}
\label{app:limitations}

While our proposed methods have demonstrated improvements across several metrics, the results for large perturbations, such as $\varepsilon = 108/255$, are less consistent.
Additionally, the proposed LA loss requires extra hyperparameter tuning.
In our experiments, the parameters were chosen based on LipConvNets trained on CIFAR-100 without diffusion-synthetic augmentation (Appendix~\ref{app:imp-LA-hyper-sel}),
which may not fully align with different models and datasets.
Furthermore, our methods are specifically designed for $\ell_2$ certified robustness, and certifying against attacks like $\ell_\infty$-norm introduces additional looseness.
Lastly, although BRO addresses some limitations of orthogonal layers, training certifiably robust models on large datasets remains computationally expensive.


\end{document}